\documentclass{article}

\usepackage{amsthm}

\newtheorem{theorem}{Theorem}
\newtheorem{lemma}{Lemma}
\newtheorem{proposition}{Proposition}
\newtheorem{corollary}{Corollary}
\newtheorem{example}{Example}
\newtheorem{remark}{Remark}

\newtheorem{assumption}{Assumption}

\usepackage[nonatbib,main,final]{neurips_2026}

\usepackage{enumitem}
\usepackage[utf8]{inputenc} % allow utf-8 input
\usepackage[T1]{fontenc}    % use 8-bit T1 fonts
\usepackage{url}            % simple URL typesetting
\usepackage{booktabs}       % professional-quality tables
\usepackage{amsfonts}       % blackboard math symbols
\usepackage{nicefrac}       % compact symbols for 1/2, etc.
\usepackage{microtype}      % microtypography
\usepackage{xcolor}

\usepackage[hidelinks]{hyperref}
\usepackage{breqn}
\usepackage{siunitx}

\usepackage{comment}
\usepackage{wrapfig}
\usepackage{epsfig}
\usepackage{graphicx}
\usepackage{amsmath}
\usepackage{amssymb}
\usepackage{xspace}
\usepackage{algorithm2e}
\usepackage{algpseudocode}
\algrenewcommand\algorithmiccomment[1]{\hfill$\triangleright$ #1}

\usepackage{tabularx}
\usepackage{multirow}
\usepackage{subcaption}
\usepackage{booktabs}
\usepackage{xspace}
\usepackage[capitalise]{cleveref}
\SetKwInput{Input}{Input}
\SetKwProg{kwServer}{ServerUpdate}{:}{}
\SetKwProg{kwClient}{ClientUpdate}{:}{}

\crefname{theorem}{Theorem}{Theorems}
\crefname{lemma}{Lemma}{Lemmas}
\crefname{proposition}{Proposition}{Propositions}
\crefname{corollary}{Corollary}{Corollaries}
\crefname{example}{Example}{Examples}
\crefname{remark}{Remark}{Remarks}
\crefname{definition}{Definition}{Definitions}
\crefname{assumption}{Assumption}{Assumptions}

\SetKwProg{kwServer}{\textcolor{blue}{Server Update}}{}{}
\SetKwProg{kwClient}{\textcolor{blue}{Client Update}}{}{}

\DeclareMathOperator*{\argmin}{arg\;min}
\usepackage{makecell}
\usepackage{bbm}
\usepackage{wrapfig}
\usepackage{graphicx}
\usepackage{ifthen}
\usepackage{xspace}
\usepackage{upgreek}
\usepackage{pifont}% http://ctan.org/pkg/pifont
\usepackage{colortbl}
\usepackage{cite}
\usepackage{multirow}
\usepackage{dsfont}
\usepackage{float}
\usepackage[numbers]{natbib} 

\usepackage{tikz}
\usepackage{pgfplots}
\pgfplotsset{compat=newest}
\pgfplotsset{every axis legend/.append style={cells={anchor=west}}}
\usetikzlibrary{arrows}
\tikzset{>=stealth'}
\definecolor{paperggplotspo}{RGB}{248, 118, 109}
\definecolor{paperggplotrspo}{RGB}{0, 191, 196}
\definecolor{paperggplotdelta}{RGB}{0, 0, 0}

\title{FedRSPO+: A Heterogeneity-aware Algorithm for Decision-focused Federated Learning}

\author{%
  Konstantinos Ziliaskopoulos \\
  Auburn University \\
  \texttt{kzz0034@auburn.edu} \\
  \And
  Alexander Vinel \\
  Auburn University \\
  \texttt{alexander.vinel@auburn.edu} \\
  \And
  Jiaqi Wang\thanks{Corresponding author.} \\
  Auburn University \\
  \texttt{jqwang@auburn.edu} \\
}

\begin{document}

\maketitle
\setcounter{footnote}{0}
\begin{abstract}
Decision-focused learning (DFL) trains predictive models for downstream optimization, but existing methods largely assume centralized data. In cross-silo settings, federated learning is a natural alternative, yet standard federated pipelines optimize prediction quality rather than decision quality and do not account for client heterogeneity in downstream objectives or feasible sets. Heterogeneity is especially problematic for DFL methods: under heterogeneous polyhedral decision problems, small perturbations can cause discontinuous changes in optimal decisions, leading to unstable client updates and aggregation. We propose \textbf{FedRSPO+}, a heterogeneity-aware framework for decision-focused federated learning. Our approach is built on \textbf{RSPO+}, a regularized predict-then-optimize surrogate that smooths the decision map through projection, enabling stable optimization even when clients face different objectives and constraints. We show that RSPO+ upper bounds downstream decision error and regret of the regularized decision and, under exact regularization and consistent LP solution selection, the original LP decision. We also demonstrate that FedRSPO+ has cross-client heterogeneity bounds that \textit{(i)} scale with both objective and feasible-set heterogeneity, \textit{(ii)} vanish at homogeneity, and \textit{(iii)} do not require strong convexity. We develop an annealed and modular training procedure that is compatible with standard federated personalization and aggregation methods. We evaluate FedRSPO+ on three experimental settings: controlled synthetic knapsack instances that isolate multiple axes of heterogeneity, a shortest path DFL benchmark, and an energy pricing case study based on real data. Across these experiments, we compare against prediction-only federated learning and DFL frameworks under varying heterogeneity and communication budgets. Together, our results suggest that smoothing is a useful ingredient for stable collaborative decision learning and provide a heterogeneity-aware foundation for federated DFL\footnote{The code for the paper is available at \url{https://github.com/kziliask/RSPO_Plus}.}.
\end{abstract}

\section{Introduction}
Decision-focused learning (DFL) has emerged as a common approach to tackle data-driven contextual optimization  \citep{mandi2024decision,qi2022integrating,sadana2025survey,mivsic2020data}. Among DFL methods, surrogate loss functions allow practitioners to take advantage of deep learning innovations in operational contexts, with the Smart Predict-then-Optimize (SPO) framework \citep{elmachtoub2022smart} being widely adopted. Its convex surrogate loss, SPO+, provides a tractable upper bound on the downstream decision loss and has seen broad use \citep{zhang2025real,lui2024gradient,wang2026spo,tian2023smart}. However, cross-silo applications of DFL face a key constraint: collaborative analytics are often prohibited due to regulatory, privacy, and competitive considerations. 

Federated learning (FL), which enables collaborative model training without centralizing data, offers a natural training procedure for addressing these constraints \citep{kairouz2021advances,chai2024survey,liu2024recent}. 
In cross-silo FL, a global model is trained across multiple organizations without centralizing data: each client keeps its local dataset and shares only model updates (or related statistics), which are aggregated to update a global model \citep{mcmahan2017communication}.
There exists an extensive FL literature on mitigating data, model, system, and task heterogeneity \citep{li2020federated,wang2023towards,li2021ditto,li2024unveiling,ye2023heterogeneous,fraboni2023general,wang2024bridging}.
\begin{wrapfigure}{r}{0.4\textwidth}
  \centering
  \vspace{-10pt}
  \includegraphics[
  width=\linewidth,
    trim={0 107 0 175},  % left bottom right top
    clip]{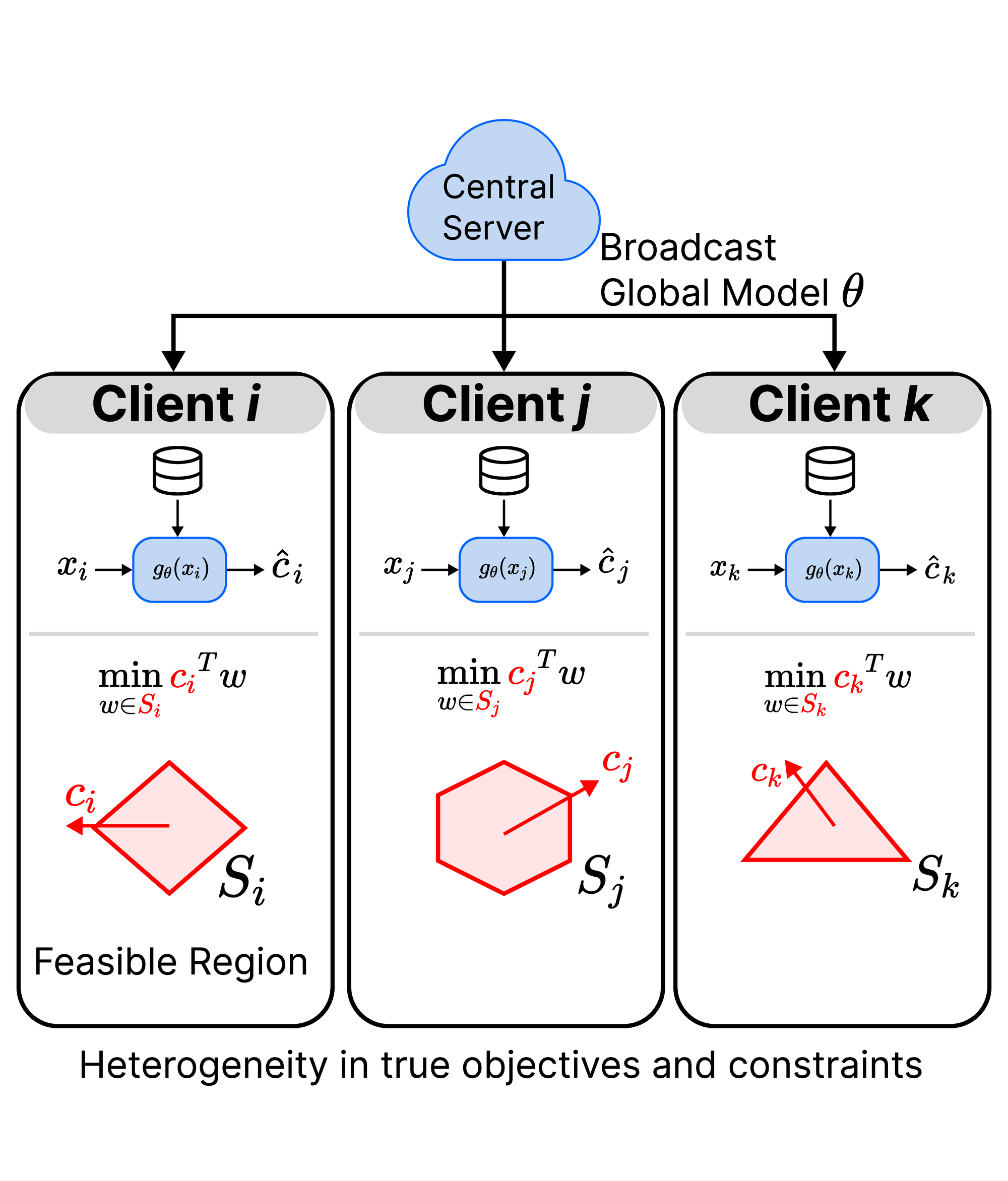}
  \caption{DFFL setting: a central server broadcasts a global model to heterogeneous clients, each with its own objective function and feasible region.}
  \label{fig:intro-dffl}
  \vspace{-4.8mm}
\end{wrapfigure}
In the DFL setting, though, when organizations collaboratively train DFL models while keeping data local, heterogeneity in client objectives and constraints creates a fundamental challenge: standard federated aggregation becomes brittle because the optimal solution map can be discontinuous across clients. 
Let client $i$ have covariates $x \sim \mathcal{X}_i$, a client-specific true objective map 
$c_i : \mathcal{X}_i \to \mathbb{R}^d$, and a shared federated predictor $g \in \mathcal{H}$ used in the downstream optimization model $\min_{w \in S_i} g(x)^T w$.
Here, the objective map $c_i(\cdot)$ and feasible region $S_i$ can differ across clients,  
which induces \emph{objective} and \emph{constraint} heterogeneity that direct naive implementation of a decision-focused federated learning (DFFL) does not address (\cref{fig:intro-dffl}). For example, SPO+, the most prominent DFL approach, relies on an optimal solution map that can be discontinuous when clients have different linear objectives over polyhedral feasible sets. 
Near boundaries where the optimal basis changes, arbitrarily small perturbations can switch the optimizer to a different extreme point, producing a $O(\mathrm{diam}(S_i))$ change in the optimal decision. 

We propose RSPO+, a smoothed surrogate that replaces the discontinuous optimal solution map with a projection-based decision, and FedRSPO+, a federated training framework built on RSPO+ with per-client calibration and co-annealed scheduling. Our contributions are summarized as follows:

\begin{itemize}[leftmargin=8pt, itemsep=2pt, topsep=2pt, parsep=0pt]
    \item We introduce \textbf{RSPO+} and \textbf{FedRSPO+}. RSPO+ is a regularized
    surrogate loss that smooths the decision map while retaining a provable connection
    to the original downstream regret. FedRSPO+ builds on RSPO+ to provide a modular
    federated algorithm compatible with standard FL personalization and aggregation
    methods \citep{t2020personalized,zhang2022personalized,li2020federated,li2021ditto}.
    \item We formally show why naive federation of SPO+ can fail and how FedRSPO+ avoids the same issues, developing point-wise cross-client loss bounds that degrade gracefully with heterogeneity, vanish at homogeneity, and require no strong convexity.
    \item We evaluate the approach on three tasks \citep{ho2022risk,pogancic2020Differentiation,pjm_lmp_2025}: synthetic knapsack, shortest path, and PJM energy pricing. We compare against FedAvg, naive SPO+ federation and perturbation methods across communication budgets and controlled heterogeneity, demonstrating strong improvements in convex resource-allocation settings and competitive performance under structural mismatch.
\end{itemize}
\section{Related Work}
\paragraph{Decision-Focused Learning and Contextual Optimization}
Decision-focused federated learning is largely unexplored. The closest prior work is \citep{ziliaskopoulos2026decisionfocusedfederatedlearningheterogeneous}, which analyzed federating SPO+ under client-level objective and feasible-set heterogeneity. Their results showed that naive SPO+-based federation can be brittle in heterogeneous polyhedral problems: optimizer instability induces point-wise heterogeneity bounds with an irreducible diameter term, while stronger geometric assumptions such as strong convexity yield tighter stability guarantees. They also proposed Interp-DFFL, which interpolates between local and federated models using client validation data, which is complementary to our method, and we compare against it in \cref{sec:experiments}. We build on this foundation by introducing a smoothed surrogate, RSPO+, and a modular federated training procedure that replaces discontinuous optimizer jumps with stable projection-based decisions, thereby removing the diameter penalty and yielding bounds that vanish at homogeneity.

More broadly, DFL is a sub-topic of the broader area of contextual optimization \citep{qi2022integrating,sadana2025survey}. SPO+ is among the most widely used surrogates in this literature, with strong theoretical support, including generalization bounds \citep{el2019generalization} and calibration guarantees \citep{liu2021risk,ho2022risk}. Other decision-aware surrogates include perturbation-based gradient losses \citep{gupta2024decision} and Fenchel--Young losses \citep{blondel2020learning}, while black-box and implicit differentiation methods differentiate through the solver directly rather than introducing an explicit surrogate \citep{pogancic2020Differentiation,sahoo2023backpropagation,blondel2022efficient,agrawal2019differentiable}. Recent work has also begun to study DFL across multiple downstream tasks and changing objectives, suggesting a growing interest in DFL beyond the single-problem setting, though federated DFL still remains underexplored \citep{capitaine2026online,tang2023multi,pmlr-v286-rodriguez-diaz25a}.

\paragraph{Federated Learning under Heterogeneity}
FL, in turn, includes extensive literature on heterogeneity, including proximal penalties \citep{li2020federated}, control variates \citep{karimireddy2020scaffold}, localized or partially personalized models \citep{li2021fedbn,wang2023knowledge,arivazhagan2019federated,wang2024pfedclub,collins2021exploiting,oh2022fedbabu}, and regularized bilevel formulations \citep{t2020personalized}. However, these methods are designed primarily for heterogeneity in data distributions, model heterogeneity, or predictive objectives, not for heterogeneity in downstream optimization structure. This distinction is especially important in DFL, where small perturbations can induce large changes in the optimal decision. Notably, \citep{t2020personalized} use Moreau envelopes as a personalization regularizer, whereas we use Moreau smoothing on the decision-focused surrogate itself to stabilize the decision map. More generally, Bregman divergences are widely used in machine learning and optimization as geometry-aware discrepancy measures, supporting methods such as mirror descent, clustering, and structured prediction \citep{banerjee2005clustering,beck2003mirror,raskutti2015information}. We leverage this to construct an adaptive surrogate aligned with each feasible set. Taken together, prior work addresses either decision-aware learning in centralized settings or heterogeneity-aware federated learning for predictive objectives, but not federated DFL with heterogeneous downstream feasible sets.
\section{Method}
\label{sec:method}
\begin{figure}
    \centering
    \includegraphics[width=0.85\linewidth,
    trim={120 200 102 170},  % left bottom right top
    clip
    ]{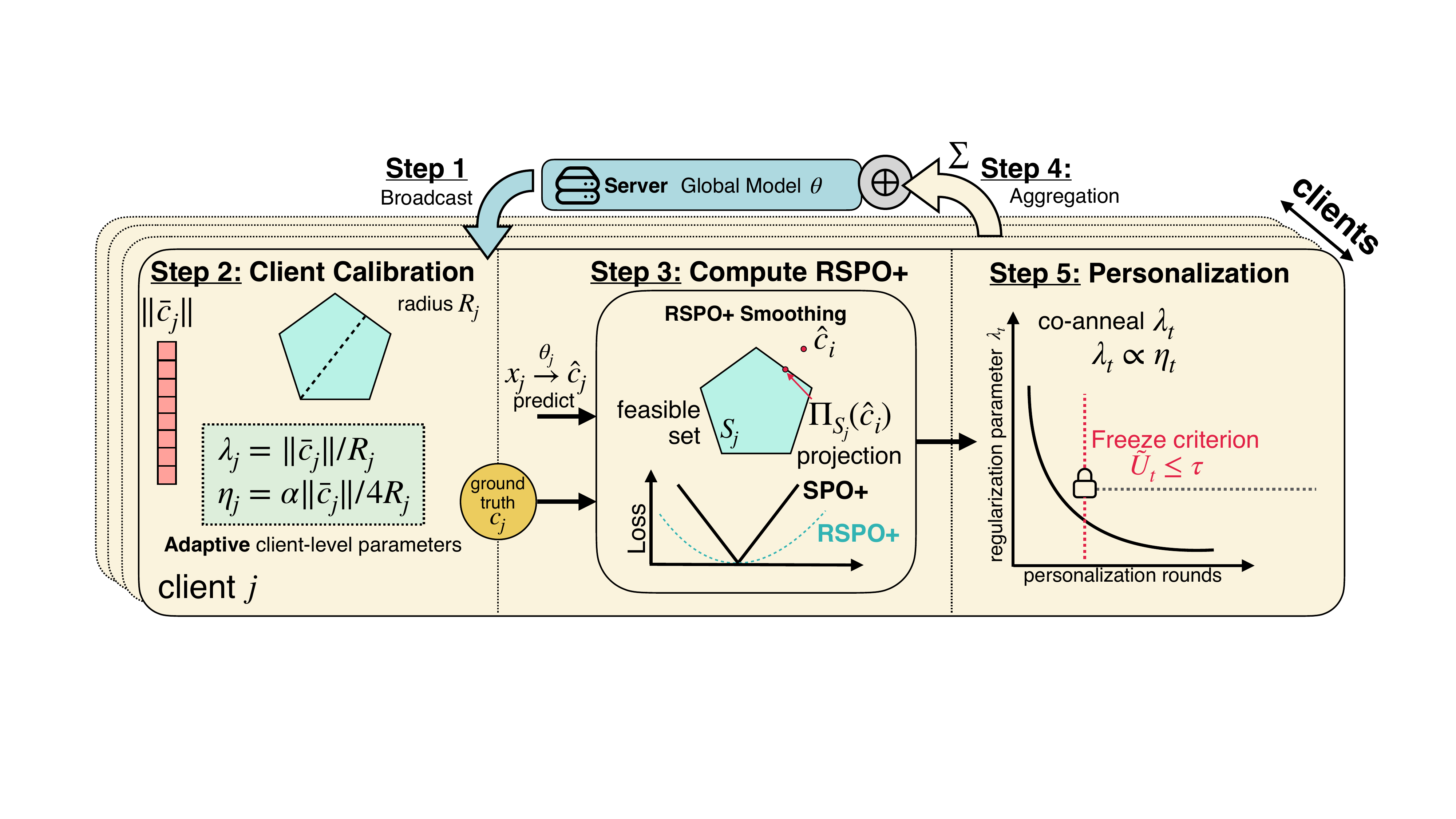}
    \caption{\textbf{FedRSPO+: A heterogeneity-aware federated decision-focused learning pipeline.} The server broadcasts a global model, clients perform personalized calibration, local prediction, and projection-based smoothing (RSPO+), and updates are aggregated. During personalization, co-annealed regularization enables stable training under heterogeneous objectives and feasible sets.}
    \label{fig:methodology}
    \vspace{-3mm}
\end{figure}
\begin{table}[t]
\caption{Overview of the four stages of FedRSPO+, along with the frequency and setting of each stage. We elaborate on each component throughout \cref{sec:method}.}
\label{tab:training-procedure}
\centering
\footnotesize
\begin{tabular}{llll}
\toprule
\textbf{Component} & \textbf{Execution} & \textbf{Overview} & \textbf{Communication} \\
\midrule
\textbf{(1) Calibration}
& client, once
& set $\lambda_j,\eta_j$ (Eq.~\eqref{eq:calibration})
& -- \\

\textbf{(2) Federated training}
& server, $R_{\mathrm{fed}}$ rounds
& SGD on $F_j(\theta;\lambda_j)$, FedAvg ($\propto n_j$)
& model weights $\theta$  \\

\textbf{(3) Personalization}
& client, $R_{\mathrm{per}}$ rounds
& SGD on $F_j(\theta;\lambda_{j,t})$, co-anneal $\lambda,\eta$
& -- \\

\textbf{(4) Freezing}
& client
& stop annealing when $\tilde U_{j,t}\le\tau$
  (Eq.~\eqref{eq:stopping-condition})
& -- \\
\bottomrule
\end{tabular}
\vspace{4pt}
\end{table}
% \vspace{-16pt}
\cref{fig:methodology} and \cref{tab:training-procedure} provide an overview of the four components of the FedRSPO+ pipeline.
We begin by defining the notation used throughout this paper, the tools we use to derive our methods, and by stating the assumptions for the client feasible sets and ground truth objectives, which are typical in DFL literature \citep{mulamba2021contrastive,amos2017optnet,elmachtoub2022smart,agrawal2019differentiable}. All proofs are given in the Appendix.

%\paragraph{Preliminaries and Definitions.}
We use $\| \cdot \| = \| \cdot \|_2$ throughout this study. Each feasible set $S$ has a diameter and radius,$ \ D = \sup_{w,w' \in S}\|w-w'\|, \  R = \sup_{w \in S} \|w\|.$
The downstream optimization problem is defined as $
z_{\mathrm{LP}} = \min_{w \in S} c^Tw$,
and the quadratically regularized linear program (QLP) as $
    z_{\mathrm{QLP}} = \min_{w \in S} c^Tw + \frac{\lambda}{2}\|w\|^2.
$
The support function $\xi$ of feasible set $S$ is defined as $
    \xi_S(u) = \sup_{v \in S} \langle u, v\rangle .$ Recall the generalized Bregman divergence is defined as $D^s_\phi(x,y) = \phi(x) - \phi(y) - s^T(x-y)$, where $s \in \partial \phi(y)$.
The Moreau envelope of a function with smoothing parameter $\lambda$ is defined as $
    M_{\lambda f} (v)= \inf_{u \in \mathbb{R}^d} \left\{ f(u) + \frac{1}{2\lambda}\|u - v\|^2 \right\}.
$
The SPO+ loss function \citep{elmachtoub2022smart} is defined as $
    \ell_{\mathrm{SPO+}}(c,\hat{c}) = \xi_S(c-2\hat{c}) - 2w^{\star}(c)(c-\hat{c}) + z^{\star}(c),
$
where $S$ is the feasible set of the downstream optimization problem, $\hat{c}= g(x)$ is the model output, $c$ the ground truth cost vector, $w^{\star} = \argmin_{w \in S} c^Tw $ and $z^{\star}$ the optimal objective function value.

\begin{assumption}[Bounded, non-empty feasible sets]
\label{assum:bounded-set}The feasible sets of interest are nonempty and compact, so $\xi_S(\cdot)$ is finite and maxima/minima
in the sets are attained.
\end{assumption}

\begin{assumption}[Convex feasible sets]\label{assum:convex} For each instance, the feasible sets of interest $S \subseteq \mathbb{R}^d$ are convex.
\end{assumption}
\begin{assumption}[Bounded, non-zero objective]\label{assum:bounded-objective}The objective function vector's norm $0<C_{\min}\leq \|c\| \leq C_{\max}$ is bounded above and strictly bounded away from zero, while the prediction vector's norm $\|\hat{c}\| \leq B$ is also bounded above.
\end{assumption}

\subsection{RSPO+}
\label{subsec:rspo_plus}

\begin{proposition}[SPO+ as a Bregman divergence]
\label{prop:spo-bregman}
The SPO+ loss admits the representation
$\ell_{\mathrm{SPO+}}(c,\hat{c}) = D^s_{\xi}(c-2\hat{c},-c), \text{where} \ s = w^\star(c) \in \partial \xi_S(-c)$ \citep{ziliaskopoulos2026decisionfocusedfederatedlearningheterogeneous}. 
\end{proposition}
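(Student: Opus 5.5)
We verify the identity directly by expanding the Bregman divergence of the support function $\xi=\xi_S$, which is convex and finite by \cref{assum:bounded-set}. There are two steps: show that $w^\star(c)$ is a valid subgradient of $\xi_S$ at $-c$, and then check that the resulting expression matches the SPO+ formula term by term.

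\textbf{Step 1 (subgradient).} By Danskin's theorem, or directly from the definition of the support function as a supremum of linear functions, the subdifferential is
\[
\partial\xi_S(u)=\argmax_{v\in S}\langle u,v\rangle .
\]
Taking $u=-c$ gives $\argmax_{v\in S}\langle -c,v\rangle=\argmin_{v\in S}c^Tv$, so $w^\star(c)\in\partial\xi_S(-c)$. For a self-contained argument we can check the subgradient inequality by hand. For any $u$,
\[
\xi_S(u)\ge \langle u,w^\star\rangle=\xi_S(-c)+\langle w^\star,u-(-c)\rangle,
\]
because $\xi_S(-c)=\langle -c,w^\star\rangle=-z^\star(c)$. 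This uses only that $w^\star\in S$ attains the maximum, which holds by compactness.

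\textbf{Step 2 (expansion).} Write $x=c-2\hat c$ and $y=-c$, so that $x-y=2c-2\hat c$. Then
\[
D^{s}_{\xi}(x,y)=\xi_S(c-2\hat c)-\xi_S(-c)-w^{\star T}(2c-2\hat c)
=\xi_S(c-2\hat c)+z^\star(c)-2w^\star(c)^T(c-\hat c).
\]
This is exactly $\ell_{\mathrm{SPO+}}(c,\hat c)$ as defined above, with the product $w^\star(c)(c-\hat c)$ read as an inner product.

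Nothing here is a real obstacle. The only subtle point is that $\partial\xi_S(-c)$ may contain more than one element when the LP has multiple optima. We handle this by using the generalized Bregman divergence with the specific subgradient $s=w^\star(c)$, whichever optimal solution SPO+ selects. The identity then holds for that choice, and the loss is nonnegative by the subgradient inequality from Step 1.
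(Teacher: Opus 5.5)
Your proof is correct and follows the same route as the paper: identify $w^\star(c)$ as a subgradient of $\xi_S$ at $-c$ using $z^\star(c)=-\xi_S(-c)$, then expand the Bregman divergence at $x=c-2\hat c$, $y=-c$ term by term. Your treatment of multiple optima, which fixes the subgradient $s=w^\star(c)$ in the generalized divergence, is slightly cleaner than the paper's proof, which writes $\nabla\xi_S(-c)$ under a uniqueness assumption.
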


Motivated by Proposition~\ref{prop:spo-bregman}, we define a smoothed variant by replacing $\xi_S$ with its $\lambda$-Moreau envelope, $
M_{\lambda\xi_S}(u) = \tfrac{1}{2\lambda}\|u\|^2 - \tfrac{\lambda}{2}\,d_S^2\!\Big(\tfrac{u}{\lambda}\Big)$,
where $d_S^2(u)=\inf_{s\in S}\|u-s\|^2$. We call the resulting loss \textbf{RSPO+}:
\[
\ell_{\mathrm{RSPO+}}(c,\hat{c};\lambda) := D_{M_{\lambda\xi_S}}(c-2\hat{c},-c).
\]

\begin{proposition}[Closed form of RSPO+]
\label{prop:rspo}
Let $u=(c-2\hat{c})/\lambda$, $v=-c/\lambda$, and $\Pi_S$ denote projection onto $S$. Then
\[
\boxed{\ell_{\mathrm{RSPO+}}(c,\hat{c};\lambda) = \tfrac{\lambda}{2}\Big(\|u-\Pi_S(v)\|^2 - \|u-\Pi_S(u)\|^2\Big).}
\]
\end{proposition}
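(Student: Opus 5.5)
The plan is to unfold the Bregman divergence $D_{M_{\lambda\xi_S}}(a,b)$ with $a=c-2\hat c=\lambda u$ and $b=-c=\lambda v$. We use the closed form $M_{\lambda\xi_S}(z)=\tfrac{1}{2\lambda}\|z\|^2-\tfrac{\lambda}{2}d_S^2(z/\lambda)$ stated just before the proposition. Because $M_{\lambda\xi_S}$ is differentiable (a Moreau envelope of a proper closed convex function), the generalized Bregman divergence reduces to the ordinary one, with $s=\nabla M_{\lambda\xi_S}(b)$. So we need three ingredients: $M_{\lambda\xi_S}(\lambda u)$, $M_{\lambda\xi_S}(\lambda v)$, and $\nabla M_{\lambda\xi_S}(\lambda v)$.

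The first two follow by substitution:
\[
M_{\lambda\xi_S}(\lambda u)=\tfrac{\lambda}{2}\big(\|u\|^2-\|u-\Pi_S(u)\|^2\big),\qquad
M_{\lambda\xi_S}(\lambda v)=\tfrac{\lambda}{2}\big(\|v\|^2-\|v-\Pi_S(v)\|^2\big).
\]
For the gradient we use the standard fact that, for closed convex $S$ (Assumptions~\ref{assum:bounded-set}--\ref{assum:convex}), $d_S^2$ is differentiable with $\nabla d_S^2(y)=2(y-\Pi_S(y))$. By the chain rule,
\[
\nabla M_{\lambda\xi_S}(z)=\tfrac{z}{\lambda}-\big(\tfrac{z}{\lambda}-\Pi_S(z/\lambda)\big)=\Pi_S(z/\lambda).
\]
As a cross-check, the Moreau decomposition gives $\mathrm{prox}_{\lambda\xi_S}(z)=z-\lambda\Pi_S(z/\lambda)$, since $\xi_S^*$ is the indicator of $S$. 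This yields the same gradient $(z-\mathrm{prox}_{\lambda\xi_S}(z))/\lambda$. In particular, the linear term in the divergence is $\Pi_S(v)^T(\lambda u-\lambda v)$.

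Finally, we expand and simplify. Write $p=\Pi_S(v)$. Then
\[
\tfrac{2}{\lambda}D=\|u\|^2-\|u-\Pi_S(u)\|^2-\|v\|^2+\|v-p\|^2-2p^T(u-v).
\]
Using $\|v-p\|^2-\|v\|^2=\|p\|^2-2v^Tp$, the $v^Tp$ terms cancel. What remains is $\|u\|^2-2p^Tu+\|p\|^2-\|u-\Pi_S(u)\|^2=\|u-p\|^2-\|u-\Pi_S(u)\|^2$, which is the boxed formula.

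The only non-mechanical step is justifying the gradient formula for the Moreau envelope, i.e.\ differentiability of $d_S^2$ with gradient $2(y-\Pi_S(y))$. I would either cite it as a standard property of projections onto closed convex sets, or prove it via the Moreau decomposition as above. If the closed form of $M_{\lambda\xi_S}$ itself needs justification, it follows by writing $\inf_u\{\xi_S(u)+\tfrac{1}{2\lambda}\|u-z\|^2\}$ in terms of its conjugate: $\xi_S^*=\iota_S$, and the envelope of $\iota_S$ is $\tfrac12 d_S^2$. Moreau's identity $M_{\lambda f}(z)+M_{\lambda^{-1}f^*}(z/\lambda)=\tfrac{1}{2\lambda}\|z\|^2$ then gives the formula.
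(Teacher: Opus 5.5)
Your proof is correct and follows essentially the same route as the paper. Both unfold the Bregman divergence of $M_{\lambda\xi_S}$ at $(c-2\hat c,\,-c)$ using the closed form $\tfrac{1}{2\lambda}\|z\|^2-\tfrac{\lambda}{2}d_S^2(z/\lambda)$ and the gradient $\nabla M_{\lambda\xi_S}(z)=\Pi_S(z/\lambda)$, then simplify algebraically; the only difference is in how these auxiliary facts are justified (the paper completes the square in $\sup_{s\in S}\{\langle z,s\rangle-\tfrac{\lambda}{2}\|s\|^2\}$ and applies Danskin's theorem, while you use $\nabla d_S^2(y)=2(y-\Pi_S(y))$ and Moreau's identity/decomposition).
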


\begin{remark}[Derivation of RSPO+]
    RSPO+ admits an equivalent derivation from decision-focused optimization perspective by applying the SPO+ construction to the QLP, resulting in  $\ell_{\mathrm{RSPO+}}$.
\end{remark}

\begin{corollary}[Gradient of Smoothed Loss]
\label{cor:rspo-gradient}
    From \cref{prop:rspo}, the gradient of RSPO+ is
\[
\boxed{\nabla_{\hat{c}} \ell_{\mathrm{RSPO+}}(c,\hat{c};\lambda) = -2\Big(\Pi_S(u)-\Pi_S(v)\Big).
}
\]
\end{corollary}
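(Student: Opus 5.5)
The plan is to differentiate the closed form of \cref{prop:rspo} directly, using the chain rule through the affine map $\hat{c}\mapsto u=(c-2\hat{c})/\lambda$. The key observation is that $v=-c/\lambda$, and hence $\Pi_S(v)$, does not depend on $\hat{c}$. So the first term $\tfrac{\lambda}{2}\|u-\Pi_S(v)\|^2$ is a plain quadratic in $u$, with gradient $\lambda\,(u-\Pi_S(v))$. The second term is $\tfrac{\lambda}{2}\|u-\Pi_S(u)\|^2=\tfrac{\lambda}{2}d_S^2(u)$, in which both the point and its projection move with $\hat{c}$.

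For this second term I will invoke the standard fact that, for a nonempty closed convex set $S$, the function $\tfrac12 d_S^2$ is continuously differentiable with $\nabla\big(\tfrac12 d_S^2\big)(u)=u-\Pi_S(u)$. Closedness and convexity of $S$ come from \cref{assum:bounded-set} and \cref{assum:convex}. If a self-contained justification is wanted, I will argue in one of two equivalent ways.
\begin{itemize}[leftmargin=12pt, itemsep=1pt, topsep=2pt]
\item Via Danskin's theorem: $\tfrac12 d_S^2(u)=\min_{s\in S}\tfrac12\|u-s\|^2$, the minimizer $\Pi_S(u)$ is unique, and so the envelope gradient is $u-\Pi_S(u)$.
\item Via the Moreau envelope: $\tfrac12 d_S^2=M_{\iota_S}$, whose gradient is $u-\mathrm{prox}_{\iota_S}(u)=u-\Pi_S(u)$.
\end{itemize}
Uniqueness of the projection, and hence differentiability, is exactly where convexity of $S$ is needed. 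I expect this to be the only non-routine step, and it is classical. In particular, I will not try to differentiate $\Pi_S$ itself, which may be nonsmooth; the envelope argument sidesteps that.

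Combining the two terms gives
\[
\nabla_u \ell_{\mathrm{RSPO+}} = \lambda\big[(u-\Pi_S(v)) - (u-\Pi_S(u))\big] = \lambda\big(\Pi_S(u)-\Pi_S(v)\big).
\]
Multiplying by the Jacobian $\partial u/\partial\hat{c}=-\tfrac{2}{\lambda}I$ then yields $\nabla_{\hat{c}}\ell_{\mathrm{RSPO+}}=-2\big(\Pi_S(u)-\Pi_S(v)\big)$, which is the claim.

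As a sanity check, I will verify the result against the Bregman form. Since $\nabla M_{\lambda\xi_S}(y)=\Pi_S(y/\lambda)$, differentiating $D_{M_{\lambda\xi_S}}(c-2\hat{c},-c)$ in its first argument gives
\[
-2\big(\nabla M_{\lambda\xi_S}(c-2\hat{c})-\nabla M_{\lambda\xi_S}(-c)\big),
\]
which is the same expression. This also shows, as a byproduct, that the gradient is $\tfrac{2}{\lambda}$-Lipschitz in $\hat{c}$, because $\Pi_S$ is nonexpansive.
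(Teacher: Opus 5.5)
Your proof is correct and is essentially the paper's. The paper differentiates the Bregman form $D_{F_{\lambda,S}}(c-2\hat c,-c)$ in its first argument using $\nabla F_{\lambda,S}(u)=\Pi_S(u/\lambda)$, obtained via Danskin's theorem in the proof of \cref{prop:rspo}; that is exactly your sanity check. Your primary route, differentiating the closed form via $\nabla\tfrac12 d_S^2(u)=u-\Pi_S(u)$, is equivalent because $F_{\lambda,S}(u)=\tfrac{1}{2\lambda}\|u\|^2-\tfrac{\lambda}{2}d_S^2(u/\lambda)$.

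There is one slip in your closing byproduct. Nonexpansiveness gives $\|\nabla\ell(\hat c_1)-\nabla\ell(\hat c_2)\|=2\|\Pi_S(u_1)-\Pi_S(u_2)\|\le 2\cdot\tfrac{2}{\lambda}\|\hat c_1-\hat c_2\|$. So the gradient is $\tfrac{4}{\lambda}$-Lipschitz, not $\tfrac{2}{\lambda}$-Lipschitz. This matches \cref{prop:characteristics}, and the constant is tight wherever $\Pi_S$ acts locally as the identity, since the Hessian there is $\tfrac{4}{\lambda}I$.
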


\begin{proposition}[Characteristics of RSPO+]
\label{prop:characteristics}
    Define $\ell_{\mathrm{RSPO+}}$ from \cref{prop:rspo}. It has the following characteristics:
    \begin{itemize}
        \item $\ell_{\mathrm{RSPO+}} \geq 0$ as a divergence.
        \item $\ell_{\mathrm{RSPO+}}$ is globally convex with respect to $\hat{c}$.
        \item $\ell_{\mathrm{RSPO+}}$ is 2D-Lipschitz continuous under compactness of $S$.
        \item $\ell_{\mathrm{RSPO+}}$ is $4/\lambda-$smooth, and therefore $\lambda/4$-cocoercive.
    \end{itemize}
\end{proposition}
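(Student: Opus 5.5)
I will work directly from the closed form in \cref{prop:rspo} and the gradient formula in \cref{cor:rspo-gradient}, writing $u=(c-2\hat c)/\lambda$ and $v=-c/\lambda$. Only $u$ depends on $\hat c$, through the affine map $\hat c\mapsto u$ with $\partial u/\partial \hat c=-\tfrac{2}{\lambda}I$. Each of the four properties then reduces to a standard fact about the Moreau envelope $M_{\lambda\xi_S}$ or about the projection $\Pi_S$.

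\textbf{Nonnegativity.} I will use the representation $\ell_{\mathrm{RSPO+}}=D_{M_{\lambda\xi_S}}(c-2\hat c,-c)$. Since $\xi_S$ is convex, finite and closed by \cref{assum:bounded-set,assum:convex}, its Moreau envelope is convex and differentiable. The Bregman divergence of a differentiable convex function is nonnegative by the first-order convexity inequality. As a direct check from the closed form, $\|u-\Pi_S(v)\|\ge \|u-\Pi_S(u)\|$ because $\Pi_S(v)\in S$ and $\Pi_S(u)$ is the nearest point of $S$ to $u$.

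\textbf{Convexity.} The loss equals $M_{\lambda\xi_S}(c-2\hat c)-M_{\lambda\xi_S}(-c)-\langle \nabla M_{\lambda\xi_S}(-c),\,2(c-\hat c)\rangle$. The first term is a convex function composed with an affine map of $\hat c$, and the other two terms are constant or affine in $\hat c$. So the loss is convex in $\hat c$.

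\textbf{Lipschitz continuity.} By \cref{cor:rspo-gradient}, the gradient is $-2(\Pi_S(u)-\Pi_S(v))$. Both projections lie in $S$, so the gradient norm is at most $2D$. A function whose gradient is bounded by $2D$ is $2D$-Lipschitz, by the mean value inequality.

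\textbf{Smoothness and cocoercivity.} The projection onto a closed convex set is firmly nonexpansive. Hence $\|\nabla\ell(\hat c_1)-\nabla\ell(\hat c_2)\|=2\|\Pi_S(u_1)-\Pi_S(u_2)\|\le 2\|u_1-u_2\|$. Since $u_1-u_2=-\tfrac{2}{\lambda}(\hat c_1-\hat c_2)$, this is at most $\tfrac{4}{\lambda}\|\hat c_1-\hat c_2\|$, which gives $4/\lambda$-smoothness. Cocoercivity with constant $\lambda/4$ then follows from the Baillon--Haddad theorem, since the loss is convex and $L$-smooth with $L=4/\lambda$. One can also see it directly: firm nonexpansiveness gives
\[
\langle \Pi_S u_1-\Pi_S u_2,\,u_1-u_2\rangle\ge\|\Pi_S u_1-\Pi_S u_2\|^2 ,
\]
and substituting the chain-rule scalings $-2$ and $-2/\lambda$ yields $\langle\nabla\ell_1-\nabla\ell_2,\hat c_1-\hat c_2\rangle\ge \tfrac{\lambda}{4}\|\nabla\ell_1-\nabla\ell_2\|^2$.

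\textbf{Main obstacle.} The only delicate point is justifying the differentiability used in \cref{cor:rspo-gradient}, namely $\nabla M_{\lambda\xi_S}(y)=\Pi_S(y/\lambda)$. This requires the Moreau identity $\mathrm{prox}_{\lambda\xi_S}(y)=y-\lambda\Pi_S(y/\lambda)$, which holds because $\xi_S$ is the conjugate of the indicator of $S$. Since the corollary is already stated earlier in the paper, I will invoke it rather than reprove it. Beyond that, the remaining work is tracking the constants $2$ and $2/\lambda$ from the chain rule correctly.
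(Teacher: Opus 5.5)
Your proposal is correct and follows essentially the same route as the paper: nonnegativity and convexity from the Bregman structure, the $2D$ bound from both projections lying in $S$, and firm nonexpansiveness of $\Pi_S$ for part (iv). The only difference is ordering. You obtain $4/\lambda$-smoothness from plain nonexpansiveness and then cocoercivity, which is the correct direction for invoking Baillon--Haddad. The paper instead derives $\lambda/4$-cocoercivity directly, exactly as in your displayed computation, and then reads off smoothness.
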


\begin{theorem}
\label{thm:decision-bound}
    Define $f(c,w) = c^Tw + \lambda/2 \|w\|^2$, $w^{\star} = \Pi_S(-c/\lambda)$ the optimal action, and $\hat{w} = \Pi_S(-\hat{c}/\lambda)$ the predicted action. The true regularized regret is $R(c,\hat{w}) = f(c,\hat{w}) - f(c,w^{\star}),$ therefore, 
\begin{equation}
\label{eq:rspo-regret-bound}
\ell_{\mathrm{RSPO+}} \geq R(c,\hat{w}) + \lambda\|\hat{w} - w^{\star}\|^2. 
\end{equation}
Furthermore, because $R(c,\hat{w}) \geq 0$, we also have,
\begin{equation}
\label{eq:rspo-decision-bound}
    \|\hat{w} - w^{\star}\| \leq \sqrt{\frac{\ell_{\mathrm{RSPO+}}}{\lambda}}.
\end{equation}
\end{theorem}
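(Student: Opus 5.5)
The plan is to use the variational (max-over-$S$) form of the closed expression in \cref{prop:rspo}. We then lower bound the maximum by plugging in the particular feasible point $\hat w=\Pi_S(-\hat c/\lambda)$, and close the gap with the projection optimality condition for $\hat w$. Throughout, write $u=(c-2\hat c)/\lambda$, $v=-c/\lambda$, and $a:=\Pi_S(v)=w^\star$. Since $f(c,w)=\tfrac{\lambda}{2}\|w+c/\lambda\|^2-\tfrac{1}{2\lambda}\|c\|^2$, $a$ is indeed the minimizer of $f(c,\cdot)$ over $S$ by Assumptions~\ref{assum:bounded-set}--\ref{assum:convex}.

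\textbf{Step 1 (variational form).} Since $\|u-\Pi_S(u)\|^2=\min_{w\in S}\|u-w\|^2$, \cref{prop:rspo} gives
\[
\ell_{\mathrm{RSPO+}}=\max_{w\in S}\tfrac{\lambda}{2}\big(\|u-a\|^2-\|u-w\|^2\big)\;\ge\;\tfrac{\lambda}{2}\big(\|u-a\|^2-\|u-\hat w\|^2\big),
\]
because $\hat w\in S$. Expanding the squares, the right-hand side equals $\tfrac{\lambda}{2}\|a\|^2-\tfrac{\lambda}{2}\|\hat w\|^2-(c-2\hat c)^T(a-\hat w)$.

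\textbf{Step 2 (compare with regret).} Next I subtract $R(c,\hat w)=c^T(\hat w-a)+\tfrac{\lambda}{2}(\|\hat w\|^2-\|a\|^2)$. The $c$-terms cancel, leaving
\[
\ell_{\mathrm{RSPO+}}-R(c,\hat w)\ \ge\ \lambda\|a\|^2-\lambda\|\hat w\|^2+2\hat c^T(a-\hat w).
\]
So \eqref{eq:rspo-regret-bound} reduces to showing that this quantity is at least $\lambda\|\hat w-a\|^2=\lambda\|\hat w\|^2-2\lambda\hat w^Ta+\lambda\|a\|^2$. After cancelling $\lambda\|a\|^2$ and dividing by $2$, the required inequality is
\[
(\hat c+\lambda\hat w)^T(a-\hat w)\ \ge\ 0 .
\]

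\textbf{Step 3 (projection inequality — the key step).} This is exactly the variational inequality characterizing $\hat w=\Pi_S(-\hat c/\lambda)$ on the convex set $S$: for all $w\in S$, $(-\hat c/\lambda-\hat w)^T(w-\hat w)\le 0$. Taking $w=a\in S$ and multiplying by $-\lambda<0$ gives the claim, which establishes \eqref{eq:rspo-regret-bound}. I expect the only delicate point to be the bookkeeping in Step 2. One must choose the comparison point $\hat w=\Pi_S(-\hat c/\lambda)$, not $\Pi_S(u)$, and track that the residual collapses to exactly the projection inner product.

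Finally, for \eqref{eq:rspo-decision-bound}, $R(c,\hat w)\ge 0$ holds because $a$ minimizes $f(c,\cdot)$ over $S$ and $\hat w\in S$. Dropping $R$ in \eqref{eq:rspo-regret-bound} gives $\lambda\|\hat w-w^\star\|^2\le\ell_{\mathrm{RSPO+}}$, and taking square roots yields the bound.
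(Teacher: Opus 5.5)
Your proof is correct and follows the paper's argument: both lower-bound $d_S^2(u)$ by $\|u-\hat w\|^2$ using feasibility of $\hat w$, then close the remaining gap via optimality of $\hat w$ for $f(\hat c,\cdot)$ over $S$. The only cosmetic difference is that the paper phrases the last step as the $\lambda$-strong-convexity growth bound $f(\hat c,w^\star)-f(\hat c,\hat w)\ge\frac{\lambda}{2}\|w^\star-\hat w\|^2$; since $f(\hat c,\cdot)$ is quadratic, this is exactly equivalent to your projection variational inequality $(\hat c+\lambda\hat w)^T(w^\star-\hat w)\ge 0$.
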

\begin{corollary}[Unregularized Regret Bound for RSPO+]
\label{cor:unreg-bound}
Let $z^\star = \min_{w \in S} c^T w$ be the unregularized objective, and $w^\star_{\mathrm{LP}}$ be the minimizer. For any prediction $\hat{c}$ and regularization parameter $\lambda$, the true unregularized regret $R_{\mathrm{unreg}}(c, \hat{w}) = c^T \hat{w} - z^\star$ of the predicted decision $\hat{w} = \Pi_S(-\hat{c}/\lambda)$ satisfies:
\begin{equation}
    \label{eq:unreg-worst-case}
    R_{\mathrm{unreg}}(c, \hat{w}) \leq \ell_{\mathrm{RSPO+}}(c, \hat{c}; \lambda) + \frac{\lambda}{2}R^2.
\end{equation}
\end{corollary}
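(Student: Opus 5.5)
The plan is to split the unregularized regret through the regularized objective $f(c,w) = c^T w + \frac{\lambda}{2}\|w\|^2$ from \cref{thm:decision-bound}, then bound each piece with \cref{thm:decision-bound} or with the radius $R$.

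\textbf{Decomposition.} Let $w^\star = \Pi_S(-c/\lambda)$ be the QLP minimizer. Write
\[
c^T\hat{w} - z^\star = \bigl(c^T\hat{w} - c^T w^\star\bigr) + \bigl(c^T w^\star - z^\star\bigr).
\]
Call the first bracket the \emph{estimation term} and the second the \emph{regularization bias}.

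\textbf{Estimation term.} By definition of $f$,
\[
c^T\hat{w} - c^T w^\star = R(c,\hat{w}) - \frac{\lambda}{2}\bigl(\|\hat{w}\|^2 - \|w^\star\|^2\bigr).
\]
Applying \eqref{eq:rspo-regret-bound} gives $R(c,\hat{w}) \le \ell_{\mathrm{RSPO+}} - \lambda\|\hat{w}-w^\star\|^2 \le \ell_{\mathrm{RSPO+}}$. The remaining norm-difference term would need care, since $\|w^\star\|^2 - \|\hat{w}\|^2$ can be positive.

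\textbf{Regularization bias.} Since $w^\star$ minimizes $f(c,\cdot)$ over $S$ and $w^\star_{\mathrm{LP}}\in S$,
\[
c^T w^\star + \frac{\lambda}{2}\|w^\star\|^2 \le z^\star + \frac{\lambda}{2}\|w^\star_{\mathrm{LP}}\|^2,
\]
so $c^T w^\star - z^\star \le \frac{\lambda}{2}\bigl(\|w^\star_{\mathrm{LP}}\|^2 - \|w^\star\|^2\bigr)$.

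\textbf{Combining.} The $\|w^\star\|^2$ terms cancel, leaving
\[
R_{\mathrm{unreg}} \le R(c,\hat{w}) + \frac{\lambda}{2}\bigl(\|w^\star_{\mathrm{LP}}\|^2 - \|\hat{w}\|^2\bigr) \le \ell_{\mathrm{RSPO+}} + \frac{\lambda}{2}R^2 .
\]
The last inequality uses $\|w^\star_{\mathrm{LP}}\|\le R$ and $-\|\hat{w}\|^2 \le 0$. So the norm-difference term from the estimation step is absorbed, and the two-piece split needs no separate bound on it.

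\textbf{Shortcut.} The same bound follows in one chain: $c^T\hat{w} - z^\star \le f(c,\hat{w}) - z^\star = R(c,\hat{w}) + \bigl(f(c,w^\star) - z^\star\bigr)$, and $f(c,w^\star) \le f(c,w^\star_{\mathrm{LP}}) = z^\star + \frac{\lambda}{2}\|w^\star_{\mathrm{LP}}\|^2 \le z^\star + \frac{\lambda}{2}R^2$. I would write it this way.

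\textbf{Main obstacle.} The only delicate point is the sign of the $\|w\|^2$ terms, i.e.\ making sure the bias is charged against $\|w^\star_{\mathrm{LP}}\|^2 \le R^2$ rather than a diameter. Dropping $\lambda\|\hat{w}-w^\star\|^2 \ge 0$ from \eqref{eq:rspo-regret-bound} is harmless.
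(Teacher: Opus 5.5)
Your proof is correct and follows the paper's route. The paper likewise rewrites $c^T\hat w$ through $R(c,\hat w)$, uses $f(c,w^\star)\le z^\star+\frac{\lambda}{2}\|w^\star_{\mathrm{LP}}\|^2$, and arrives at the same intermediate inequality $R_{\mathrm{unreg}}\le R(c,\hat w)+\frac{\lambda}{2}(\|w^\star_{\mathrm{LP}}\|^2-\|\hat w\|^2)$ before invoking \cref{thm:decision-bound}; it differs only in keeping the $-\lambda\|\hat w-w^\star\|^2$ term and maximizing the resulting concave quadratic to get the slightly sharper $\ell_{\mathrm{RSPO+}}+\frac{\lambda}{2}\|w^\star_{\mathrm{LP}}\|^2-\frac{\lambda}{3}\|w^\star\|^2$ before relaxing, whereas you drop the nonnegative terms directly, which suffices for the stated bound.
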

\subsection{FedRSPO+}
\cref{subsec:rspo_plus} established bounds on the \emph{regularized} regret and decision error, with \cref{cor:unreg-bound} connecting the RSPO+ loss to the unregularized regret of the regularized decision $\hat{w}_\lambda = \Pi_S(-\hat{c}/\lambda).$
In our experiments, however, evaluation is performed using the original downstream decision rule,
$\hat{w}_{\mathrm{LP}}(\hat{c}) \in \arg\min_{w\in S} \hat{c}^{\top}w,$
rather than the regularized decision. Thus, \cref{cor:unreg-bound} should be interpreted as a direct guarantee for the regularized decision and as a surrogate-level guarantee for the evaluated LP decision. For polyhedral feasible sets, this distinction disappears once $\lambda$ is sufficiently small under a problem-dependent exact-regularization threshold: the quadratically regularized solution is then also LP-optimal, with the regularizer selecting a particular LP optimum \citep{friedlander2008exact,gonzalez2025quantitative}. Under this condition, the RSPO+ bound also applies to the LP decision used at evaluation. In practice, our freeze rule in \cref{eq:stopping-condition} stops annealing before $\lambda$ reaches this threshold, and we quantify this residual gap in \cref{appsubsec:qp-lp} in the Appendix.

More generally, as $\lambda\rightarrow 0$, the bias term $\frac{\lambda}{2}R^2$ in \cref{eq:unreg-worst-case} vanishes, motivating an annealing schedule that gradually transitions from a smooth, stable surrogate toward the original downstream LP objective. In this section, we build on this connection by introducing a regularization parameter scheduler, which we refer to as an annealing schedule. Finally, we derive bounds on cross-client loss discrepancy under optimization heterogeneity and compare them with bounds on SPO+ loss to show why FedRSPO+ can mitigate discontinuities from vertex jumping.
\paragraph{Annealing Schedule.} Corollary 2 shows that RSPO+ controls the original unregularized regret up to an $O(\lambda R^2)$ bias. Thus, $\lambda$ induces a bias-stability tradeoff:
larger $\lambda$ smooths the decision map and stabilizes optimization, while smaller $\lambda$ better aligns the surrogate with the original LP objective. This motivates annealing $\lambda$ during local training. Specifically, we gradually decrease $\lambda$ so that the regularized decision tracks the original downstream optimum (i.e., as a homotopy continuation from the regularized to the non-regularized objective). We justify this schedule by analyzing how the RSPO+ loss and gradient change under perturbations of $\lambda$ in \cref{prop:lambda-bound}.

\begin{proposition}[RSPO+ Loss and Gradient under perturbation of $\lambda$]
\label{prop:lambda-bound}
For $\lambda_1 > \lambda_2 > 0$ regularization parameters of RSPO+ loss, we can bound the difference of loss under parameter shift $\lambda_1 \rightarrow \lambda_2$ as,
\begin{equation}
\label{eq:rspo-loss-lambda-bound}
    |\ell_{\mathrm{RSPO+}}(\lambda_1) - \ell_{\mathrm{RSPO+}}(\lambda_2)| \leq R^2|\lambda_1 - \lambda_2| + 2\|c\| \|c-\hat{c}\| \left|\frac{1}{\lambda_1} - \frac{1}{\lambda_2}\right|,
\end{equation}
 and difference in RSPO+ gradients as,
 \begin{equation}
 \label{eq:rspo-grad-lambda-bound}
     \|\nabla \ell_{\mathrm{RSPO+}}(\lambda_1) - \nabla \ell_{\mathrm{RSPO+}}(\lambda_2)\| \leq 2\Big(\|c-2\hat{c}\| + \|c\|\Big) \left|\frac{1}{\lambda_1} - \frac{1}{\lambda_2}\right|.
 \end{equation}
\end{proposition}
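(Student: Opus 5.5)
The plan is to rewrite RSPO+ in a variational form in which $\lambda$ enters only through simple terms, and then bound each term separately using nonexpansiveness of $\Pi_S$. For $y\in\mathbb{R}^d$ define
\[
h_\lambda(y) := \sup_{w\in S}\Big\{ y^\top w - \tfrac{\lambda}{2}\|w\|^2\Big\}.
\]
This is exactly the $\lambda$-Moreau envelope $M_{\lambda\xi_S}$ from \cref{subsec:rspo_plus}: completing the square gives $\tfrac{1}{2\lambda}\|y\|^2 - \tfrac{\lambda}{2}d_S^2(y/\lambda)$. Its maximizer is $\Pi_S(y/\lambda)$, and $\nabla h_\lambda(y)=\Pi_S(y/\lambda)$. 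Writing out the Bregman divergence defining $\ell_{\mathrm{RSPO+}}$ at the points $c-2\hat c$ and $-c$, with $w_\lambda := \Pi_S(-c/\lambda)$, we get
\[
\ell_{\mathrm{RSPO+}}(\lambda) = h_\lambda(c-2\hat c) - h_\lambda(-c) - 2\, w_\lambda^\top (c-\hat c).
\]
This agrees with the closed form in \cref{prop:rspo}, and I would verify that as a sanity check.

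\textbf{Loss bound.} I would take the difference between $\lambda_1$ and $\lambda_2$ and split it into three pieces.
\begin{itemize}
\item \emph{The two $h$ terms.} For any fixed $y$, the objectives defining $h_{\lambda_1}(y)$ and $h_{\lambda_2}(y)$ differ pointwise in $w$ by $\tfrac{|\lambda_1-\lambda_2|}{2}\|w\|^2 \le \tfrac{R^2}{2}|\lambda_1-\lambda_2|$. Because $|\sup f - \sup g| \le \sup|f-g|$, it follows that $|h_{\lambda_1}(y)-h_{\lambda_2}(y)|\le \tfrac{R^2}{2}|\lambda_1-\lambda_2|$. Applying this at $y=c-2\hat c$ and at $y=-c$ and summing yields the $R^2|\lambda_1-\lambda_2|$ term.
\item \emph{The linear term.} Cauchy--Schwarz gives $2|(c-\hat c)^\top(w_{\lambda_1}-w_{\lambda_2})| \le 2\|c-\hat c\|\,\|w_{\lambda_1}-w_{\lambda_2}\|$. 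Nonexpansiveness of $\Pi_S$ on the convex compact set $S$ (\cref{assum:bounded-set,assum:convex}) then gives $\|w_{\lambda_1}-w_{\lambda_2}\|\le \|c\|\,|1/\lambda_1-1/\lambda_2|$.
\end{itemize}
Adding the pieces gives \eqref{eq:rspo-loss-lambda-bound}.

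\textbf{Gradient bound.} By \cref{cor:rspo-gradient},
\[
\nabla\ell(\lambda_1)-\nabla\ell(\lambda_2) = -2\big(\Pi_S(u_1)-\Pi_S(u_2)\big) + 2\big(\Pi_S(v_1)-\Pi_S(v_2)\big),
\]
with $u_i=(c-2\hat c)/\lambda_i$ and $v_i=-c/\lambda_i$. The triangle inequality and nonexpansiveness give $\|\Pi_S(u_1)-\Pi_S(u_2)\|\le\|c-2\hat c\|\,|1/\lambda_1-1/\lambda_2|$ and $\|\Pi_S(v_1)-\Pi_S(v_2)\|\le \|c\|\,|1/\lambda_1-1/\lambda_2|$. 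This is exactly \eqref{eq:rspo-grad-lambda-bound}.

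The only step needing real care is the first one: establishing the variational representation and identifying $\nabla h_\lambda(-c)=w_\lambda$. Once that is in place, everything else is routine. Working with the sup form is what lets me avoid differentiating in $\lambda$; I would use the elementary $|\sup f-\sup g|\le\sup|f-g|$ bound rather than an envelope-theorem argument.
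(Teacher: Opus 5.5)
Your proposal is correct and matches the paper's proof essentially step for step. The paper uses the same decomposition of the Bregman form into two sup-form envelope terms, each bounded by $\tfrac{R^2}{2}|\lambda_1-\lambda_2|$, plus the linearization term, which it controls by Cauchy--Schwarz and nonexpansiveness of $\Pi_S$; it then proves the gradient bound by the same triangle-inequality and nonexpansiveness argument. Your explicit use of $|\sup f-\sup g|\le\sup|f-g|$ is a cleaner justification of the envelope step than the paper's, which places the absolute values inside the supremum awkwardly.
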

\begin{remark}[Inverse time scheduling]
\label{rem:scheduling}
\cref{prop:lambda-bound} shows that RSPO+ is stable to changes in $\lambda$ primarily through the reciprocal $\frac{1}{\lambda}$. This motivates parameterizing the schedule linearly in $1/\lambda_t$, i.e. $\lambda_t = \lambda_0 / (1 + \kappa t)$ which yields a constant per-step reciprocal increment $|\frac{1}{\lambda_{t+1}} - \frac{1}{\lambda_{t}}| = \kappa /\lambda_0$ and therefore uniformly controls both its loss and gradient drift.
\end{remark}

\textbf{Stopping condition.}
As reducing $\lambda$ weakens the smoothness properties of RSPO+ in \cref{prop:characteristics}, we devise a stopping condition for the $\lambda$-schedule in FedRSPO+ that depends on the regularization bias $c_i^T w_{i,t}^{\star,\mathrm{reg}} - z_i^\star$.

For a subset of the client dataset $\mathcal{V}_c$, at local step $t$ with $\lambda_t$, define
$\hat{w}_{i,t} = \Pi_{\mathcal{S}_i}(-\hat{c}_{i,t}/\lambda_t)$ and
$w_{i,t}^{\star,\mathrm{reg}} = \Pi_{\mathcal{S}_i}(-c_i/\lambda_t)$.
Let $z_i^\star = \min_{w \in \mathcal{S}_i} c_i^T w$ (pre-solved once on $\mathcal{V}_c$). 
We stop annealing when the normalized average regularization bias over $\mathcal{V}_c$ falls below a tolerance $\tau$:
\begin{equation}
\label{eq:stopping-condition}
\tilde{U}_t
= \frac{\frac{1}{|\mathcal{V}_c|} \sum_{i \in \mathcal{V}_c}
\left(c_i^T w_{i,t}^{\star,\mathrm{reg}} - z_i^\star\right)}
{\frac{1}{|\mathcal{V}_c|} \sum_{i \in \mathcal{V}_c} |z_i^\star| + \varepsilon}
\leq \tau.
\end{equation}
Intuitively, this criterion freezes $\lambda_t$ once the surrogate regularized target is sufficiently aligned with the true downstream LP objective. Thus, $\tilde{U}_t$ empirically tracks the QP--LP regularization bias discussed above and gives a data-dependent stopping rule. The full FedRSPO+ algorithm is provided in pseudo-code in \cref{alg:fedrspoplus} in the Appendix.

\begin{remark}[MSE--DFL homotopy]
\label{rem:mse-spo}
When $0 \in \operatorname{int}(S)$, there exists $\bar\lambda$ such that for all
$\lambda \ge \bar\lambda$, the projections in Corollary 1 act locally as the
identity, and $\nabla_{\hat c}\ell_{\rm RSPO+}$ reduces to a scaled MSE gradient.
As $\lambda$ decreases, the projection becomes active and the surrogate becomes
increasingly decision-focused. Together with \cref{cor:unreg-bound}, this gives a continuous
bias-stability path from prediction-focused learning toward the original LP
objective.
\end{remark}

\begin{theorem}[FedRSPO+ Convergence Rate with linear predictors]
\label{thm:fedrspo-convergence-short}
Under standard regularity assumptions, the weighted average iterate
$\hat{\Theta}_T$ of FedRSPO+ satisfies
\[
\mathbb{E}\!\left[F_T(\hat{\Theta}_T)-F_T(\Theta^\star)\right]
=
O\!\left(
\frac{1}{\sum_{t=0}^{T-1}\eta_t}
+
\frac{\sum_{t=0}^{T-1}\eta_t^2}{\sum_{t=0}^{T-1}\eta_t}
+
\frac{\sum_{t=0}^{T-1}\eta_t\Delta_t}{\sum_{t=0}^{T-1}\eta_t}
\right),
\]
with an $E^2$-dependent constant from local updates. For a fixed-$\lambda$ objective and $\eta_t\asymp 1/t$, the bound reduces to
$O(1/\log T)$. For the actual annealed objective, the additional tracking term appears, which vanishes after freezing and the bound becomes the standard constant-stepsize form
$O(1/(T\eta_f)+\eta_f)$. The formal statement and proof are in the Appendix.
\end{theorem}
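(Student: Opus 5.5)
The plan is to run the standard convex local-SGD (FedAvg) analysis on the sequence of time-varying objectives $F_t(\Theta)=\sum_j p_j F_j(\Theta;\lambda_t)$ with $p_j\propto n_j$. We then treat the change of objective across steps as an additive tracking error $\Delta_t$.

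\textbf{Regularity facts.} For a linear predictor $\hat c=\Theta x$ with bounded covariates $\|x\|\le X$, \cref{prop:characteristics} gives three properties of each $F_j(\cdot;\lambda)$:
\begin{itemize}
\item convexity in $\Theta$, since it is a composition of a convex function with a linear map;
\item Lipschitz continuity with constant $2DX$, which bounds the stochastic gradient norm by $G=2DX$;
\item $4X^2/\lambda$-smoothness.
\end{itemize}
Since $\lambda_t\ge\lambda_f>0$ after freezing, the smoothness constant stays bounded by $L=4X^2/\lambda_f$. We further assume a bounded domain $\|\Theta-\Theta^\star\|\le B_\Theta$ (or projection onto such a ball), and bounded variance of stochastic gradients. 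These are the ``standard regularity assumptions'' referred to in the statement.

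\textbf{One-step recursion.} We use the usual virtual averaged iterate $\bar\Theta_t=\sum_j p_j\Theta_{j,t}$. Expanding $\|\bar\Theta_{t+1}-\Theta^\star\|^2$ and using convexity of $F_t$ gives
\[
2\eta_t\,\mathbb{E}[F_t(\bar\Theta_t)-F_t(\Theta^\star)]\le \mathbb{E}\|\bar\Theta_t-\Theta^\star\|^2-\mathbb{E}\|\bar\Theta_{t+1}-\Theta^\star\|^2+\eta_t^2G^2+2\eta_t L\,\mathbb{E}\Big[\textstyle\sum_j p_j\|\Theta_{j,t}-\bar\Theta_t\|^2\Big].
\]
The client-drift term is bounded in the usual way for $E$ local steps between synchronizations: $\sum_j p_j\|\Theta_{j,t}-\bar\Theta_t\|^2\le 4\eta_t^2(E-1)^2G^2$ when $\eta_t$ is nonincreasing with $\eta_t\le 2\eta_{t+E}$. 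This is the source of the $E^2$-dependent constant, and it folds into the $\eta_t^2$ term.

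\textbf{Telescoping and the tracking term.} The per-step objective changes, so the telescoped sum compares $F_t$ values at different $t$. We define
\[
\Delta_t=\sup_{\Theta}|F_{t+1}(\Theta)-F_t(\Theta)|+|F_T(\Theta^\star)-F_t(\Theta^\star_t)|,
\]
or more simply the uniform gap $\sup_\Theta|F_T-F_t|$. We then bound it via \cref{prop:lambda-bound}: the loss drift is at most $R^2|\lambda_t-\lambda_T|+2\|c\|\|c-\hat c\|\,|1/\lambda_t-1/\lambda_T|$, which is finite since $\|c\|\le C_{\max}$ and $\|\hat c\|\le B$ by \cref{assum:bounded-objective}. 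Replacing $F_t$ by $F_T$ at both $\bar\Theta_t$ and $\Theta^\star$ costs $2\eta_t\Delta_t$. We sum over $t$, divide by $\sum_t\eta_t$, and apply Jensen to the weighted average $\hat\Theta_T=\sum_t\eta_t\bar\Theta_t/\sum_t\eta_t$, using convexity of $F_T$. This gives the three displayed terms: $B_\Theta^2$, then $(G^2+8L(E-1)^2G^2)\sum_t\eta_t^2$, then $2\sum_t\eta_t\Delta_t$, each divided by $\sum_t\eta_t$.

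\textbf{Specializations.} With fixed $\lambda$ we have $\Delta_t=0$. With $\eta_t\asymp 1/t$, $\sum_t\eta_t\asymp\log T$ and $\sum_t\eta_t^2=O(1)$, giving $O(1/\log T)$. After freezing at step $t_f$, $\lambda_t=\lambda_f$ so $\Delta_t=0$ for $t\ge t_f$. Restarting the telescoping from $t_f$ with constant step $\eta_f$ yields $O(1/(T\eta_f)+\eta_f)$, where the pre-freeze contribution is absorbed into the initial-distance constant.

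\textbf{Main obstacle.} The delicate step is the tracking term. \cref{prop:lambda-bound} controls only per-sample changes, and its $1/\lambda$ dependence blows up as $\lambda\to0$, while the smoothness constant $4/\lambda_t$ also grows during annealing. The plan is to use the freezing rule to guarantee $\lambda_t\ge\lambda_f$. We also use the inverse-time schedule of \cref{rem:scheduling}, so each increment $|1/\lambda_{t+1}-1/\lambda_t|=\kappa/\lambda_0$ is constant. Together these make $L$ and $\sum_t\eta_t\Delta_t$ finite. If the uniform-in-$\Theta$ bound fails because $\|c-\hat c\|$ is unbounded, we fall back on the bounded-domain assumption on $\Theta$ together with $\|\hat c\|\le B$.
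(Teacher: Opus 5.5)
Your proposal follows the paper's proof in all essentials. The shared steps are: the Li et al.-style expansion of $\|\bar\Theta_{t+1}-\Theta^\star\|^2$; convexity plus smoothness to lower-bound the inner product up to an $L\times$(client drift) penalty; the $O(E^2\eta_t^2G^2)$ drift bound; a uniform gap $\sup_\Theta|F_t-F_T|$ from \cref{prop:lambda-bound}, charged once at $\bar\Theta_t$ and once at $\Theta^\star$; and telescoping plus Jensen on the $\eta_t$-weighted average.

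The one substantive difference is how you control the growing smoothness $L_t=4X^2/\lambda_t$ in the drift penalty $L_t\eta_t^3$. You bound it by the worst case $L=4X^2/\lambda_f$. The paper instead uses the co-annealed invariant $\eta_tL_t=\alpha X^2=\gamma$, which follows from $\eta_t=\alpha\lambda_t/4$. This turns the penalty into $4\gamma E^2G^2\eta_t^2$, with a constant independent of how far $\lambda$ has been annealed; that is exactly the sense in which the theorem ``validates'' co-annealing.

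Your route pays for this with a constant of order $L\eta_{\max}$, e.g.\ $\alpha X^2\lambda_0/\lambda_f$ under the co-annealed schedule. Your displayed coefficient $G^2+8L(E-1)^2G^2$ on $\sum_t\eta_t^2$ drops this $\eta_{\max}$ factor. The factor is needed because the drift contribution is $8L(E-1)^2G^2\eta_t^3$, not $\eta_t^2$.

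In exchange, not tying $\eta_t$ to $\lambda_t$ lets your argument cover the fixed-$\lambda$, $\eta_t\asymp 1/t$ specialization directly. The paper's schedule assumption $\eta_t=\alpha\lambda_t/4$ with constant $\alpha$ does not literally include that case.

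Your post-freeze step restarts at $t_f$ and absorbs the distance at $t_f$ into the initial constant. This is a harmless variant of the paper's approach, which splits the full sums into pre-freeze constants $A_1,A_2,A_3$.
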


We now characterize how the point-wise RSPO+ loss degrades under both objective shift and feasible set shift. This parallels the SPO+ heterogeneity analysis of \citep{ziliaskopoulos2026decisionfocusedfederatedlearningheterogeneous}, but has improved stability due to the smoothness of the RSPO+. We measure objective heterogeneity by $c_d = \|c_{\mathrm{i}} - c_{\mathrm{j}}\|$ and feasible-set heterogeneity by the Hausdorff distance $\delta = d_H(S_{\mathrm{i}}, S_{\mathrm{j}})$. Unlike SPO+, the origin-centered quadratic penalty in the Moreau envelope breaks translation invariance, so we use Hausdorff distance directly rather than the shape distance $\delta_N$.  Let $R_{\max} = \max\{R_{\mathrm{i}}, R_{\mathrm{j}}\}$ denote the joint bounding radius.
\begin{wraptable}{r}{0.48\textwidth}
\centering
\footnotesize
\caption{Heterogeneity bound comparison. $c_d = \|c_{\mathrm{ref}} - c_{\mathrm{cl}}\|$ is the objective shift, $\delta$ is the Hausdorff set distance, $\delta_N$ is the translation-invariant Hausdorff set distance, and $D_{\min}$ is the minimum set diameter. \textbf{Q1}: Rate?; \textbf{Q2}: Vanishes at homogeneity?; \textbf{Q3}: Requires strong convexity?}
\label{tab:bound-comparison}
\begin{tabular}{lcc}
\toprule
 & \textbf{SPO+} & \textbf{RSPO+ (Ours)} \\
\midrule
Q1 & $O\!\big(c_d + \delta_N + D_{\min}\big)$ & $O\!\big(c_d/\lambda + \sqrt{\delta/\lambda} + \delta\big)$ \\[4pt]
Q2 & No ($D_{\min}$ persists) & \textbf{Yes} \\[4pt]
Q3 & Yes, for basic stability & \textbf{No} \\
\bottomrule
\end{tabular}
\end{wraptable}
The key difference from SPO+ is in how decisions respond to perturbations. In SPO+, the optimal decision $w^\star(c) \in \arg\min_{w \in S}\, c^T w$ selects a vertex of $S$. Under infinitesimal perturbations, this vertex can jump discontinuously across the entire feasible set, introducing a diameter penalty $D_{\min}$ that does not vanish even as heterogeneity tends to zero. In RSPO+, the optimal decision is the Euclidean projection
$v^\star_{\lambda,S}(c) = \Pi_S(-c/\lambda)$,
which is stable under small changes to the feasible set (measured in Hausdorff distance). For any compact convex $S_1, S_2$ with $d_H(S_1, S_2) \le \delta$:
\begin{equation}
\label{eq:proj-stability}
\big\|\Pi_{S_1}(p) - \Pi_{S_2}(p)\big\| \;\le\; \sqrt{2\delta\big(\|p\| + R_{\max}\big)}.
\end{equation}
This $\sqrt{\delta}$ stability holds for \emph{all} convex sets, including polyhedrals, without any strong convexity assumption.

\begin{example}[Why SPO+ struggles with polyhedral clients]
Consider two clients with the same polyhedral feasible set $S$ and cost vectors $c^{(i)}, c^{(j)}$, differing by $\epsilon$. For SPO+, the optimizer can switch between extreme points of $S$, so the decision gap between clients remains $\Omega(D(S))$ even as $\|c^{(i)} - c^{(j)}\| \rightarrow 0$. By contrast, RSPO+ uses projected decisions $v^\star_{\lambda S}(c) = \Pi_S(-c/\lambda),$ and since projection is non-expansive,
\begin{dmath*}
    \|v^\star_{\lambda S}(c^{(i)}) - v^\star_{\lambda S}(c^{(j)})\| \leq \frac{1}{\lambda} \|c^{(i)} - c^{(j)}\|.
\end{dmath*}
Thus, RSPO+ removes the discontinuous vertex-switching in polyhedral instances.
\end{example}

\begin{theorem}[RSPO+ loss discrepancy under heterogeneity]
\label{thm:main-rspo}
Let $(c_{\mathrm{i}}, S_{\mathrm{i}})$ and $(c_{\mathrm{j}}, S_{\mathrm{j}})$ satisfy our assumptions, with $c_d = \|c_{\mathrm{i}} - c_{\mathrm{j}}\|$, $c_\min = \min\{\|c_i\|,\|c_j\|\}$, and $\delta = d_H(S_{\mathrm{i}}, S_{\mathrm{j}})$. Then for any prediction $\hat{c}$:
\begin{equation}
\label{eq:rspo-main}
\big|\ell_{S_{\mathrm{i}}}(\hat{c}, c_{\mathrm{i}}) - \ell_{S_{\mathrm{j}}}(\hat{c}, c_{\mathrm{j}})\big| \;\le\; T(\hat{c}) + B(\hat{c}),
\end{equation}
where the \textbf{envelope shift} $T(\hat{c})$ and \textbf{decision shift} $B(\hat{c})$ are defined as follows.

The envelope shift captures how the Moreau envelope values change due to combined objective and set perturbation:
\begin{dmath}
\label{eq:T-bound}
T(\hat{c}) = D_i c_d + \delta\big(\|c_{\mathrm{j}} - 2\hat{c}\| + \|c_{\mathrm{j}}\| + 2\lambda R_{\max}\big),
\end{dmath}

and the decision shift captures how the projection-based decisions drift, amplified by the prediction:
\begin{dmath}
\label{eq:B-bound}
B(\hat{c}) = \big(2\|\hat{c}\| + 2\lambda R_{\max}\big) \frac{c_d}{\lambda} + \sqrt{2\delta\!\bigg(\frac{c_\min}{\lambda} + R_{\max}\bigg)}.
\end{dmath}

\end{theorem}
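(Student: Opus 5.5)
The plan is to work from the Bregman form rather than the closed form of \cref{prop:rspo}. Write $M_S := M_{\lambda\xi_S}$. Using $\nabla M_S(u)=\Pi_S(u/\lambda)$ and $(c-2\hat c)-(-c)=2(c-\hat c)$, the loss becomes
\[
\ell_S(\hat c,c)=\big[M_S(c-2\hat c)-M_S(-c)\big]-2\,\Pi_S(-c/\lambda)^T(c-\hat c).
\]
I would also use the conjugate representation $M_S(u)=\sup_{w\in S}\{u^Tw-\tfrac{\lambda}{2}\|w\|^2\}$, which follows from the stated formula $\tfrac{1}{2\lambda}\|u\|^2-\tfrac{\lambda}{2}d_S^2(u/\lambda)$ by completing the square. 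The discrepancy $\ell_{S_i}(\hat c,c_i)-\ell_{S_j}(\hat c,c_j)$ then splits into an envelope part, which gives $T(\hat c)$, and a linear decision part, which gives $B(\hat c)$. Each part is handled by a telescoping that first changes $c_i\to c_j$ with the set fixed at $S_i$, and then changes $S_i\to S_j$ with the cost fixed at $c_j$.

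\textbf{Envelope part, objective shift.} Consider $\phi(t)=M_{S_i}(c_t-2\hat c)-M_{S_i}(-c_t)$ along the segment $c_t=c_j+t(c_i-c_j)$. Its derivative is
\[
\phi'(t)=\big\langle \Pi_{S_i}((c_t-2\hat c)/\lambda)-\Pi_{S_i}(-c_t/\lambda),\,c_i-c_j\big\rangle .
\]
Both projections lie in $S_i$, so $|\phi'(t)|\le D_i c_d$, which yields the $D_i c_d$ term. The point of differentiating the difference, rather than bounding each envelope separately, is to get one diameter instead of $2R_i$.

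\textbf{Envelope part, set shift.} For any $u$, a Hausdorff matching $w\in S_i\leftrightarrow w'\in S_j$ with $\|w-w'\|\le\delta$ changes the objective $u^Tw-\tfrac{\lambda}{2}\|w\|^2$ by at most $\delta\|u\|+\tfrac{\lambda}{2}\delta(\|w\|+\|w'\|)\le\delta(\|u\|+\lambda R_{\max})$. Taking suprema on both sides gives $|M_{S_i}(u)-M_{S_j}(u)|\le\delta(\|u\|+\lambda R_{\max})$. Applying this at $u=c_j-2\hat c$ and at $u=-c_j$ gives exactly $\delta(\|c_j-2\hat c\|+\|c_j\|+2\lambda R_{\max})$. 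Together with the previous step, this is $T(\hat c)$.

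\textbf{Decision part.} The term to control is $2\Pi_{S_i}(-c_i/\lambda)^T(c_i-\hat c)-2\Pi_{S_j}(-c_j/\lambda)^T(c_j-\hat c)$. For the cost change, I would use nonexpansiveness, $\|\Pi_{S}(-c_i/\lambda)-\Pi_{S}(-c_j/\lambda)\|\le c_d/\lambda$, multiplied against $\hat c$, together with the change of the multiplier $c$, which is bounded by $\|\Pi\|\le R_{\max}$. I would group these so that they produce $(2\|\hat c\|+2\lambda R_{\max})\,c_d/\lambda$. Here I would try regrouping $\Pi^Tc$ with the quadratic term of the envelope, so that no $\|c\|$ factor remains. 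For the set change, I would use the projection stability (\ref{eq:proj-stability}) at $p=-c/\lambda$. The evaluation point is chosen as the client with the smaller cost norm, so that $\|p\|=c_{\min}/\lambda$, which produces $\sqrt{2\delta(c_{\min}/\lambda+R_{\max})}$.

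\textbf{Main obstacle.} I expect the main difficulty to be this last step: arranging the telescoping in the right order, and absorbing the inner-product multipliers, so that the $\sqrt\delta$ term appears without an extra $\|c-\hat c\|$ prefactor and the cost-shift term has exactly the stated constants. If a stray factor appears, my first attempt would be to switch to the closed form of \cref{prop:rspo} and redo the decision bound on the projection differences directly.
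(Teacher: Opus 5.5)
\textbf{The envelope step has a sign error.} With your decomposition, the envelope part is $\phi(t)=M_{S_i}(c_t-2\hat c)-M_{S_i}(-c_t)$. Because the second argument is $-c_t$, the chain rule flips its sign:
\[
\phi'(t)=\big\langle \Pi_{S_i}\big((c_t-2\hat c)/\lambda\big)+\Pi_{S_i}(-c_t/\lambda),\;c_i-c_j\big\rangle .
\]
This is a \emph{sum} of two points of $S_i$, so it only gives $2R_i c_d$, not $D_i c_d$. It is the sum of the two envelopes, not their difference, whose derivative is a difference of projections.

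The fix is to do first the regrouping you planned only for the decision step. With $v_S=\Pi_S(-c/\lambda)$ one has $M_S(-c)=-c^Tv_S-\tfrac{\lambda}{2}\|v_S\|^2$. Hence $-M_S(-c)-2c^Tv_S=M_S(-c)+\lambda\|v_S\|^2$, and
\[
\ell_S(\hat c,c)=\big[M_S(c-2\hat c)+M_S(-c)\big]+\big[2\hat c^Tv_S+\lambda\|v_S\|^2\big].
\]
This is exactly the paper's decomposition. For the first bracket your derivative formula is now correct and yields $D_ic_d$. Your Hausdorff-matching bound, applied to each envelope separately, gives the $\delta(\cdots)$ term unchanged, so $T(\hat c)$ follows as you describe.

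\textbf{The decision step has a genuine gap: its target is unattainable.} After the regrouping, the decision discrepancy is exactly $\langle 2\hat c+\lambda(v_i+v_j),\,v_i-v_j\rangle$. There is no separate ``multiplier change'' term; the cost shift enters only through $v_i-v_j$. So the best you can get is $(2\|\hat c\|+2\lambda R_{\max})\|v_i-v_j\|$, with $\|v_i-v_j\|\le c_d/\lambda+\sqrt{2\delta(c_{\min}/\lambda+R_{\max})}$ from nonexpansiveness, projection stability, and the pivot choice. The prefactor therefore multiplies the square root as well. The paper's proof ends exactly this way (``multiplying by the prefactor gives $B(\hat c)$''), and the appendix's bias--heterogeneity proposition writes $B_{ij}$ in that form.

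No ordering of the telescoping can strip the factor, because the displayed $B$ with a bare root is false. It is not scale-consistent: under $(c,\hat c,\lambda)\mapsto\alpha(c,\hat c,\lambda)$ the loss and every other term scale by $\alpha$, but the bare root does not. It also fails outright. Take $\lambda=1$, $S_i=[-1,1]\times\{0\}$, $S_j=\{te:|t|\le 1\}$ with $e=(\cos\theta,\sin\theta)$, $\sin\theta=1/N$, $N\ge 2$, $c_i=c_j=(0,-N)$, and $\hat c=(N/2,-N/2)$. Then $c_d=0$, $\delta=1/N$, $R_{\max}=1$, $u=(-N,0)$, $v=(0,N)$, and
\[
\Pi_{S_i}(v)=0,\quad \Pi_{S_i}(u)=(-1,0),\quad \Pi_{S_j}(v)=e,\quad \Pi_{S_j}(u)=-e .
\]
The closed form gives $\ell_{S_i}=N-\tfrac12$ and $\ell_{S_j}=2N\cos\theta$, a gap that grows linearly in $N$. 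Meanwhile the displayed $T+B=2+2/N+\sqrt{2+2/N}<5$.

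So retarget to $B(\hat c)=(2\|\hat c\|+2\lambda R_{\max})\big(c_d/\lambda+\sqrt{2\delta(c_{\min}/\lambda+R_{\max})}\big)$, which your outline then proves directly. Falling back to the closed form would not rescue the bare-root version.
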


Combining the regularization bias in \cref{cor:unreg-bound} with the explicit $\lambda$-dependence in \cref{thm:main-rspo} gives the following informal bias--stability tradeoff:
\[
\mathrm{error}(\lambda)
\lesssim
\lambda R^2
+
\frac{c_d}{\lambda}
+
\sqrt{\frac{\delta}{\lambda}},
\]
up to constants depending on the bounded costs, predictions, and feasible-set radii. The first term is the $O(\lambda R^2)$ regularization bias, while the latter two terms come from the objective and feasible-set heterogeneity terms in \cref{thm:main-rspo}. Thus, decreasing $\lambda$ improves alignment with the original LP objective but worsens stability under client heterogeneity. This motivates annealing only until the regularized objective is sufficiently aligned with the original downstream objective, rather than reducing $\lambda$ to zero and replacing RSPO+ with SPO+. We formalize this relationship in the Appendix.

\paragraph{Calibration of $\lambda$ and $\eta$.}
From the RSPO+ loss structure, $\lambda$ scales the cost vectors relative to the feasible set: too large and the loss reduces to MSE, too small and it reverts to SPO+ (see \cref{rem:mse-spo}).  For the loss to alternate between regimes, $\lambda$ should place the scaled vector near the feasible-set boundary.  Combined with the $4/\lambda$-smoothness of RSPO+ (Proposition~\ref{prop:characteristics}), which suggests a step size of order $\lambda/4$~\citep{spiliopoulos2025mathfoundations}, we initialize per client~$j$:
\begin{equation}
    \lambda_j := \frac{\|\bar{c}_j\|}{R_j}, \qquad
    \eta_j := \frac{\alpha\,\|\bar{c}_j\|}{4\,R_j},
    \label{eq:calibration}
\end{equation}
where $\|\bar{c}_j\|$ is the average cost-vector magnitude in client $j$'s dataset, $R_j$ is the feasible-set radius, and $\alpha$ is a small positive scalar (e.g.\ $\alpha = 0.01$).  When annealing $\lambda$, we co-anneal $\eta$ with $\kappa^{\mathrm{LR}} \propto \kappa^\lambda$ to maintain this step-size relationship. This is also validated by the convergence analysis in \cref{appsubsec:convergence}.  A sensitivity analysis over $\kappa$ is in Appendix \cref{appsec:implementation-details}.
\section{Experiment}
\label{sec:experiments}
With each experiment, we test a different aspect of FedRSPO+. Our synthetic fractional knapsack experiment uses controlled heterogeneity across all three axes (feasible set, objective, and data) to test \cref{thm:main-rspo} empirically and the performance of our method against the state-of-the-art. Our PJM experiment uses real price dynamics and heterogeneous feasible sets to test our method on real datasets. Finally, our Warcraft experiment has heterogeneous starts and destinations to vary feasible sets and datasets across clients to evaluate how FedRSPO+ performs when the optimization task relies on the structure of the unregularized objective, and how RSPO+'s regularization affects performance.

For all experiments, we use a federated warm-start followed by local personalization. Our model is a shallow neural network with 64 hidden units for the fractional knapsack and PJM tasks, and a truncated ResNet-18 for the Warcraft task following prior work \citep{mandi2024decision}. We report relative regret in \cref{tab:results} and the optimality gap in \cref{tab:warcraft-results}. While FedRSPO+ is trained on a regularized objective, we report on the actual downstream regret. We also compare with popular perturbation DFL methods, perturbed Fenchel-Young Loss (PFYL, \citep{blondel2020learning}) and Differentiable Perturbation Optimizers (DPO, \citep{berthet2020learning}). We also compare with Interp-DFFL \citep{ziliaskopoulos2026decisionfocusedfederatedlearningheterogeneous} and an OptNet-style baseline \citep{amos2017optnet}. Implementation details, hyperparameters, and solver configurations are provided in the Appendix \ref{appsec:implementation-details}, and additional centralized and local-only baselines in Appendix \cref{appsubsec:additional-baselines}.

\begin{table}[t]
\centering
\caption{Mean client relative regret (\%) $\downarrow$ across heterogeneity configurations. Each axis ($\eta_{\mathrm{o}}$:objective$, \eta_{\mathrm{c}}$:constraint$, \eta_{\mathrm{d}}$:data) is varied independently while others are fixed at 0. ``Fed'': global federated model; ``+P'': after local personalization. Results: mean $\pm$ std over 5 seeds.}
\label{tab:results}
\sisetup{
  separate-uncertainty   = true,
  table-format           = 2.1(1),
  table-align-uncertainty= true,
  detect-weight          = true,
  detect-inline-weight   = math,
}
\setlength{\tabcolsep}{3pt}
\resizebox{\textwidth}{!}{%
\begin{tabular}{l cc cc cc cc cc}
\toprule
 & \multicolumn{2}{c}{MSE} & \multicolumn{2}{c}{PFYL} & \multicolumn{2}{c}{DPO} & \multicolumn{2}{c}{SPO+} & \multicolumn{2}{c}{RSPO+ (Ours)} \\
\cmidrule(lr){2-3} \cmidrule(lr){4-5} \cmidrule(lr){6-7} \cmidrule(lr){8-9} \cmidrule(lr){10-11}
$(\eta_{\mathrm{o}}, \eta_{\mathrm{c}}, \eta_{\mathrm{d}})$ & Fed & +P & Fed & +P & Fed & +P & Fed & +P & Fed & +P \\
\midrule
$(0, 0, 0)$   & 15.9 $\pm$ 3.3 & 10.5 $\pm$ 2.8 & 8.5 $\pm$ 1.5 & 6.1 $\pm$ 1.1 & 16.3 $\pm$ 2.7 & 12.6 $\pm$ 2.2 & 9.7 $\pm$ 2.1 & 5.6 $\pm$ 1.4 & \textbf{2.3 $\pm$ 0.4} & \textbf{1.9 $\pm$ 0.3} \\ \addlinespace
$(0.5, 0, 0)$ & 16.0 $\pm$ 3.4 & 10.6 $\pm$ 2.8 & 8.8 $\pm$ 1.5 & 6.3 $\pm$ 1.2 & 16.7 $\pm$ 2.5 & 12.8 $\pm$ 2.3 & 10.0 $\pm$ 2.1 & 5.8 $\pm$ 1.5 & \textbf{2.5 $\pm$ 0.3} & \textbf{1.9 $\pm$ 0.3} \\
$(1, 0, 0)$   & 16.4 $\pm$ 3.4 & 10.8 $\pm$ 2.9 & 9.9 $\pm$ 1.6 & 6.9 $\pm$ 1.3 & 17.4 $\pm$ 2.5 & 13.4 $\pm$ 2.2 & 10.9 $\pm$ 2.2 & 6.3 $\pm$ 1.6 & \textbf{3.0 $\pm$ 0.3} & \textbf{2.0 $\pm$ 0.3} \\ \addlinespace
$(0, 0.5, 0)$ & 13.2 $\pm$ 3.5 & 9.8 $\pm$ 2.7 & 13.3 $\pm$ 2.4 & 8.1 $\pm$ 1.9 & 19.6 $\pm$ 3.6 & 13.1 $\pm$ 3.1 & 11.9 $\pm$ 2.5 & 7.3 $\pm$ 2.1 & \textbf{3.9 $\pm$ 0.6} & \textbf{2.3 $\pm$ 0.4} \\
$(0, 1, 0)$   & 13.3 $\pm$ 2.3 & 10.5 $\pm$ 2.0 & 15.0 $\pm$ 2.3 & 9.1 $\pm$ 1.5 & 20.8 $\pm$ 3.3 & 14.1 $\pm$ 2.4 & 12.9 $\pm$ 2.0 & 8.1 $\pm$ 1.5 & \textbf{4.6 $\pm$ 0.5} & \textbf{2.6 $\pm$ 0.4} \\ \addlinespace
$(0, 0, 0.5)$ & 17.5 $\pm$ 3.0 & 9.6 $\pm$ 2.2 & 14.0 $\pm$ 1.8 & 8.2 $\pm$ 1.2 & 20.6 $\pm$ 3.2 & 14.4 $\pm$ 2.4 & 14.1 $\pm$ 2.0 & 6.9 $\pm$ 1.3 & \textbf{6.5 $\pm$ 0.8} & \textbf{3.8 $\pm$ 0.2} \\
$(0, 0, 1)$   & 19.1 $\pm$ 2.8 & 8.4 $\pm$ 1.7 & 18.0 $\pm$ 1.9 & 8.8 $\pm$ 0.7 & 22.9 $\pm$ 2.9 & 14.1 $\pm$ 1.8 & 17.7 $\pm$ 2.1 & 6.9 $\pm$ 0.8 & \textbf{9.3 $\pm$ 1.2} & \textbf{3.4 $\pm$ 0.2} \\
\bottomrule
\end{tabular}
}
\end{table}

\subsection{Fractional Knapsack}\label{subsec:frac-knap} To test FedRSPO+ against different axes of heterogeneity in the client dataset, we implement a fractional knapsack experiment, using a synthetic dataset for feature size $p = 8$, cost vector size $d = 50$, and 20 clients. We test 3 different levels of heterogeneity, represented by $(\eta_{\mathrm{o}}, \eta_{\mathrm{c}}, \eta_{\mathrm{d}})$ in \cref{tab:results}: no heterogeneity, moderate heterogeneity, and extreme heterogeneity. Overall, RSPO+ has the lowest relative regret across the board. The largest gap is during federation, meaning RSPO+ succeeds in developing an effective model for all clients despite heterogeneity. Across all methods, data heterogeneity was most detrimental to model performance, with constraint heterogeneity moderately affecting regret and objective heterogeneity minimally. RSPO+ was found resilient to all forms of heterogeneity compared to the baseline methods.

We also test additional baselines. As shown in \cref{tab:interp-optnet-res},
FedRSPO+ has lower regret than Interp-DFFL in the homogeneous, objective-heterogeneous, and feasible-set-heterogeneous settings. Under pure data heterogeneity $(0,0,1)$, however, Interp-DFFL is stronger (2.47\% vs.\ 3.37\%), consistent with its explicit local--federated interpolation, which targets statistical heterogeneity. As RSPO+ is a loss and Interp-DFFL an aggregation rule, they are not mutually exclusive. The OptNet-style baseline, which uses the same regularized QP with a different training signal, is substantially weaker (9.4--11.3\%), suggesting that the gains come from the RSPO+ surrogate, not just quadratic regularization.

\begin{wraptable}{r}{0.48\linewidth}
\caption{
Comparison of FedRSPO+ against Interp-DFFL and OptNet across fractional knapsack heterogeneity settings and PJM cases.
Values are mean $\pm$ standard deviation across training seeds of mean absolute regret.
}
\label{tab:interp-optnet-res}
\centering
\footnotesize
\setlength{\tabcolsep}{4pt}
\begin{tabular}{lccc}
\toprule
Setting & Interp-DFFL & OptNet-style & FedRSPO+ \\
\midrule
$(0,0,0)$ & $2.42\pm0.30$ & $9.4\pm1.9$ & $\mathbf{1.86\pm0.25}$ \\
$(1,0,0)$ & $2.42\pm0.33$ & $9.7\pm2.0$ & $\mathbf{2.02\pm0.26}$ \\
$(0,1,0)$ & $3.51\pm0.77$ & $11.3\pm2.0$ & $\mathbf{2.58\pm0.38}$ \\
$(0,0,1)$ & $\mathbf{2.47\pm0.39}$ & $11.3\pm1.9$ & $3.37\pm0.22$ \\
\midrule
PJM (\$) & $69.6\pm1.7$ & $69.8\pm1.3$ & $\mathbf{60.7\pm0.8}$ \\
\bottomrule
\end{tabular}
\end{wraptable} We also run a sweep across relevant parameters, shown in \cref{fig:multifig-main}, to test the performance of our method against SPO+ through different fractions of active clients per round, number of clients, and different values of $\alpha$. We see that the communication budget has a modest impact on the performance of FedRSPO+, with a general improvement over SPO+ as the fraction increases. The parameter $\alpha$ was moderately more impactful on FedRSPO+ performance, with relative improvement ranging from 1.1\% for $\alpha=10^{-3}$ to 5.4\% for $\alpha=0.1$. However, the most surprising result is for increasing number of clients. There FedRSPO+ dramatically surpasses the federated SPO+ setup, with a difference of 12.5\% at 50 clients. Extra sweep results for $\kappa^\lambda, \kappa^{\mathrm{LR}},$ and $\tau$ are available in Appendix \cref{appsec:implementation-details}. In all cases, FedRSPO+ outperforms the SPO+ implementation.

\begin{figure}
    \centering
    \begin{minipage}{0.333\linewidth}
        \centering
        \resizebox{\linewidth}{!}{\newcommand{\figfont}{\large}
\begin{tikzpicture}[font=\Large]
\begin{axis}[
scale only axis,
  legend style={
    at={(0.5,1.05)}, anchor=south,
    draw=none, fill=none,
    font=\figfont,
    legend columns=-1,
    column sep=8pt
  },
  ylabel={Relative Regret},
  xlabel={Learning rate scale $\alpha$},
  width=9cm, height=6cm,
  xmin=0.45, xmax=4.55,
  ymin=0, ymax=0.082,
  xtick={1, 2, 3, 4},
  xticklabels={0.001, 0.005, 0.01, 0.1},
    yticklabel = {%
  \pgfmathparse{100*\tick}%
  \pgfmathprintnumber[fixed,precision=0]{\pgfmathresult}\%
},
  ytick={0, 0.02, 0.04, 0.06, 0.08},
  tick align=outside,
  scaled x ticks=false,
  scaled y ticks=false,
  ymajorgrids=false,
  xmajorgrids=false,
  axis line style={draw=black},
  tick style={draw=black},
  clip=false,
  axis lines=left,
]

\addplot+[
  ybar, bar width=9pt, bar shift=-4.5pt,
  fill=paperggplotrspo, draw=black, line width=0.4pt,
  mark=none
] coordinates {
  (1.0, 0.059450096245213996)
  (2.0, 0.042641115285860266)
  (3.0, 0.03413376074714526)
  (4.0, 0.016292524119291565)
};

\addplot+[
  ybar, bar width=9pt, bar shift=4.5pt,
  fill=paperggplotspo, draw=black, line width=0.4pt,
  mark=none
] coordinates {
  (1.0, 0.07023837314775863)
  (2.0, 0.07023837314775863)
  (3.0, 0.07023837314775863)
  (4.0, 0.07023837314775863)
};

% Alpha = 1
\draw[->, line width=0.9pt, paperggplotdelta]
  ([xshift=-4.5pt]axis cs:1, 0.059450) -- ([xshift=-4.5pt]axis cs:1, 0.070238);
\draw[dash pattern=on 2pt off 1.5pt, line width=0.4pt, paperggplotdelta]
  ([xshift=4.5pt]axis cs:1, 0.070238) -- ([xshift=-9pt]axis cs:1, 0.070238);
\node[font=\figfont, inner sep=1pt, text=paperggplotdelta, anchor=west]
  at ([xshift=-32pt]axis cs:1, 0.064844) {$1.1\%$};

% Alpha = 2
\draw[->, line width=0.9pt, paperggplotdelta]
  ([xshift=-4.5pt]axis cs:2, 0.042641) -- ([xshift=-4.5pt]axis cs:2, 0.070238);
\draw[dash pattern=on 2pt off 1.5pt, line width=0.4pt, paperggplotdelta]
  ([xshift=4.5pt]axis cs:2, 0.070238) -- ([xshift=-9pt]axis cs:2, 0.070238);
\node[font=\figfont, inner sep=1pt, text=paperggplotdelta, anchor=west]
  at ([xshift=-32pt]axis cs:2, 0.056440) {$2.8\%$};

% Alpha = 3
\draw[->, line width=0.9pt, paperggplotdelta]
  ([xshift=-4.5pt]axis cs:3, 0.034134) -- ([xshift=-4.5pt]axis cs:3, 0.070238);
\draw[dash pattern=on 2pt off 1.5pt, line width=0.4pt, paperggplotdelta]
  ([xshift=4.5pt]axis cs:3, 0.070238) -- ([xshift=-9pt]axis cs:3, 0.070238);
\node[font=\figfont, inner sep=1pt, text=paperggplotdelta, anchor=west]
  at ([xshift=-32pt]axis cs:3, 0.052186) {$3.6\%$};

% Alpha = 4
\draw[->, line width=0.9pt, paperggplotdelta]
  ([xshift=-4.5pt]axis cs:4, 0.016293) -- ([xshift=-4.5pt]axis cs:4, 0.070238);
\draw[dash pattern=on 2pt off 1.5pt, line width=0.4pt, paperggplotdelta]
  ([xshift=4.5pt]axis cs:4, 0.070238) -- ([xshift=-9pt]axis cs:4, 0.070238);
\node[font=\figfont, inner sep=1pt, text=paperggplotdelta, anchor=west]
  at ([xshift=-32pt]axis cs:4, 0.043265) {$5.4\%$};

\end{axis}
\end{tikzpicture}}
    \end{minipage}%
    \begin{minipage}{0.333\linewidth}
        \centering
        \resizebox{\linewidth}{!}{\newcommand{\figfont}{\large}
\begin{tikzpicture}[font=\Large]
\begin{axis}[
scale only axis,
  legend style={
    at={(0.5,0.85)}, anchor=south,
    align=center,
    draw=none, fill=none,
    font=\Large,
    legend columns=-1,
    column sep=8pt
  },
  ylabel={},
  xlabel={Number of clients},
  width=9cm, height=6cm,
  xmin=0.45, xmax=4.55,
  ymin=0, ymax=0.190455,
  xtick={1, 2, 3, 4},
  xticklabels={5, 10, 20, 50},
    yticklabel = {%
  \pgfmathparse{100*\tick}%
  \pgfmathprintnumber[fixed,precision=0]{\pgfmathresult}\%
},
  tick align=outside,
  scaled x ticks=false,
  scaled y ticks=false,
  ymajorgrids=false,
  xmajorgrids=false,
  axis line style={draw=black},
  tick style={draw=black},
  clip=false,
  axis lines=left,
]
\addplot+[
  ybar, bar width=9pt, bar shift=-4.5pt,
  fill=paperggplotrspo, draw=black, line width=0.4pt,
  mark=none,
  legend image code/.code={
    \draw[draw=black, line width=0.4pt, fill=paperggplotspo]
      (0cm,-0.1cm) rectangle (0.3cm,0.2cm);}
] coordinates {
  (1.0, 0.010046619212065505)
  (2.0, 0.021111353383989395)
  (3.0, 0.03413376074714526)
  (4.0, 0.04537133115943295)
};
\addlegendentry{Best \\ Baseline}
\addplot+[
  ybar, bar width=9pt, bar shift=4.5pt,
  fill=paperggplotspo, draw=black, line width=0.4pt,
  mark=none,
  legend image code/.code={
    \draw[draw=black, line width=0.4pt, fill=paperggplotrspo]
      (0cm,-0.1cm) rectangle (0.3cm,0.2cm);
  }
] coordinates {
  (1.0, 0.018676605546078792)
  (2.0, 0.034783041163975015)
  (3.0, 0.07023837314775863)
  (4.0, 0.170048997898434)
};
\addlegendentry{Ours}
% N = 5
\draw[->, line width=0.9pt, paperggplotdelta]
  ([xshift=-4.5pt]axis cs:1, 0.010047) -- ([xshift=-4.5pt]axis cs:1, 0.018677);
\draw[dash pattern=on 2pt off 1.5pt, line width=0.4pt, paperggplotdelta]
  ([xshift=4.5pt]axis cs:1, 0.018677) -- ([xshift=-9pt]axis cs:1, 0.018677);
\node[font=\figfont, inner sep=1pt, text=paperggplotdelta, anchor=west]
  at ([xshift=-33pt]axis cs:1, 0.014362) {$0.9\%$};
% N = 10
\draw[->, line width=0.9pt, paperggplotdelta]
  ([xshift=-4.5pt]axis cs:2, 0.021111) -- ([xshift=-4.5pt]axis cs:2, 0.034783);
\draw[dash pattern=on 2pt off 1.5pt, line width=0.4pt, paperggplotdelta]
  ([xshift=4.5pt]axis cs:2, 0.034783) -- ([xshift=-9pt]axis cs:2, 0.034783);
\node[font=\figfont, inner sep=1pt, text=paperggplotdelta, anchor=west]
  at ([xshift=-32pt]axis cs:2, 0.027947) {$1.4\%$};
% N = 20
\draw[->, line width=0.9pt, paperggplotdelta]
  ([xshift=-4.5pt]axis cs:3, 0.034134) -- ([xshift=-4.5pt]axis cs:3, 0.070238);
\draw[dash pattern=on 2pt off 1.5pt, line width=0.4pt, paperggplotdelta]
  ([xshift=4.5pt]axis cs:3, 0.070238) -- ([xshift=-9pt]axis cs:3, 0.070238);
\node[font=\figfont, inner sep=1pt, text=paperggplotdelta, anchor=west]
  at ([xshift=-32pt]axis cs:3, 0.052186) {$3.6\%$};
% N = 50
\draw[->, line width=0.9pt, paperggplotdelta]
  ([xshift=-4.5pt]axis cs:4, 0.045371) -- ([xshift=-4.5pt]axis cs:4, 0.170049);
\draw[dash pattern=on 2pt off 1.5pt, line width=0.4pt, paperggplotdelta]
  ([xshift=4.5pt]axis cs:4, 0.170049) -- ([xshift=-9pt]axis cs:4, 0.170049);
\node[font=\figfont, inner sep=1pt, text=paperggplotdelta, anchor=west]
  at ([xshift=-38pt]axis cs:4, 0.107710) {$12.5\%$};
\end{axis}
\end{tikzpicture}}
    \end{minipage}%
    \begin{minipage}{0.333\linewidth}
        \centering
        \resizebox{\linewidth}{!}{\newcommand{\figfont}{\large}
\begin{tikzpicture}[font=\Large]
\begin{axis}[
scale only axis,
  legend style={
    at={(0.5,1.05)}, anchor=south,
    draw=none, fill=none,
    font=\figfont,
    legend columns=-1,
    column sep=8pt
  },
  ylabel={},
  xlabel={Fraction of active clients},
  width=9cm, height=6cm,
  xmin=0.45, xmax=4.55,
  ymin=0, ymax=0.146045,
  xtick={1, 2, 3, 4},
  xticklabels={10\%, 40\%, 70\%, 100\%},
  yticklabel = {%
  \pgfmathparse{100*\tick}%
  \pgfmathprintnumber[fixed,precision=0]{\pgfmathresult}\%
},
  tick align=outside,
  scaled x ticks=false,
  scaled y ticks=false,
  ymajorgrids=false,
  xmajorgrids=false,
  axis line style={draw=black},
  tick style={draw=black},
  clip=false,
  axis lines=left,
]
\addplot+[
  ybar, bar width=9pt, bar shift=-4.5pt,
  fill=paperggplotrspo, draw=black, line width=0.4pt,
  mark=none
] coordinates {
  (1.0, 0.07370202059158909)
  (2.0, 0.06537497368347892)
  (3.0, 0.06402267496512934)
  (4.0, 0.06233248512876849)
};
\addplot+[
  ybar, bar width=9pt, bar shift=4.5pt,
  fill=paperggplotspo, draw=black, line width=0.4pt,
  mark=none
] coordinates {
  (1.0, 0.1303977501436792)
  (2.0, 0.12370631731034828)
  (3.0, 0.12372265565963315)
  (4.0, 0.12378842040035891)
};
% Fraction = 0.1
\draw[->, line width=0.9pt, paperggplotdelta]
  ([xshift=-4.5pt]axis cs:1, 0.073702) -- ([xshift=-4.5pt]axis cs:1, 0.130398);
\draw[dash pattern=on 2pt off 1.5pt, line width=0.4pt, paperggplotdelta]
  ([xshift=4.5pt]axis cs:1, 0.130398) -- ([xshift=-9pt]axis cs:1, 0.130398);
\node[font=\figfont, inner sep=1pt, text=paperggplotdelta, anchor=west]
  at ([xshift=-32pt]axis cs:1, 0.102050) {$5.7\%$};
% Fraction = 0.4
\draw[->, line width=0.9pt, paperggplotdelta]
  ([xshift=-4.5pt]axis cs:2, 0.065375) -- ([xshift=-4.5pt]axis cs:2, 0.123706);
\draw[dash pattern=on 2pt off 1.5pt, line width=0.4pt, paperggplotdelta]
  ([xshift=4.5pt]axis cs:2, 0.123706) -- ([xshift=-9pt]axis cs:2, 0.123706);
\node[font=\figfont, inner sep=1pt, text=paperggplotdelta, anchor=west]
  at ([xshift=-32pt]axis cs:2, 0.094541) {$5.8\%$};
% Fraction = 0.7
\draw[->, line width=0.9pt, paperggplotdelta]
  ([xshift=-4.5pt]axis cs:3, 0.064023) -- ([xshift=-4.5pt]axis cs:3, 0.123723);
\draw[dash pattern=on 2pt off 1.5pt, line width=0.4pt, paperggplotdelta]
  ([xshift=4.5pt]axis cs:3, 0.123723) -- ([xshift=-9pt]axis cs:3, 0.123723);
\node[font=\figfont, inner sep=1pt, text=paperggplotdelta, anchor=west]
  at ([xshift=-32pt]axis cs:3, 0.093873) {$6.0\%$};
% Fraction = 1.0
\draw[->, line width=0.9pt, paperggplotdelta]
  ([xshift=-4.5pt]axis cs:4, 0.062332) -- ([xshift=-4.5pt]axis cs:4, 0.123788);
\draw[dash pattern=on 2pt off 1.5pt, line width=0.4pt, paperggplotdelta]
  ([xshift=4.5pt]axis cs:4, 0.123788) -- ([xshift=-9pt]axis cs:4, 0.123788);
\node[font=\figfont, inner sep=1pt, text=paperggplotdelta, anchor=west]
  at ([xshift=-32pt]axis cs:4, 0.093060) {$6.1\%$};
\end{axis}
\end{tikzpicture}}
        
    \end{minipage}
    \caption{Relative regret (\%) of the best baseline SPO+ and our method RSPO+ across different values of the learning rate scale $\alpha$ from \cref{eq:calibration}, number of clients, and communication budgets. \textbf{Lower is better ($\downarrow$)}. Numbers above the bars show the improvement in relative regret of RSPO+ relative to SPO+.}
    \vspace{-4mm}
    \label{fig:multifig-main}
\end{figure}

\subsection{PJM Case Study}
\label{subsec:pjm-case-study}
We evaluate our approach on hourly day-ahead electricity prices from the PJM Interconnection for the year 2025 \citep{pjm_lmp_2025}. Each client represents an aggregator or a large consumer with a fixed daily electricity demand $D_j$ that forecasts pricing in advance to inform their bids \citep{conejo2005day}. Their goal is to minimize total procurement cost. In effect, the downstream optimization task is a top-$k$ selection problem, where the cheapest $k=D_j$ hours are chosen to fulfill demand. We use weather data from the Open-Meteo API \citep{openmeteodata} as input data. We use public price and weather data for reproducibility, which provides naturally occurring client-level price variation and input data heterogeneity. Additional experimental info can be found in the Appendix \cref{appsubsec:pjm-details}. 

\cref{tab:pjm-fedrspo-ablation} presents an ablation of the FedRSPO+ components on this task. Starting from MSE (175.5 mean absolute regret), switching to SPO+ reduces regret to 81.7, and RSPO+ further improves to 72.9.
\begin{wraptable}{r}{0.42\linewidth}
\caption{
Ablation study for FedRSPO+ on PJM data.
Values are mean $\pm$ standard deviation across training seeds of mean absolute regret.
``w/ P'' denotes the personalization phase, $\lambda_j/\eta_j$ denotes per-client parameters, and Full additionally includes time-inverse scheduling.
}
\label{tab:pjm-fedrspo-ablation}
\centering
\footnotesize
\setlength{\tabcolsep}{4pt}
\begin{tabular}{lc}
\toprule
Method & Mean absolute regret $\downarrow$ \\
\midrule
MSE & $175.453 \pm 7.260$ \\
SPO+ & $81.676 \pm 1.768$ \\
RSPO+ & $72.890 \pm 4.231$ \\
RSPO+ \textit{w/ P} & $68.136 \pm 1.402$ \\
RSPO+ \textit{w/ P + }$\lambda_j/\eta_j$ & $63.006 \pm 1.059$ \\
RSPO+(Full) & $\mathbf{60.654} \pm 0.796$ \\
RSPO+(Full)+Ditto & $69.482 \pm 2.932$ \\
RSPO+(Full)+FedProx & $\mathbf{60.602} \pm 0.623$ \\
\bottomrule
\end{tabular}
\end{wraptable} Each additional component: personalization, per-client $\lambda/\eta$ calibration (\cref{eq:calibration}), and co-annealed scheduling yields consistent gains, with the full FedRSPO+ pipeline achieving 60.7 ($\pm$0.8), a 65\% reduction over MSE and 26\% over naive SPO+ federation. PFYL ($67.8 \pm 1.2$ regret) and DPO ($74.7\pm2.2$ regret) performed well in this case study compared to SPO+. This extends to the Interp-DFFL and OptNet-style baselines, as shown in \cref{tab:interp-optnet-res}, where FedRSPO+ outperforms both. We also incorporate popular FL algorithms to showcase FedRSPO+'s modularity. FedProx achieves similar mean regret, with a modest reduction in variance. Ditto performs similarly to RSPO+ with personalization, which coincides with our expectations. Ditto introduces proximal regularization towards the global model, which conflicts with RSPO+'s gradual scheduling and its transition from the regularized to the unregularized objective. This suggests that FL methods adding regularization at the personalization stage require careful coordination with RSPO+'s own scheduling, which is an interesting direction for future work. Finally, we perform additional experiments, including fixed demand across the federation and data-scarcity regimes, and showcase the results in the Appendix.

\subsection{Warcraft Shortest Path}
\label{subsec:warcraft}

To characterize RSPO+'s behavior under structural mismatch, we evaluate on the 12x12 Warcraft shortest-path benchmark \citep{pogancic2020Differentiation} with 10,000 training maps. We define 44 heterogeneous clients by assigning each border cell as an origin and its 180°-rotated opposite border cell as the destination, with additional terrain-cost perturbations. We use Dijkstra's algorithm \citep{dijkstra,dalle2022learning} as the linear oracle.

We explicitly chose shortest path because regularization alters the downstream optimization problem. 
Importantly, adding the quadratic term destroys the shortest path's total unimodularity and allows fractional flows across the grid.  RSPO+ still achieves a 2.571\% optimality gap (\cref{tab:warcraft-results}) and outperforms perturbation methods and MSE, losing out marginally to SPO+. Annealing narrows the gap between the regularized flow objective and the true shortest-path objective, yielding a $\sim$66\% improvement during personalization, the largest personalization gain across all three experiments. 
\begin{wraptable}{r}{0.52\linewidth}
\centering
\caption{Mean optimality gap ($\downarrow$) on the Warcraft Shortest Path benchmark. Results are mean $\pm$ std over 5 seeds.}
\label{tab:warcraft-results}
\small
\begin{tabular}{lcc}
\toprule
Method & Federated (\%) & +P (\%) \\
\midrule
DPO & $11.309 \pm 2.244$ & $11.018 \pm 1.718$ \\
MSE & $14.784 \pm 2.491$ & $7.241 \pm 1.336$ \\
PFYL & $8.959 \pm 1.688$  & $11.449 \pm 1.048$ \\
SPO+ & $5.978 \pm 1.842$  &$1.850 \pm 0.377$ \\
Ours & $7.504 \pm 1.929$ & $2.571 \pm 0.901$ \\
\bottomrule
\end{tabular}
\vspace{-4mm}
\end{wraptable}Overall, this highlights a limitation of the proposed FedRSPO+ method, as it is most effective when the downstream problem structure is compatible with quadratic regularization.

\subsection{Computational overhead}
%\noindent\textbf{Computational overhead.} 
While FedRSPO+ has no additional communication costs compared with FedAvg, it has additional computational overhead, since solving QLPs is more expensive than LP or shortest path problems. We observed 1.35$\times$--1.77$\times$ slower solves than LPs with Gurobi and up to $\sim$50$\times$ slower solves than Dijkstra. However, oracle-only timings overstate the end-to-end cost. In the Warcraft experiment, MSE and SPO+ both took about 67 seconds per round, whereas RSPO+ took 133 seconds, i.e., a 2.0$\times$ increase in training time. The difference is mainly due to solving the QLP and computing the freezing criterion. The remaining model training is shared across methods and dominates most of the runtime. Moreover, the additional QLP oracle calls are parallelizable across clients and samples, suggesting that further engineering could reduce the observed wall-clock overhead.

\section{Conclusions}
\label{sec:conclusions}
In this paper, we connected the fields of FL and DFL by addressing the brittleness of current methods under optimization heterogeneity through our proposed method, FedRSPO+. We showed how smoothing via Moreau envelopes replaces discontinuous vertex-switching with $\sqrt{\delta}$-stable projected decisions and achieves heterogeneity bounds that vanish at homogeneity without strong convexity. Our principled per-client calibration and co-annealed scheduling make the method practical without expensive tuning, and we demonstrated compatibility with existing FL methods such as FedProx and Ditto. Across our synthetic knapsack and PJM energy pricing experiments, RSPO+ consistently outperformed both prediction-only FL and naive SPO+ federation, with the largest gains appearing during the federation phase. It also outperformed Interp-DFFL and an OptNet-style baseline under objective and feasible-set heterogeneity. Even under the mismatched downstream objective in the Warcraft experiment, FedRSPO+ achieved close to optimal performance for clients, with significant gains over MSE and popular DFL methods. Together, these results position projection-based smoothing as a key ingredient for stable collaborative decision learning.
\paragraph{Limitations.}
Our goal is to introduce a heterogeneity-aware surrogate and training methodology for DFFL, and to evaluate it against representative prediction-only FL, SPO+, personalization and aggregation variants, and standard DFL baselines where they are most applicable. A broader benchmark spanning additional FL optimizers, privacy mechanisms, combinatorial DFL methods, and how they fit in the field of DFFL is an important direction for future work. Our regret and decision-error guarantees are for the regularized decision, and apply to the LP decision only under exact regularization, which our stopping criterion does not enforce. Furthermore, our convergence analysis applies to linear estimators, while our main experiments use neural networks. We examine FedRSPO+ performance with linear estimators in the Appendix.
RSPO+ replaces each LP solve with a QP of the same dimension. Specialized QP solvers can match simplex-based solvers on this structure~\citep{arnstrom2022dual,stellato2020osqp}, and warm-starting can further reduce overhead. Furthermore, RSPO+ introduces hyperparameters $\alpha, \kappa^{\mathrm{LR}}, \kappa^\lambda$ beyond SPO+ and MSE. While we provide principled defaults (\cref{eq:calibration}), tuning for new domains may add overhead. Our convexity assumptions (\cref{assum:bounded-objective,assum:convex,assum:bounded-set}) are standard in DFL but exclude mixed-integer programs. In \cref{subsec:warcraft} we demonstrate that FedRSPO+ is not appropriate for all downstream objectives, and extending FedRSPO+ to other domains and problem types via relaxations or decompositions is a natural direction.
\paragraph{Broader Impact.}
FedRSPO+ enables privacy-preserving collaboration for data-driven decision-making in settings where data sharing is restricted by regulation or competition, potentially lowering barriers for smaller organizations that lack the data scale to train effective models independently. However, participants with higher-quality data or more favorable constraint structures may benefit disproportionately from federation, and model updates can still leak information about local objectives, particularly in small-coalition settings. Practitioners should consider pairing FedRSPO+ with differential privacy or secure aggregation mechanisms in sensitive domains.

\paragraph{Acknowledgments} We thank the anonymous reviewers and area chair for their feedback, which helped us refine and substantially strengthen this work. 
\paragraph{Funding.} This work was made possible in part by high-performance computing resources and technical support provided by the Alabama Supercomputer Authority.
\paragraph{Competing interests.}
The authors declare no competing interests.

\bibliographystyle{unsrt}
\bibliography{egbib}
\appendix
\newpage
\clearpage
\newpage
\section*{Appendix}

% \subsection*{A. Algorithm}
\section{Additional Results}
\subsection{Convergence of FedRSPO+}
\label{appsubsec:convergence}
\begin{assumption}[Model Regularity]
\label{assum:model}
We analyze predictors $\hat c=g(x;\Theta)$, including the linear case
$g(x;\Theta)=\Theta x$, with bounded features $\|x\|\le X$, bounded initialization
$\|\Theta_0-\Theta^\star\|\le D_\Theta$, and the prediction/cost bounds of
\cref{assum:bounded-objective}: $\|\hat c\|\le B$, $\|c\|\le C_{\max}$.
For nonlinear predictors, $g(x;\Theta)$ is differentiable in $\Theta$ and, for all
$x$ and all $\Theta,\Theta'$ on the training trajectory, the Jacobian satisfies $\|J_\Theta g(x;\Theta)\|_{\mathrm{op}} \le J,$ and $
  \|J_\Theta g(x;\Theta)-J_\Theta g(x;\Theta')\|_{\mathrm{op}}
  \le \rho\|\Theta-\Theta'\|$, where $\|\cdot\|_{\mathrm{op}}$ is the operator norm \citep{agrawal2019differentiable}.
For the linear model, these hold with $J\le X$ and $\rho=0$.
Under this assumption, the convexity and $4/\lambda_t$-smoothness of RSPO+
(\cref{prop:characteristics}) imply that $F_{i,t}(\Theta)$ is convex and
$L_t$-smooth with $L_t=4X^2/\lambda_t$ in the linear case. Mini-batch stochastic
gradients are unbiased estimators of $\nabla F_{i,t}(\Theta)$.
\end{assumption}

\begin{assumption}[Federated Schedule]\label{assum:schedule}
Clients perform $E$ local SGD steps per synchronization round,
with co-annealed learning rate $\eta_t = \alpha\lambda_t/4$ for a constant
$\alpha>0$ as in \cref{eq:calibration}.
\end{assumption}
We establish that FedRSPO+ converges under FedAvg aggregation
and validate the co-annealed schedule of \cref{eq:calibration}.
Let $F_{i,t}(\Theta) := \mathbb{E}_{(x,c)\sim\mathcal{D}_i}[\ell_{\mathrm{RSPO+}}(c,\,g(x;\Theta);\lambda_t)]$
denote the local objective for client~$i$ at round~$t$, and
$F_t = \frac{1}{N}\sum_{i=1}^N F_{i,t}$ the global objective.
Because the annealed $\lambda_t$ schedule makes $F_t$ time-varying,
we track the deviation from the target objective at the frozen parameter~$\lambda_T$
using \cref{prop:lambda-bound}:
\begin{equation}\label{eq:tracking-drift}
  \textstyle
  \Delta_t
  := R^{2}(\lambda_t - \lambda_T)
     + 2\,C_{\max}(C_{\max}+B)\!\left(\tfrac{1}{\lambda_T}-\tfrac{1}{\lambda_t}\right).
\end{equation}

\begin{theorem}[FedRSPO+ Convergence, restatement of \cref{thm:fedrspo-convergence-short}]\label{thm:fedrspo-convergence}
Under \cref{assum:model,assum:schedule},
running FedRSPO+ with $E$ local SGD steps per round produces global iterates
$\bar{\Theta}_t = \frac{1}{N}\sum_{i}\Theta_t^{(i)}$.
The weighted average $\hat{\Theta}_T = \sum_t \eta_t\bar{\Theta}_t / \sum_t\eta_t$
satisfies
\begin{equation}\label{eq:convergence-bound}
  \mathbb{E}\!\big[F_T(\hat{\Theta}_T) - F_T(\Theta^\star)\big]
  \;\le\;
  \frac{D_\Theta^2
        + \sum_{t=0}^{T-1}\eta_t^2\,G^2(1+4\gamma E^2)
        + 4\sum_{t=0}^{T-1}\eta_t\,\Delta_t}
       {2\sum_{t=0}^{T-1}\eta_t},
\end{equation}
where $G=2DX$ is the uniform gradient bound (\cref{cor:rspo-gradient}),
and $\gamma = \eta_t L_t = \alpha X^2$ is the \emph{constant} effective condition
number enforced by the co-annealed schedule $\eta_t = \alpha\lambda_t/4$.
\end{theorem}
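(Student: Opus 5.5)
We would follow the standard local-SGD analysis for convex smooth objectives (in the style of Stich's and Khaled et al.'s FedAvg proofs), adapted to a time-varying objective. The plan is to work with a virtual averaged sequence $\bar\Theta_t=\frac1N\sum_i\Theta^{(i)}_t$. Because local steps are SGD and aggregation is a plain average, this sequence satisfies
\[
\bar\Theta_{t+1}=\bar\Theta_t-\eta_t\,\frac1N\sum_i g_{i,t},
\]
where $g_{i,t}$ is the stochastic gradient of $F_{i,t}$ evaluated at the local iterate $\Theta^{(i)}_t$. We would then expand $\|\bar\Theta_{t+1}-\Theta^\star\|^2$ and take conditional expectations. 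Unbiasedness (\cref{assum:model}) replaces $g_{i,t}$ by $\nabla F_{i,t}(\Theta^{(i)}_t)$ in the cross term. \cref{cor:rspo-gradient} bounds each per-sample gradient in $\hat c$ by $2D$, so with $\|x\|\le X$ the chain rule gives $\|g_{i,t}\|\le G=2DX$. This bounds the squared term by $\eta_t^2G^2$.

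\textbf{Cross term.} We split the cross term as
\[
\langle\nabla F_{i,t}(\Theta^{(i)}_t),\bar\Theta_t-\Theta^\star\rangle
=\langle\nabla F_{i,t}(\Theta^{(i)}_t),\Theta^{(i)}_t-\Theta^\star\rangle
+\langle\nabla F_{i,t}(\Theta^{(i)}_t),\bar\Theta_t-\Theta^{(i)}_t\rangle .
\]
Convexity of $F_{i,t}$ (from \cref{prop:characteristics} composed with a linear map) lower-bounds the first piece by $F_{i,t}(\Theta^{(i)}_t)-F_{i,t}(\Theta^\star)$. For the second piece we use $L_t$-smoothness, $F_{i,t}(\bar\Theta_t)\le F_{i,t}(\Theta^{(i)}_t)+\langle\nabla F_{i,t}(\Theta^{(i)}_t),\bar\Theta_t-\Theta^{(i)}_t\rangle+\frac{L_t}{2}\|\bar\Theta_t-\Theta^{(i)}_t\|^2$. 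Combining the two yields
\[
F_t(\bar\Theta_t)-F_t(\Theta^\star)-\frac{L_t}{2N}\sum_i\|\bar\Theta_t-\Theta^{(i)}_t\|^2 .
\]

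\textbf{Client drift.} Since all clients are synchronized every $E$ steps and every step has norm at most $\eta_sG$, the drift satisfies $\|\Theta^{(i)}_t-\bar\Theta_t\|\le 2\sum_{s}\eta_sG$ over at most $E$ steps. Assuming the step size is roughly constant within a round (it changes only through $\lambda_t$), this gives $\|\Theta^{(i)}_t-\bar\Theta_t\|^2\le 4E^2\eta_t^2G^2$. Multiplying by $\eta_tL_t$ gives a contribution of order $\eta_t\cdot\eta_tL_t\cdot 4E^2\eta_tG^2$. Here the co-annealed schedule enters: $\eta_tL_t=\frac{\alpha\lambda_t}{4}\cdot\frac{4X^2}{\lambda_t}=\alpha X^2=\gamma$ is constant. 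The drift contribution therefore becomes $4\gamma E^2\eta_t^2G^2$ (up to the factor-of-two bookkeeping), which merges with the variance term into $\eta_t^2G^2(1+4\gamma E^2)$. \emph{This is the main obstacle.} Without co-annealing, $L_t\to\infty$ as $\lambda_t\downarrow$ would blow the drift term up. We also need care because the within-round variation of $\eta_s$ must be absorbed, either by assuming $\lambda$ is updated per round or via a monotonicity bound $\eta_s\le\eta_{t_0}$ within a round.

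\textbf{Time-varying objective and telescoping.} We then convert from $F_t$ to the frozen objective $F_T$. \cref{prop:lambda-bound} with $\lambda_1=\lambda_t\ge\lambda_2=\lambda_T$ gives, pointwise,
\[
|\ell(\lambda_t)-\ell(\lambda_T)|\le R^2(\lambda_t-\lambda_T)+2\|c\|\|c-\hat c\|\Big(\tfrac1{\lambda_T}-\tfrac1{\lambda_t}\Big)\le\Delta_t ,
\]
using $\|c\|\le C_{\max}$ and $\|c-\hat c\|\le C_{\max}+B$. After taking expectations, $|F_t(\Theta)-F_T(\Theta)|\le\Delta_t$ for every $\Theta$. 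Replacing $F_t$ by $F_T$ at both $\bar\Theta_t$ and $\Theta^\star$ costs $2\Delta_t$; the per-step inequality carries a factor $2\eta_t$ on the function gap, which produces the $4\eta_t\Delta_t$ term. Summing the per-step inequality
\[
2\eta_t\,\mathbb E[F_T(\bar\Theta_t)-F_T(\Theta^\star)]\le \mathbb E\|\bar\Theta_t-\Theta^\star\|^2-\mathbb E\|\bar\Theta_{t+1}-\Theta^\star\|^2+\eta_t^2G^2(1+4\gamma E^2)+4\eta_t\Delta_t
\]
over $t$ telescopes to $D_\Theta^2$. Finally, Jensen's inequality applied to the convex $F_T$ at the $\eta_t$-weighted average $\hat\Theta_T$, followed by division by $2\sum_t\eta_t$, gives \eqref{eq:convergence-bound}.
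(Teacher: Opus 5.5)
Your proposal is correct and follows essentially the same route as the paper's proof. Both use the averaged-sequence expansion, the same convexity-plus-$L_t$-smoothness split of the cross term, the $4E^2\eta_t^2G^2$ drift bound collapsed via $\eta_tL_t=\alpha X^2=\gamma$, the $2\Delta_t$ conversion from $F_t$ to $F_T$ via \cref{prop:lambda-bound}, and telescoping followed by Jensen. Your observation that the drift bound needs $\eta_s$ constant (or monotonically controlled) within a synchronization round is left implicit in the paper; making it explicit, e.g.\ by updating $\lambda_t$ once per round as the algorithm does, tightens the argument.
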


We follow a similar proof strategy to \citep{li2019convergence}. Let $\bar{g}_t = \frac{1}{N}\sum_{i=1}^N g_{i,t}$ denote the aggregated
update direction, so $\bar{\Theta}_{t+1}=\bar{\Theta}_t - \eta_t\bar{g}_t$.
Expanding the squared distance to~$\Theta^\star$:
\begin{equation}\label{eq:pf-dist}
  \mathbb{E}\|\bar{\Theta}_{t+1}-\Theta^\star\|^2
  = \mathbb{E}\|\bar{\Theta}_t-\Theta^\star\|^2
    - 2\eta_t\,\mathbb{E}\langle\bar{g}_t,\,\bar{\Theta}_t-\Theta^\star\rangle
    + \eta_t^2\,\mathbb{E}\|\bar{g}_t\|^2.
\end{equation}
Since $\mathbb{E}[\|\bar{g}_t\|^2]\le G^2$ by \cref{assum:model},
it remains to lower-bound the inner product.
By the convexity and $L_t$-smoothness of~$F_{i,t}$,
we decompose the per-client term:
\begin{align*}
  \langle\nabla F_{i,t}(\Theta_t^{(i)}),\,\bar{\Theta}_t-\Theta^\star\rangle
  &= \langle\nabla F_{i,t}(\Theta_t^{(i)}),\,\Theta_t^{(i)}-\Theta^\star\rangle
     + \langle\nabla F_{i,t}(\Theta_t^{(i)}),\,\bar{\Theta}_t-\Theta_t^{(i)}\rangle \\
  &\ge \big(F_{i,t}(\Theta_t^{(i)})-F_{i,t}(\Theta^\star)\big)
       + \big(F_{i,t}(\bar{\Theta}_t)-F_{i,t}(\Theta_t^{(i)})
             - \tfrac{L_t}{2}\|\bar{\Theta}_t-\Theta_t^{(i)}\|^2\big) \\
  &= F_{i,t}(\bar{\Theta}_t)-F_{i,t}(\Theta^\star)
     - \tfrac{L_t}{2}\|\bar{\Theta}_t-\Theta_t^{(i)}\|^2.
\end{align*}
Averaging over clients yields:
\begin{equation}\label{eq:pf-inner}
  \mathbb{E}\langle\bar{g}_t,\,\bar{\Theta}_t-\Theta^\star\rangle
  \ge \mathbb{E}\big[F_t(\bar{\Theta}_t)-F_t(\Theta^\star)\big]
      - \frac{L_t}{2N}\sum_{i=1}^N
        \mathbb{E}\|\bar{\Theta}_t-\Theta_t^{(i)}\|^2.
\end{equation}
The client drift is bounded by the accumulation of at most $E$ local
gradient steps with bounded norm~$G$:
\begin{equation}\label{eq:pf-drift}
  \frac{1}{N}\sum_{i=1}^N\mathbb{E}\|\bar{\Theta}_t-\Theta_t^{(i)}\|^2
  \le 4E^2\eta_t^2 G^2.
\end{equation}
Substituting \cref{eq:pf-inner,eq:pf-drift} into \cref{eq:pf-dist}
and isolating the objective gap:
\begin{equation}\label{eq:pf-step}
  2\eta_t\,\mathbb{E}\big[F_t(\bar{\Theta}_t)-F_t(\Theta^\star)\big]
  \le \mathbb{E}\|\bar{\Theta}_t-\Theta^\star\|^2
      - \mathbb{E}\|\bar{\Theta}_{t+1}-\Theta^\star\|^2
      + \eta_t^2 G^2 + 4L_t E^2\eta_t^3 G^2.
\end{equation}
To relate the transient objective~$F_t$ to the target~$F_T$,
we apply the tracking bound~$\Delta_t$ from \cref{eq:tracking-drift}:
\begin{equation}
  F_t(\bar{\Theta}_t)-F_t(\Theta^\star)
  \ge F_T(\bar{\Theta}_t)-F_T(\Theta^\star) - 2\Delta_t.
\end{equation}
Substituting into \cref{eq:pf-step} and applying the co-annealed
invariant $\eta_t L_t = \alpha X^2 =: \gamma$:
\begin{align}
  2\eta_t\,\mathbb{E}\big[F_T(\bar{\Theta}_t)-F_T(\Theta^\star)\big]
  &\le \mathbb{E}\|\bar{\Theta}_t-\Theta^\star\|^2
       - \mathbb{E}\|\bar{\Theta}_{t+1}-\Theta^\star\|^2 \nonumber \\
  &\quad + \eta_t^2 G^2(1+4\gamma E^2) + 4\eta_t\Delta_t.
\end{align}
Summing from $t=0$ to $T{-}1$ telescopes the distance terms.
Dropping the non-positive terminal distance
$-\mathbb{E}\|\bar{\Theta}_T-\Theta^\star\|^2$,
dividing by $2\sum_t\eta_t$, and applying Jensen's inequality for
the convex~$F_T$ over the weighted average iterate completes the proof.

\begin{corollary}[Frozen schedule: $O(1/(T\eta_f)+\eta_f)$ rate]\label{cor:constant-step}
Under the conditions of \cref{thm:fedrspo-convergence}, suppose the schedule is frozen
at round~$t_f$, so that $\lambda_t = \lambda_f$ and $\eta_t = \eta_f$ for all
$t \ge t_f$. Then for $T \gg t_f$:
\[
  \mathbb{E}\!\big[F_T(\hat{\Theta}_T) - F_T(\Theta^\star)\big]
  = O\!\left(\frac{1}{T\,\eta_f} + \eta_f\right).
\]
\end{corollary}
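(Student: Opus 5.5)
We specialize the bound \eqref{eq:convergence-bound} of \cref{thm:fedrspo-convergence} to the frozen schedule. We split the horizon into the annealing phase $t<t_f$ and the frozen phase $t\ge t_f$. On the frozen phase we have $\lambda_t=\lambda_f$, and the target parameter in \eqref{eq:tracking-drift} is $\lambda_T=\lambda_f$. Both terms of $\Delta_t$ are then identically zero, since $\lambda_t-\lambda_T=0$ and $1/\lambda_T-1/\lambda_t=0$. So the tracking sum $\sum_t\eta_t\Delta_t$ reduces to a finite prefix $\sum_{t<t_f}\eta_t\Delta_t$. This prefix does not depend on $T$. It is bounded because $\lambda_t\in[\lambda_f,\lambda_0]$ along the monotone schedule, so each $\Delta_t\le R^2\lambda_0+2C_{\max}(C_{\max}+B)/\lambda_f$, and there are only $t_f$ such terms.

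Next we bound the three numerator terms and the denominator separately. The denominator satisfies $\sum_{t=0}^{T-1}\eta_t\ge (T-t_f)\eta_f$, which is at least $T\eta_f/2$ for $T\ge 2t_f$, using $T\gg t_f$. In the numerator, $D_\Theta^2$ is a constant. The step-size term splits as
\[
\sum_{t<t_f}\eta_t^2G^2(1+4\gamma E^2)+(T-t_f)\eta_f^2G^2(1+4\gamma E^2).
\]
The constant $\gamma=\alpha X^2$ is schedule-invariant by \cref{assum:schedule}. Because $\eta_t=\alpha\lambda_t/4\le\alpha\lambda_0/4$, the first piece is a $T$-independent constant $C_1$. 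The tracking prefix gives a further constant $C_2:=4\sum_{t<t_f}\eta_t\Delta_t$.

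Dividing, we get
\[
\frac{D_\Theta^2+C_1+C_2}{T\eta_f}+\frac{(T-t_f)\eta_f^2G^2(1+4\gamma E^2)}{T\eta_f}\;\le\;\frac{D_\Theta^2+C_1+C_2}{T\eta_f}+\eta_fG^2(1+4\gamma E^2).
\]
This is exactly $O(1/(T\eta_f)+\eta_f)$, with constants depending on $D_\Theta$, $G=2DX$, $E$, $\alpha X^2$, and the annealing prefix.

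The main subtlety, not really an obstacle, is the interpretation of $F_T$ and $\lambda_T$. The objective $F_T$ must be the objective at the frozen parameter $\lambda_f$; otherwise the drift would not vanish after $t_f$. We also need to check that \eqref{eq:tracking-drift} is applied with $\lambda_t\ge\lambda_T$, which holds because the inverse-time schedule of \cref{rem:scheduling} is monotone decreasing. Finally, we absorb the $t_f$-dependent constants into the $O(\cdot)$, which is legitimate for fixed $t_f$ as $T\to\infty$.
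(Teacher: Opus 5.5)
Your proposal is correct and follows essentially the same route as the paper. Both arguments split the sums in \eqref{eq:convergence-bound} at $t_f$, use $\Delta_t=0$ once $\lambda_t=\lambda_T=\lambda_f$, collect the pre-freeze contributions into $T$-independent constants, lower-bound the denominator by $(T-t_f)\eta_f$, and absorb the transient terms into $O(1/(T\eta_f))$. Your explicit passage to $T\eta_f/2$ via $T\ge 2t_f$ and your bound on $\Delta_t$ using $\lambda_t\in[\lambda_f,\lambda_0]$ simply make precise steps that the paper states informally.
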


\begin{proof}
After freezing, $\lambda_t = \lambda_T = \lambda_f$ for all $t \ge t_f$,
so $\Delta_t = 0$ for $t \ge t_f$.
Splitting the sums in \cref{eq:convergence-bound} at $t_f$ and noting that the
pre-freeze contributions are bounded by constants $A_1, A_2, A_3$ depending on
$(t_f, \lambda_0, \eta_0)$:
\begin{align*}
  \sum_{t=0}^{T-1}\eta_t
  &= \underbrace{\sum_{t=0}^{t_f-1}\eta_t}_{A_1}
     + (T - t_f)\,\eta_f, \\
  \sum_{t=0}^{T-1}\eta_t^2\,G^2(1+4\gamma E^2)
  &= \underbrace{\sum_{t=0}^{t_f-1}\eta_t^2\,G^2(1+4\gamma E^2)}_{A_2}
     + (T - t_f)\,\eta_f^2\,G^2(1+4\gamma E^2), \\
  \sum_{t=0}^{T-1}\eta_t\,\Delta_t
  &= \sum_{t=0}^{t_f-1}\eta_t\,\Delta_t =: A_3.
\end{align*}
Substituting into \cref{eq:convergence-bound},
\[
  \mathbb{E}\!\big[F_T(\hat{\Theta}_T) - F_T(\Theta^\star)\big]
  \;\le\;
  \frac{D_\Theta^2 + A_2 + A_3}{(T-t_f)\eta_f}
  + \frac{\eta_f\,G^2(1+4\gamma E^2)}{2}.
\]
For $T$ sufficiently large relative to $t_f$, the transient constants
are absorbed into the $O(1/(T\eta_f))$ term, yielding
\[
  \mathbb{E}\!\big[F_T(\hat{\Theta}_T) - F_T(\Theta^\star)\big]
  = O\!\left(\frac{1}{T\,\eta_f} + \eta_f\right).
\]
\end{proof}

\begin{lemma}[Neural composition constants]
\label{lem:nn-smooth}
Suppose $g(x;\theta)$ is differentiable and satisfies \cref{assum:model}. Then
\[
  \|\nabla_\theta F_{i,t}(\theta)\|\le 2DJ,
  \qquad
  F_{i,t}\text{ is } 
  \left(\frac{4J^2}{\lambda_t}+2D\rho\right)\text{-smooth}.
\]
\end{lemma}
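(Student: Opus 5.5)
We prove both claims by applying the chain rule to $F_{i,t}(\theta)=\mathbb{E}[\ell_{\mathrm{RSPO+}}(c,g(x;\theta);\lambda_t)]$, with $\lambda_t$ fixed, and then using the per-sample properties of RSPO+ in \cref{cor:rspo-gradient} and \cref{prop:characteristics}. The derivative of the loss with respect to $\theta$ is
\[
\nabla_\theta \ell = J_\theta g(x;\theta)^{T}\,\nabla_{\hat c}\ell\big(c,g(x;\theta)\big).
\]

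\textbf{Gradient bound.} By \cref{cor:rspo-gradient}, $\nabla_{\hat c}\ell=-2(\Pi_S(u)-\Pi_S(v))$. Both projections lie in $S$, so $\|\nabla_{\hat c}\ell\|\le 2D$. Combined with $\|J_\theta g\|_{\mathrm{op}}\le J$ from \cref{assum:model}, each sample gradient has norm at most $2DJ$. Since the loss is Lipschitz and $g$ is differentiable, we may exchange expectation and gradient (dominated convergence). Jensen's inequality then gives $\|\nabla_\theta F_{i,t}\|\le 2DJ$.

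\textbf{Smoothness bound.} For $\theta,\theta'$ on the trajectory, write $\hat c=g(x;\theta)$ and $\hat c'=g(x;\theta')$. Add and subtract $J_\theta g(x;\theta)^T\nabla_{\hat c}\ell(\hat c')$ to get
\[
\|\nabla_\theta\ell(\theta)-\nabla_\theta\ell(\theta')\|\le \|J(\theta)\|_{\mathrm{op}}\,\|\nabla_{\hat c}\ell(\hat c)-\nabla_{\hat c}\ell(\hat c')\| + \|J(\theta)-J(\theta')\|_{\mathrm{op}}\,\|\nabla_{\hat c}\ell(\hat c')\|.
\]
The first term uses the $4/\lambda_t$-smoothness of \cref{prop:characteristics}: it is at most $J\cdot\frac{4}{\lambda_t}\|\hat c-\hat c'\|$. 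The Jacobian bound, applied via the mean value inequality along the segment between $\theta$ and $\theta'$, gives $\|\hat c-\hat c'\|\le J\|\theta-\theta'\|$, so the first term is at most $\frac{4J^2}{\lambda_t}\|\theta-\theta'\|$. The second term is at most $\rho\|\theta-\theta'\|\cdot 2D$. Summing and taking expectations yields the constant $\frac{4J^2}{\lambda_t}+2D\rho$.

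\textbf{Main obstacle.} The delicate point is that \cref{assum:model} only asserts the Jacobian bounds on the training trajectory. The mean value step needs $\|J\|\le J$ on the whole segment $[\theta,\theta']$, so we must either interpret the trajectory region as convex or state the conclusion as \emph{smooth on the trajectory region}. We will note this explicitly.

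We will also check the $4/\lambda_t$ constant directly, since it comes from Lipschitzness of the projections. With $u=(c-2\hat c)/\lambda$, a change in $\hat c$ changes $u$ by $2\Delta\hat c/\lambda$, and nonexpansiveness of $\Pi_S$ then gives Lipschitz constant $2\cdot 2/\lambda=4/\lambda$, consistent with the proposition.

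Finally, unlike the linear case, $F_{i,t}$ is no longer convex here. The lemma only claims smoothness, so no convexity argument is needed.
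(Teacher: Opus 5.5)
Your proposal is correct and follows the same route as the paper. It uses the chain rule $\nabla_\theta\ell = J_\theta g^{T}\nabla_{\hat c}\ell$, the uniform bound $\|\nabla_{\hat c}\ell\|\le 2D$ and the $4/\lambda_t$-Lipschitzness from \cref{prop:characteristics}, an add-and-subtract split with the Jacobian constants $J$ and $\rho$, and then expectations, with your remark that $\|\hat c-\hat c'\|\le J\|\theta-\theta'\|$ needs the Jacobian bound on the whole segment $[\theta,\theta']$ being a valid refinement the paper uses without comment.
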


\begin{proof}
By the chain rule,
$\nabla_\theta \ell = J_\theta g(x;\theta)^T \nabla_{\hat c}\ell$.
\cref{prop:characteristics} gives
$\|\nabla_{\hat c}\ell\|\le 2D$ and
$\nabla_{\hat c}\ell$ is $4/\lambda_t$-Lipschitz.
Combining these with the Jacobian bound and Jacobian Lipschitzness gives
\[
\|\nabla_\theta \ell(\theta)-\nabla_\theta \ell(\theta')\|
\le
\left(\frac{4J^2}{\lambda_t}+2D\rho\right)
\|\theta-\theta'\|.
\]
Taking expectations preserves both bounds and concludes the proof.
\end{proof}

\begin{proposition}[Post-freeze nonconvex FedRSPO+]
\label{prop:nonconvex-nn}
Suppose \cref{assum:bounded-set,assum:bounded-objective,assum:convex,assum:model,assum:schedule} hold. After the annealing schedule freezes at
$\lambda_f$, define
\[
  F_f(\theta) := \frac{1}{N}\sum_{i=1}^N F_{i,f}(\theta),
  \qquad
  L_f^{\mathrm{NN}}
  := \frac{4J^2}{\lambda_f}+2D\rho .
\]
Assume unbiased stochastic gradients with variance at most $\sigma^2$ and
post-freeze gradient dissimilarity
\[
  \frac{1}{N}\sum_{i=1}^N
  \|\nabla F_{i,f}(\theta)-\nabla F_f(\theta)\|^2
  \le \zeta^2
  \quad\text{for all }\theta .
\]
Running FedRSPO+ with $E$ local SGD steps and constant post-freeze step size
$\eta_f \le c/(E L_f^{\mathrm{NN}})$ gives, for
$s\sim\mathrm{Unif}\{t_f,\ldots,T-1\}$,
\[
  \mathbb{E}\|\nabla F_f(\bar\theta_s)\|^2
  \le
  O\!\left(
    \frac{F_f(\bar\theta_{t_f})-F_f^{\inf}}
         {\eta_f E(T-t_f)}
    + L_f^{\mathrm{NN}}\eta_f\frac{\sigma^2}{N}
    + (L_f^{\mathrm{NN}})^2\eta_f^2E^2(\sigma^2+\zeta^2)
  \right).
\]
Thus, choosing $\eta_f=\Theta((T-t_f)^{-1/2})$ yields
$\mathbb{E}\|\nabla F_f(\bar\theta_s)\|^2
=O((T-t_f)^{-1/2})$, up to constants depending on
$J,\rho,D,\lambda_f,\sigma,\zeta$, and $E$.

Additionally, the co-annealed schedule
$\eta_t=\alpha\lambda_t/4$ satisfies
\[
  \eta_t L_t^{\mathrm{NN}}
  =
  \alpha J^2+\frac{\alpha D\rho\lambda_t}{2}
  \le
  \alpha\left(J^2+\frac{D\rho\lambda_{\max}}{2}\right),
\]
so choosing
$\alpha \le c/(E(J^2+D\rho\lambda_{\max}/2))$
keeps the effective step size controlled during
annealing.
\end{proposition}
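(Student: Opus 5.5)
The plan is to reduce the post-freeze phase to the standard nonconvex local-SGD (FedAvg) analysis with a fixed, time-invariant objective, and to supply the needed smoothness constant from \cref{lem:nn-smooth}. After freezing, $\lambda_t=\lambda_f$ for all $t\ge t_f$, so each $F_{i,f}$ is a fixed function. By \cref{lem:nn-smooth} it is $L_f^{\mathrm{NN}}$-smooth with gradients bounded by $2DJ$. Hence $F_f$ is also $L_f^{\mathrm{NN}}$-smooth and bounded below by $0$, since $\ell_{\mathrm{RSPO+}}\ge 0$ by \cref{prop:characteristics}. We can therefore take $F_f^{\inf}\ge 0$ and treat rounds $t_f,\dots,T-1$ as a fresh run started at $\bar\theta_{t_f}$.

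Next I would carry out the usual one-round descent argument on the averaged iterate. Write $\bar\theta_{t+1}=\bar\theta_t-\eta_f\sum_{k=0}^{E-1}\frac1N\sum_i g_i(\theta^{(i)}_{t,k})$ and apply the $L$-smoothness descent lemma. Split the inner product into $-\eta_f E\|\nabla F_f(\bar\theta_t)\|^2/2$ plus an error of the form $\frac{\eta_f}{2}\sum_k\frac1N\sum_i\|\nabla F_{i,f}(\theta^{(i)}_{t,k})-\nabla F_{i,f}(\bar\theta_t)\|^2$. Bound that error by $L^2$ times the client drift. The second-order term $\frac{L}{2}\eta_f^2\mathbb{E}\|\cdot\|^2$ splits, using unbiasedness and independence across clients, into a variance part $L\eta_f^2E\sigma^2/N$ and a mean part. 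The mean part is absorbed by the negative term once $\eta_f\le c/(EL)$.

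The main obstacle is the drift bound:
\[
\frac1N\sum_i\mathbb{E}\|\theta^{(i)}_{t,k}-\bar\theta_t\|^2\lesssim \eta_f^2E^2(\sigma^2+\zeta^2)+\eta_f^2E^2\,\mathbb{E}\|\nabla F_f(\bar\theta_t)\|^2 .
\]
I would prove it by unrolling the local recursion over $k$ steps. I would decompose each local gradient as noise, plus the heterogeneity term bounded by $\zeta$, plus the global gradient, plus a smoothness-controlled deviation. I would then close the resulting recursion using $(1+1/E)^E\le e$ under the condition $\eta_f\le c/(EL)$. This is exactly the condition that keeps the geometric factor bounded, so the constant $c$ must be chosen small enough here.

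Substituting the drift bound gives a per-round inequality $\mathbb{E}F_f(\bar\theta_{t+1})\le \mathbb{E}F_f(\bar\theta_t)-\frac{\eta_fE}{4}\mathbb{E}\|\nabla F_f(\bar\theta_t)\|^2+O(L\eta_f^2E\sigma^2/N+L^2\eta_f^3E^3(\sigma^2+\zeta^2))$. Telescoping from $t_f$ to $T-1$ and dividing by $\eta_fE(T-t_f)/4$ gives the displayed bound for the uniformly sampled index $s$. Choosing $\eta_f=\Theta((T-t_f)^{-1/2})$, which eventually satisfies the step-size cap, balances the first two terms at $O((T-t_f)^{-1/2})$, and the third term is of lower order.

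The final claim is direct algebra: $\eta_tL_t^{\mathrm{NN}}=\frac{\alpha\lambda_t}{4}\bigl(\frac{4J^2}{\lambda_t}+2D\rho\bigr)=\alpha J^2+\alpha D\rho\lambda_t/2$. This is monotone in $\lambda_t\le\lambda_{\max}$, and the stated choice of $\alpha$ makes $\eta_tL_t^{\mathrm{NN}}\le c/E$ at every annealing step.
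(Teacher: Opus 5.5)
Your proposal is correct and follows the same route as the paper. You use Lemma~\ref{lem:nn-smooth} to get $L_f^{\mathrm{NN}}$-smoothness of the frozen objective, run the standard nonconvex local-SGD/FedAvg descent argument (descent lemma, client-drift bound under $\eta_f\le c/(EL_f^{\mathrm{NN}})$, telescoping over $t_f,\dots,T-1$ with uniform sampling of $s$), and verify the co-annealing identity $\eta_tL_t^{\mathrm{NN}}=\alpha J^2+\alpha D\rho\lambda_t/2$ by direct substitution; the paper's proof sketch only cites that standard analysis, so your write-up supplies the steps it leaves implicit.
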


\begin{proof}[Proof sketch]
By Lemma~\ref{lem:nn-smooth}, after freezing $\lambda_t=\lambda_f$ the
objective $F_f$ is $L_f^{\mathrm{NN}}$-smooth, with
$L_f^{\mathrm{NN}}=4J^2/\lambda_f+2D\rho$. The stated bound then follows
by applying the standard nonconvex FedAvg descent analysis in the literature
\citep{yu2019parallel,li2019convergence} under unbiased stochastic gradients
and bounded client gradient dissimilarity. Summing the one-round descent
inequality from $t_f$ to $T-1$ and sampling
$s\sim\mathrm{Unif}\{t_f,\ldots,T-1\}$ gives the displayed stationary-point
bound. The co-annealed stability claim follows by substituting
$\eta_t=\alpha\lambda_t/4$ into $\eta_tL_t^{\mathrm{NN}}$.
\end{proof}

\begin{remark}[Scope and practical interpretation]
\label{rem:nonconvex-scope}
\cref{thm:fedrspo-convergence} is stated for the convex linear-model setting, whereas \cref{prop:nonconvex-nn} gives a post-freeze
stationary-point analogue for the nonconvex neural predictors used in our experiments. The additional constants $J$ and $\rho$ in \cref{assum:model} control, respectively, the sensitivity of the predictor to parameter perturbations and the variation of this sensitivity along the training trajectory. For smooth activations with bounded derivatives, these constants are finite on compact parameter regions. For ReLU networks, the Jacobian is only piecewise constant and may jump at activation-boundary crossings, so \cref{assum:model} only provides a trajectory-level regularity condition.

The resulting smoothness bound $L_f^{\mathrm{NN}}=4J^2/\lambda_f+2D\rho$ can be conservative when $J$ or $\rho$ is large. Therefore, the nonconvex result only adapts a standard FedAvg guarantee to the frozen RSPO+ objective, and the experiments provide empirical support for the same bias--stability mechanism beyond the convex setting. When $\rho=0$, as in the linear case, the co-annealed condition reduces to controlling the invariant $\eta_t L_t = O(\alpha J^2)$, matching the role of the convex co-annealing condition.
\end{remark}

\subsection{Unified bias--heterogeneity tradeoff}
The results in \cref{thm:main-rspo}, \cref{thm:decision-bound}, and \cref{cor:unreg-bound} can be combined into a single certificate for the
unregularized regret of the regularized decision. 
\begin{proposition}
\label{thm:bias-heterogeneity}
For client \(i\), define
\[
\widehat w_{i,\lambda}(\hat c):=\Pi_{S_i}(-\hat c/\lambda),\qquad
\mathrm{Reg}_{i,\lambda}^{\mathrm{LP}}(\hat c)
:=c_i^T \widehat w_{i,\lambda}(\hat c)
-\min_{w\in S_i}c_i^T w .
\]
Let \(\ell_i^\lambda(\hat c)\) denote the RSPO+ loss on client \(i\), and let
\(H_{ij}^\lambda(\hat c):=T_{ij}^\lambda(\hat c)+B_{ij}^\lambda(\hat c)\)
be the heterogeneity bound from Theorem~3. Then for any aggregation weights
\(q\in\Delta_N\),
\[
\mathrm{Reg}_{i,\lambda}^{\mathrm{LP}}(\hat c)
\le
\sum_{j=1}^N q_j\ell_j^\lambda(\hat c)
+
\sum_{j=1}^N q_j H_{ij}^\lambda(\hat c)
+
\frac{\lambda}{2}R_i^2 .
\]
Consequently,
\[
\sum_{i=1}^N q_i\mathrm{Reg}_{i,\lambda}^{\mathrm{LP}}(\hat c)
\le
\sum_{i=1}^N q_i\ell_i^\lambda(\hat c)
+
\frac{\lambda}{2}\sum_{i=1}^N q_iR_i^2
+
\sum_{i,j=1}^N q_iq_jH_{ij}^\lambda(\hat c).
\]
Under uniform bounds
\(\|c_i\|\le C\), \(\|\hat c\|\le B\), \(R_i\le R\), \(D_i\le D\),
\(\|c_i-c_j\|\le c_\Delta\), and \(d_H(S_i,S_j)\le \delta\), this gives
\[
\overline{\mathrm{Reg}}_{\lambda}^{\mathrm{LP}}
\le
\overline{\ell}_{\lambda}
+
O\!\left(
\lambda R^2
+
\frac{c_\Delta}{\lambda}
+
\sqrt{\frac{\delta}{\lambda}}
+
\delta
\right),
\]
for fixed bounded constants and \(0<\lambda\le 1\). Thus smaller \(\lambda\)
reduces regularization bias but amplifies objective and feasible-set
heterogeneity.
\end{proposition}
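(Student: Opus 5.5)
The plan is to chain three earlier results pointwise in $\hat c$: \cref{cor:unreg-bound} on client $i$, then \cref{thm:main-rspo} to pass from client $i$'s loss to client $j$'s loss, then averaging over $j$ with weights $q$. First, I apply \cref{cor:unreg-bound} to client $i$ with prediction $\hat c$ and parameter $\lambda$. This gives
\[
\mathrm{Reg}_{i,\lambda}^{\mathrm{LP}}(\hat c)\le \ell_i^\lambda(\hat c)+\tfrac{\lambda}{2}R_i^2 .
\]
Next, for each $j$, \cref{thm:main-rspo} applied to the pairs $(c_i,S_i)$ and $(c_j,S_j)$ yields $|\ell_i^\lambda(\hat c)-\ell_j^\lambda(\hat c)|\le H_{ij}^\lambda(\hat c)$. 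Hence $\ell_i^\lambda(\hat c)\le \ell_j^\lambda(\hat c)+H_{ij}^\lambda(\hat c)$ for every $j$. Since $q\in\Delta_N$, multiplying by $q_j$ and summing over $j$ gives $\ell_i^\lambda\le\sum_j q_j\ell_j^\lambda+\sum_j q_jH_{ij}^\lambda$. Substituting this into the first display proves the per-client inequality. I use $H_{ii}^\lambda\ge 0$ implicitly, which is harmless; I also note that the $j=i$ term is trivially valid.

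For the aggregate statement, I multiply the per-client bound by $q_i$ and sum over $i$. The term $\sum_i q_i\sum_j q_j\ell_j^\lambda$ collapses to $\sum_j q_j\ell_j^\lambda$ because $\sum_i q_i=1$. The remaining terms give $\frac{\lambda}{2}\sum_i q_iR_i^2$ and the double sum $\sum_{i,j}q_iq_jH_{ij}^\lambda$, exactly as claimed.

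For the $O(\cdot)$ form, I bound $H_{ij}^\lambda$ uniformly using \eqref{eq:T-bound} and \eqref{eq:B-bound} together with the stated uniform constants:
\[
T\le Dc_\Delta+\delta(3C+2B+2\lambda R),\qquad
B\le (2B+2\lambda R)\frac{c_\Delta}{\lambda}+\sqrt{2\delta\Big(\frac{C}{\lambda}+R\Big)} .
\]
With $0<\lambda\le1$, the term $Dc_\Delta$ is at most $Dc_\Delta/\lambda$. Likewise $2\lambda R\, c_\Delta/\lambda=2Rc_\Delta\le 2Rc_\Delta/\lambda$, and $\sqrt{C/\lambda+R}\le\sqrt{(C+R)/\lambda}$. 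Therefore $H_{ij}^\lambda=O(c_\Delta/\lambda+\sqrt{\delta/\lambda}+\delta)$, with constants depending only on $C,B,R,D$. Since $\sum_{i,j}q_iq_j=1$, the double sum obeys the same bound. The bias term is $O(\lambda R^2)$.

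The only delicate point is bookkeeping. I need to check that the constants in \cref{thm:main-rspo} (the factor $D_i$, and $c_{\min}\le C$) are uniformly bounded, and that $\lambda\le1$ is exactly what allows absorbing the $\lambda$-free and $\lambda$-positive-power terms into $c_\Delta/\lambda$ and $\delta$. No new analytic estimate is required beyond the cited results.
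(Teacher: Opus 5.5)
Your proposal is correct and follows essentially the same route as the paper: apply \cref{cor:unreg-bound} on client $i$, average the pairwise bound $\ell_i^\lambda\le \ell_j^\lambda+H_{ij}^\lambda$ from \cref{thm:main-rspo} against $q$, then average over $i$ using $\sum_i q_i=1$, and finally bound $H_{ij}^\lambda$ uniformly using $0<\lambda\le 1$. The differences are cosmetic and none affects the stated rate. The paper symmetrizes $H_{ij}^\lambda$ by taking the better of the two pivots. Following the actual proof of \cref{thm:main-rspo}, it also lets the bounded prefactor $(2\|\hat c\|+2\lambda R)$ multiply the square-root term, not only the $c_\Delta/\lambda$ term. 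Finally, your $3C$ can be tightened to $2C$.
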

The proposition can be viewed as the bias--stability tradeoff for the regularized decision
\(\widehat w_{i,\lambda}\) targeted by
the annealing and freezing rule. When the exact-regularization condition discussed
above holds, the same certificate applies to the LP decision selected at
evaluation. 

\begin{proof}
    For each client \(i\), write
\[
\ell_i^\lambda(\hat c)
:=
\ell_{\mathrm{RSPO+}}(c_i,\hat c;S_i,\lambda)
=
D_{M_\lambda \xi_{S_i}}(c_i-2\hat c,-c_i),
\]
and define the regularized decision
\[
\widehat w_{i,\lambda}(\hat c):=\Pi_{S_i}(-\hat c/\lambda).
\]
The unregularized regret of this regularized decision is
\[
\mathrm{Reg}_{i,\lambda}^{\mathrm{LP}}(\hat c)
:=
c_i^T \widehat w_{i,\lambda}(\hat c)
-
z_i^\star,
\qquad
z_i^\star:=\min_{w\in S_i}c_i^T w .
\]

For two clients \(i,j\), define
\[
c_{ij}:=\|c_i-c_j\|,
\qquad
\delta_{ij}:=d_H(S_i,S_j),
\qquad
R_{ij}:=\max\{R_i,R_j\}.
\]
Let
\[
H_{ij}^\lambda(\hat c)
:=
T_{ij}^\lambda(\hat c)+B_{ij}^\lambda(\hat c),
\]
where the two terms are the symmetric versions of the quantities in
Theorem~3:
\[
T_{ij}^\lambda(\hat c)
:=
\min\Big\{
D_i c_{ij}
+
\delta_{ij}\big(\|c_j-2\hat c\|+\|c_j\|+2\lambda R_{ij}\big),
\;
D_j c_{ij}
+
\delta_{ij}\big(\|c_i-2\hat c\|+\|c_i\|+2\lambda R_{ij}\big)
\Big\},
\]
and
\[
B_{ij}^\lambda(\hat c)
:=
(2\|\hat c\|+2\lambda R_{ij})
\left[
\frac{c_{ij}}{\lambda}
+
\min\left\{
\sqrt{2\delta_{ij}\left(\frac{\|c_j\|}{\lambda}+R_{ij}\right)},
\sqrt{2\delta_{ij}\left(\frac{\|c_i\|}{\lambda}+R_{ij}\right)}
\right\}
\right].
\]
By \cref{thm:main-rspo},
\[
|\ell_i^\lambda(\hat c)-\ell_j^\lambda(\hat c)|
\le
H_{ij}^\lambda(\hat c).
\]
Equivalently,
\[
\ell_i^\lambda(\hat c)
\le
\ell_j^\lambda(\hat c)+H_{ij}^\lambda(\hat c)
\qquad
\text{for every } j .
\]

Let \(q\in\Delta_N\) be any distribution over clients. Multiplying the
last inequality by \(q_j\) and summing over \(j\) yields
\[
\ell_i^\lambda(\hat c)
=
\sum_{j=1}^N q_j \ell_i^\lambda(\hat c)
\le
\sum_{j=1}^N q_j \ell_j^\lambda(\hat c)
+
\sum_{j=1}^N q_j H_{ij}^\lambda(\hat c).
\]
By \cref{cor:unreg-bound}, the unregularized regret of the regularized decision satisfies
\[
\mathrm{Reg}_{i,\lambda}^{\mathrm{LP}}(\hat c)
\le
\ell_i^\lambda(\hat c)+\frac{\lambda}{2}R_i^2 .
\]
Combining the previous two displays gives, for every target client \(i\),
\[
\mathrm{Reg}_{i,\lambda}^{\mathrm{LP}}(\hat c)
\le
\sum_{j=1}^N q_j\ell_j^\lambda(\hat c)
+
\sum_{j=1}^N q_jH_{ij}^\lambda(\hat c)
+
\frac{\lambda}{2}R_i^2 .
\]
This proves the client-wise statement.

For the aggregate statement, multiply the client-wise inequality by \(q_i\)
and sum over \(i\):
\[
\sum_{i=1}^N q_i\mathrm{Reg}_{i,\lambda}^{\mathrm{LP}}(\hat c)
\le
\sum_{i=1}^N q_i\sum_{j=1}^N q_j\ell_j^\lambda(\hat c)
+
\sum_{i=1}^N q_i\sum_{j=1}^N q_jH_{ij}^\lambda(\hat c)
+
\frac{\lambda}{2}\sum_{i=1}^N q_iR_i^2 .
\]
Since \(\sum_i q_i=1\), the first term simplifies:
\[
\sum_{i=1}^N q_i\sum_{j=1}^N q_j\ell_j^\lambda(\hat c)
=
\sum_{j=1}^N q_j\ell_j^\lambda(\hat c).
\]
Therefore,
\[
\sum_{i=1}^N q_i\mathrm{Reg}_{i,\lambda}^{\mathrm{LP}}(\hat c)
\le
\sum_{j=1}^N q_j\ell_j^\lambda(\hat c)
+
\frac{\lambda}{2}\sum_{i=1}^N q_iR_i^2
+
\sum_{i,j=1}^N q_iq_jH_{ij}^\lambda(\hat c).
\]

It remains only to derive the simplified asymptotic form. Suppose
\[
\|c_i\|\le C,\qquad
\|\hat c\|\le B,\qquad
R_i\le R,\qquad
D_i\le D,
\]
and for all client pairs,
\[
\|c_i-c_j\|\le c_\Delta,\qquad
d_H(S_i,S_j)\le \delta.
\]
Then \(R_{ij}\le R\), and the envelope term satisfies
\[
T_{ij}^\lambda(\hat c)
\le
D c_\Delta
+
\delta(2C+2B+2\lambda R).
\]
Similarly, the decision term satisfies
\[
B_{ij}^\lambda(\hat c)
\le
(2B+2\lambda R)
\left[
\frac{c_\Delta}{\lambda}
+
\sqrt{2\delta\left(\frac{C}{\lambda}+R\right)}
\right].
\]
Thus
\[
H_{ij}^\lambda(\hat c)
\le
D c_\Delta
+
\delta(2C+2B+2\lambda R)
+
(2B+2\lambda R)
\left[
\frac{c_\Delta}{\lambda}
+
\sqrt{2\delta\left(\frac{C}{\lambda}+R\right)}
\right].
\]
Substituting this bound into the aggregate inequality and using
\(\sum_i q_iR_i^2\le R^2\) gives
\[
\overline{\mathrm{Reg}}_{\lambda}^{\mathrm{LP}}
\le
\overline{\ell}_{\lambda}
+
\frac{\lambda}{2}R^2
+
D c_\Delta
+
\delta(2C+2B+2\lambda R)
+
(2B+2\lambda R)
\left[
\frac{c_\Delta}{\lambda}
+
\sqrt{2\delta\left(\frac{C}{\lambda}+R\right)}
\right].
\]
For fixed bounded \(B,C,D,R\) and \(0<\lambda\le 1\), this is summarized as
\[
\overline{\mathrm{Reg}}_{\lambda}^{\mathrm{LP}}
\le
\overline{\ell}_{\lambda}
+
O\!\left(
\lambda R^2
+
\frac{c_\Delta}{\lambda}
+
\sqrt{\frac{\delta}{\lambda}}
+
\delta
\right).
\]
\end{proof}

\begin{remark}[Connection to LP evaluation.]
\cref{thm:bias-heterogeneity} bounds the LP regret of the regularized decision
\(\widehat w_{i,\lambda}(\hat c)\). If, for a given client \(i\), prediction
\(\hat c\), and regularization level \(\lambda\), the exact-regularization
condition holds so that
\[
\Pi_{S_i}(-\hat c/\lambda)
\in
\arg\min_{w\in S_i}\hat c^T w,
\]
then the same bound applies to the LP decision selected at evaluation. For
polyhedral feasible sets this condition holds for sufficiently small
\(\lambda\) \citep{friedlander2008exact,gonzalez2025quantitative}. Without such a
condition, \cref{thm:bias-heterogeneity} should be interpreted as a certificate for the
regularized decision and as a surrogate-level justification for LP evaluation.
\end{remark}
% The proof, given in \cref{appsec:convergence}, follows a standard FedAvg analysis~\citep{mcmahan2017communication}, with the critical difference that the client-drift penalty $4L_t E^2\eta_t^3 G^2$ would diverge as $L_t = 4X^2/\lambda_t \to \infty$, but co-annealing compresses it to $4\gamma E^2\eta_t^2 G^2$ via the invariant $\eta_t L_t = \gamma$, yielding a stable $O(\sum\eta_t^2)$ bound that vanishes over time. Once the stopping condition (\cref{eq:stopping-condition}) freezes $\lambda_t$, $\Delta_t=0$ for all remaining rounds, and \cref{eq:convergence-bound} reduces to a standard convex federated rate.

\section{Proofs and Derivations}
\label{appsec:proofs}
% Throughout this section we use $\|\cdot\| = \|\cdot\|_2$ and write
% $F_{\lambda,S}(u) := M_{\lambda\xi_S}(u)$ for the Moreau envelope of the
% support function $\xi_S$ with smoothing parameter~$\lambda$.
 
\subsection*{Proof of \cref{prop:spo-bregman}: SPO+ as a Bregman Divergence}
 
\begin{proof}
Recall the SPO+ loss from \cref{sec:method}:
\[
    \ell_{\mathrm{SPO+}}(c,\hat c)
    = \xi_S(c-2\hat c) - 2 w^{\star}(c)^T(c-\hat c) + z^{\star}(c).
\]
Since $w^{\star}(c) \in \arg\min_{w\in S}c^Tw =  \arg\max_{w\in S}\, (-c)^T w = \partial \xi_S(-c)$ and $z^{\star}(c) =
\min_{w\in S} c^T w = -\xi_S(-c)$, and moreover if the optimizer is unique
$\nabla\xi_S(-c) = w^{\star}(c)$
(by Danskin's theorem, as $S$ is compact),
we can write the linearization term as
\[
    -2 w^{\star}(c)^T(c-\hat c) + z^{\star}(c)
    = -\xi_S(-c) - \nabla\xi_S(-c)^T\bigl[(c-2\hat c) - (-c)\bigr].
\]
Substituting into the loss yields
\[
    \ell_{\mathrm{SPO+}}(c,\hat c)
    = \xi_S(c-2\hat c) - \xi_S(-c) - \nabla\xi_S(-c)^T\bigl[(c-2\hat c)-(-c)\bigr]
    = D_{\xi_S}(c-2\hat c,\, -c),
\]
which is the Bregman divergence of the support function evaluated at $x = c - 2\hat c$, $y = -c$.
\end{proof}
 
\subsection*{Proof of \cref{prop:rspo}: Smoothed SPO+ Loss}
 
\begin{proof}
\textbf{Step 1: Moreau envelope of the support function.}
By convex duality, the conjugate of $\xi_S$ is the indicator function $I_S$
of the set $S$.  The Moreau envelope can therefore be written as the
biconjugate of the regularized indicator:
\[
    F_{\lambda,S}(u)
    = M_{\lambda\xi_S}(u)
    = \Bigl(I_S + \tfrac{\lambda}{2}\|\cdot\|^2\Bigr)^{*}(u)
    = \sup_{s\in S}\Bigl\{\langle u, s\rangle - \tfrac{\lambda}{2}\|s\|^2\Bigr\}.
\]
Completing the square inside the supremum gives
\[
    \langle u,s\rangle - \tfrac{\lambda}{2}\|s\|^2
    = \tfrac{1}{2\lambda}\|u\|^2 - \tfrac{\lambda}{2}\bigl\|s - u/\lambda\bigr\|^2.
\]
The first term is independent of $s$, so taking the supremum over $s\in S$
is equivalent to minimizing $\|s - u/\lambda\|^2$ over $S$:
\begin{equation}\label{eq:moreau-closed}
    F_{\lambda,S}(u)
    = \tfrac{1}{2\lambda}\|u\|^2
      - \tfrac{\lambda}{2}\,d_S^2\!\bigl(u/\lambda\bigr),
\end{equation}
where $d_S^2(v) = \inf_{s\in S}\|v - s\|^2$.
The supremum is attained at $s^{\star} = \Pi_S(u/\lambda)$, and Danskin's
theorem yields
\begin{equation}\label{eq:moreau-grad}
    \nabla F_{\lambda,S}(u) = \Pi_S(u/\lambda).
\end{equation}
 
\textbf{Step 2: Bregman divergence.}
We evaluate $D_{F_{\lambda,S}}(x,y) = F_{\lambda,S}(x) - F_{\lambda,S}(y) -
\langle \nabla F_{\lambda,S}(y),\, x - y\rangle$ with $x = c - 2\hat c$ and
$y = -c$.  Write $\bar v = \Pi_S(y/\lambda) = \Pi_S(-c/\lambda)$.
Expanding the $y$-dependent terms using \eqref{eq:moreau-closed} and
\eqref{eq:moreau-grad}:
\begin{align*}
    F_{\lambda,S}(y) + \langle \nabla F_{\lambda,S}(y),\, x - y\rangle
    &= \tfrac{1}{2\lambda}\|y\|^2
       - \tfrac{\lambda}{2}\|y/\lambda - \bar v\|^2
       + \langle \bar v,\, x - y\rangle \\
    &= \langle y,\bar v\rangle - \tfrac{\lambda}{2}\|\bar v\|^2
       + \langle \bar v,\, x - y\rangle \\
    &= \langle x,\bar v\rangle - \tfrac{\lambda}{2}\|\bar v\|^2,
\end{align*}
where the second equality uses
$\tfrac{1}{2\lambda}\|y\|^2 - \tfrac{\lambda}{2}\|y/\lambda - \bar v\|^2
 = \langle y,\bar v\rangle - \tfrac{\lambda}{2}\|\bar v\|^2$,
which follows from expanding the squared norm.
Subtracting from $F_{\lambda,S}(x)$:
\begin{align*}
    D_{F_{\lambda,S}}(x,y)
    &= \tfrac{1}{2\lambda}\|x\|^2
       - \tfrac{\lambda}{2}\,d_S^2(x/\lambda)
       - \langle x,\bar v\rangle
       + \tfrac{\lambda}{2}\|\bar v\|^2 \\
    &= \tfrac{\lambda}{2}\bigl\|x/\lambda - \bar v\bigr\|^2
       - \tfrac{\lambda}{2}\,d_S^2(x/\lambda).
\end{align*}
Setting $u = (c - 2\hat c)/\lambda$ and $v = -c/\lambda$,
noting $\bar v = \Pi_S(v)$ and $d_S^2(u) = \|u - \Pi_S(u)\|^2$, we obtain
\[
    \ell_{\mathrm{RSPO+}}(c,\hat c;\lambda)
    = D_{F_{\lambda,S}}(c-2\hat c,\, -c)
    = \tfrac{\lambda}{2}\bigl\|u - \Pi_S(v)\bigr\|^2
      - \tfrac{\lambda}{2}\bigl\|u - \Pi_S(u)\bigr\|^2.
\]
\end{proof}
 
\begin{remark}[QLP equivalence, Remark 3.1]
An equivalent derivation applies the original SPO+ construction of
\citep{elmachtoub2022smart} to the quadratically regularized LP
$z_{\mathrm{QLP}} = \min_{w\in S}\, c^T w + \tfrac{\lambda}{2}\|w\|^2$
from Eq.~(3).  The QLP optimizer is $w^{\star}_{\mathrm{reg}} = \Pi_S(-c/\lambda)$
and the optimal value is
$z^{\star}_{\mathrm{QLP}} = -F_{\lambda,S}(-c)$.
Substituting into the SPO+ template and simplifying reproduces exactly the
loss in Proposition~3.2.
\end{remark}
 
\subsection*{Proof of \cref{cor:rspo-gradient}: Gradient of RSPO+}
 
\begin{proof}
From Proposition~3.2,
$\ell_{\mathrm{RSPO+}}(c,\hat c;\lambda) = D_{F_{\lambda,S}}(x,y)$
with $x(\hat c) = c - 2\hat c$ and $y = -c$ (independent of $\hat c$).
The gradient of the Bregman divergence with respect to its first argument is
$\nabla_x D_{F}(x,y) = \nabla F(x) - \nabla F(y)$.
By the chain rule, $\nabla_{\hat c}\, x = -2I$, so
\[
    \nabla_{\hat c}\,\ell_{\mathrm{RSPO+}}(c,\hat c;\lambda)
    = -2\bigl(\nabla F_{\lambda,S}(c - 2\hat c) - \nabla F_{\lambda,S}(-c)\bigr).
\]
Substituting $\nabla F_{\lambda,S}(u) = \Pi_S(u/\lambda)$ from
\eqref{eq:moreau-grad} yields
\[
    \nabla_{\hat c}\,\ell_{\mathrm{RSPO+}}(c,\hat c;\lambda)
    = -2\Bigl(\Pi_S\!\bigl(\tfrac{c-2\hat c}{\lambda}\bigr)
              - \Pi_S\!\bigl(-\tfrac{c}{\lambda}\bigr)\Bigr).
\]

\end{proof}

\subsection*{Proof of \cref{prop:characteristics}: Characteristics of RSPO+}
 
\begin{proof}
We prove each property in turn.
 
\paragraph{(i) Non-negativity.}
Non-negativity is immediate since $\ell_{\mathrm{RSPO+}}$ is a Bregman divergence of the convex function $F_{\lambda,S}$.
 
\paragraph{(ii) Convexity in $\hat c$.}
The Moreau envelope $F_{\lambda,S}$ is convex (as the Moreau envelope of a
convex function).  A Bregman divergence $D_F(x,y)$ is convex in $x$ for any
convex $F$, since $D_F(x,y) = F(x) - F(y) - \langle\nabla F(y),\,x-y\rangle$
and the last two terms are affine in $x$.  Because $x = c - 2\hat c$ is an
affine mapping of $\hat c$, the composition
$\ell_{\mathrm{RSPO+}}(c,\hat c;\lambda) = D_{F_{\lambda,S}}(c-2\hat c,\, -c)$
is globally convex with respect to $\hat c$.
 
\paragraph{(iii) $2D$-Lipschitz continuity.}
By Corollary~3.1, for any $(c,\hat c)$,
\[
    \|\nabla_{\hat c}\,\ell_{\mathrm{RSPO+}}\|
    = 2\bigl\|\Pi_S(u) - \Pi_S(v)\bigr\|
    \le 2\sup_{a,b\in S}\|a - b\|
    = 2D,
\]
since both projections land in $S$.  A function whose gradient norm is
uniformly bounded by $L$ is $L$-Lipschitz.
 
\paragraph{(iv) $4/\lambda$-smoothness and $\lambda/4$-cocoercivity.}
The Euclidean projection $\Pi_S$ onto a closed convex set is
\emph{firmly non-expansive}:
\[
    \langle \Pi_S(a) - \Pi_S(b),\, a - b\rangle
    \ge \|\Pi_S(a) - \Pi_S(b)\|^2
    \quad\forall\, a,b.
\]
Let $\hat c_1, \hat c_2$ be two prediction vectors and define
$x_k = (c - 2\hat c_k)/\lambda$ for $k=1,2$.
By Corollary~3.1,
$\nabla_{\hat c}\,\ell(\hat c_k) = -2(\Pi_S(x_k) - \Pi_S(v))$
where $v = -c/\lambda$ is common.  Thus
\[
    \nabla\ell(\hat c_1) - \nabla\ell(\hat c_2)
    = -2\bigl(\Pi_S(x_1) - \Pi_S(x_2)\bigr),
\]
and
$x_1 - x_2 = -2(\hat c_1 - \hat c_2)/\lambda$.
Evaluating the inner product:
\begin{align*}
    \langle \nabla\ell(\hat c_1) - \nabla\ell(\hat c_2),\,
            \hat c_1 - \hat c_2\rangle
    &= -2\langle \Pi_S(x_1)-\Pi_S(x_2),\,\hat c_1-\hat c_2\rangle \\
    &= \lambda\langle \Pi_S(x_1)-\Pi_S(x_2),\, x_1-x_2\rangle \\
    &\ge \lambda\,\|\Pi_S(x_1)-\Pi_S(x_2)\|^2 \\
    &= \tfrac{\lambda}{4}\,\|\nabla\ell(\hat c_1)-\nabla\ell(\hat c_2)\|^2,
\end{align*}
where the inequality uses firm non-expansiveness and the last equality uses
$\|\nabla\ell(\hat c_1)-\nabla\ell(\hat c_2)\| = 2\|\Pi_S(x_1)-\Pi_S(x_2)\|$.
This establishes $\lambda/4$-cocoercivity.  By the Baillon--Haddad theorem \citep{bauschke2017convex}, a
convex function with a $\beta$-cocoercive gradient is $1/\beta$-smooth, so
$\ell_{\mathrm{RSPO+}}$ is $4/\lambda$-smooth.
\end{proof}
 
\subsection*{Proof of \cref{thm:decision-bound}: Regret and Decision-Error Bounds}
 
\begin{proof}
Define the regularized objective $f(c,w) = c^T w + \tfrac{\lambda}{2}\|w\|^2$
and let $w^{\star} = \Pi_S(-c/\lambda)$, $\hat w = \Pi_S(-\hat c/\lambda)$
denote the optimal and predicted regularized decisions, respectively.
The regularized regret is
$R(c,\hat w) = f(c,\hat w) - f(c,w^{\star})$.
 
With $u = (c - 2\hat c)/\lambda$, Proposition~3.2 gives
\[
    \ell_{\mathrm{RSPO+}}
    = \tfrac{\lambda}{2}\|u - w^{\star}\|^2
      - \tfrac{\lambda}{2}\,d_S^2(u).
\]
Since $\hat w \in S$, the point $\hat w$ is feasible for the distance
minimization, so $d_S^2(u) \le \|u - \hat w\|^2$.  Therefore
\begin{equation}\label{eq:surrogate-lb}
    \ell_{\mathrm{RSPO+}}
    \ge \tfrac{\lambda}{2}\|u - w^{\star}\|^2
       - \tfrac{\lambda}{2}\|u - \hat w\|^2.
\end{equation}
Expanding the difference of squares:
\begin{align*}
    \tfrac{\lambda}{2}\|u - w^{\star}\|^2 - \tfrac{\lambda}{2}\|u - \hat w\|^2
    &= \lambda\langle u,\, \hat w - w^{\star}\rangle
       + \tfrac{\lambda}{2}\|w^{\star}\|^2
       - \tfrac{\lambda}{2}\|\hat w\|^2 \\
    &= \langle c - 2\hat c,\, \hat w - w^{\star}\rangle
       + \tfrac{\lambda}{2}\|w^{\star}\|^2
       - \tfrac{\lambda}{2}\|\hat w\|^2.
\end{align*}
Rearranging the linear terms:
\begin{align*}
    &= \underbrace{\bigl(c^T \hat w + \tfrac{\lambda}{2}\|\hat w\|^2\bigr)
       - \bigl(c^T w^{\star} + \tfrac{\lambda}{2}\|w^{\star}\|^2\bigr)
       }_{R(c,\,\hat w)}
       + \lambda\|w^{\star}\|^2 - \lambda\|\hat w\|^2
       - 2\hat c^T(\hat w - w^{\star}) \\
    &= R(c,\hat w)
       + 2\Bigl[\bigl(\hat c^T w^{\star} + \tfrac{\lambda}{2}\|w^{\star}\|^2\bigr)
                - \bigl(\hat c^T \hat w + \tfrac{\lambda}{2}\|\hat w\|^2\bigr)\Bigr] \\
    &= R(c,\hat w) + 2\bigl[f(\hat c,w^{\star}) - f(\hat c,\hat w)\bigr].
\end{align*}
Since $f(\hat c,\cdot)$ is $\lambda$-strongly convex in $w$ and is minimized
over $S$ at $\hat w = \Pi_S(-\hat c/\lambda)$, the suboptimality gap satisfies
\[
    f(\hat c, w^{\star}) - f(\hat c, \hat w)
    \ge \tfrac{\lambda}{2}\|w^{\star} - \hat w\|^2.
\]
Combining with \eqref{eq:surrogate-lb} yields \cref{eq:rspo-regret-bound}:
\[
    \ell_{\mathrm{RSPO+}}(c,\hat c;\lambda)
    \ge R(c,\hat w) + \lambda\|\hat w - w^{\star}\|^2.
\]
Since $R(c,\hat w) \ge 0$ (by optimality of $w^{\star}$), the decision-error
bound \cref{eq:rspo-decision-bound} follows immediately:
\[
    \|\hat w - w^{\star}\|
    \le \sqrt{\frac{\ell_{\mathrm{RSPO+}}(c,\hat c;\lambda)}{\lambda}}.
\]

\end{proof}

\subsection*{Proof of \cref{cor:unreg-bound}: Unregularized Regret Bound for RSPO+}
\begin{proof}
By definition, the regularized regret from \cref{thm:decision-bound} expands as $R(c, \hat{w}) = \big(c^T \hat{w} + \frac{\lambda}{2}\|\hat{w}\|^2\big) - \big(c^T w^{\star}_{\mathrm{reg}} + \frac{\lambda}{2}\|w^{\star}_{\mathrm{reg}}\|^2\big)$. Rearranging to isolate the unregularized cost of the prediction gives:
\[
    c^T \hat{w} = R(c, \hat{w}) + c^T w^{\star}_{\mathrm{reg}} + \frac{\lambda}{2}\|w^{\star}_{\mathrm{reg}}\|^2 - \frac{\lambda}{2}\|\hat{w}\|^2.
\]
Because $w^{\star}_{\mathrm{reg}}$ minimizes the regularized objective over $S$, its regularized objective is lower than that of the true unregularized optimum $w^\star_{\mathrm{LP}} \in S$:
\[
    c^T w^{\star}_{\mathrm{reg}} + \frac{\lambda}{2}\|w^{\star}_{\mathrm{reg}}\|^2 \leq c^T w^\star_{\mathrm{LP}} + \frac{\lambda}{2}\|w^\star_{\mathrm{LP}}\|^2 = z^\star + \frac{\lambda}{2}\|w^\star_{\mathrm{LP}}\|^2.
\]
Substituting this upper bound into our expression for $c^T \hat{w}$ and subtracting $z^\star$ from both sides:
\[
    R_{\mathrm{unreg}}(c, \hat{w}) \leq R(c, \hat{w}) + \frac{\lambda}{2}\|w^\star_{\mathrm{LP}}\|^2 - \frac{\lambda}{2}\|\hat{w}\|^2.
\]
Substituting $R(c, \hat{w}) \leq \ell_{\mathrm{RSPO+}}(c, \hat{c}; \lambda) - \lambda\|\hat{w} - w^{\star}_{\mathrm{reg}}\|^2$ from \cref{thm:decision-bound} yields:
\[
    R_{\mathrm{unreg}}(c, \hat{w}) \leq \ell_{\mathrm{RSPO+}} + \frac{\lambda}{2}\|w^\star_{\mathrm{LP}}\|^2 - \lambda\|\hat{w} - w^\star_{\mathrm{reg}}\|^2 - \frac{\lambda}{2}\|\hat{w}\|^2.
\]
To maximize the negative quadratic components, let $x = \hat{w} - w^\star_{\mathrm{reg}}$, so $\|\hat{w}\|^2 = \|x + w^\star_{\mathrm{reg}}\|^2$. The $x$-dependent terms become $-\lambda\|x\|^2 - \frac{\lambda}{2}\|x + w^\star_{\mathrm{reg}}\|^2 = -\frac{3\lambda}{2}\|x\|^2 - \lambda x^T w^\star_{\mathrm{reg}} - \frac{\lambda}{2}\|w^\star_{\mathrm{reg}}\|^2$. This unconstrained concave quadratic is strictly maximized at $x = -w^\star_{\mathrm{reg}}/3$, yielding a maximum value of $-\frac{\lambda}{3}\|w^\star_{\mathrm{reg}}\|^2$. 
Substituting this maximum gives us,
\begin{equation}
    \label{eq:unreg-bound-tight}
    R_{\mathrm{unreg}}(c, \hat{w}) \leq \ell_{\mathrm{RSPO+}}(c, \hat{c}; \lambda) + \frac{\lambda}{2}\|w^\star_{\mathrm{LP}}\|^2 - \frac{\lambda}{3}\|w^\star_{\mathrm{reg}}\|^2.
\end{equation}
Since $\|w^\star_{\mathrm{reg}}\|^2 \ge 0$ and $\|w^\star_{\mathrm{LP}}\| \le R$, this yields the proof.
\end{proof}
\subsection*{Proof of \cref{prop:lambda-bound}: RSPO+ Stability under Perturbation of $\lambda$}
 
\begin{proof}
We prove the two bounds separately.
 
\paragraph{Gradient bound \cref{eq:rspo-grad-lambda-bound}}
By \cref{cor:rspo-gradient},
$\nabla_{\hat c}\,\ell_{\mathrm{RSPO+}}(\lambda)
 = -2\bigl(\Pi_S(u_\lambda) - \Pi_S(v_\lambda)\bigr)$,
where $u_\lambda = (c - 2\hat c)/\lambda$ and $v_\lambda = -c/\lambda$.
For two parameters $\lambda_1, \lambda_2 > 0$:
\begin{align*}
    \|\nabla\ell(\lambda_1) - \nabla\ell(\lambda_2)\|
    &\le 2\bigl(\|\Pi_S(u_{\lambda_1}) - \Pi_S(u_{\lambda_2})\|
               + \|\Pi_S(v_{\lambda_1}) - \Pi_S(v_{\lambda_2})\|\bigr),
\end{align*}
by the triangle inequality.
Since $\Pi_S$ is $1$-Lipschitz (non-expansive),
\[
    \|\Pi_S(u_{\lambda_1}) - \Pi_S(u_{\lambda_2})\|
    \le \|u_{\lambda_1} - u_{\lambda_2}\|
    = \|c - 2\hat c\|\,\bigl|\tfrac{1}{\lambda_1} - \tfrac{1}{\lambda_2}\bigr|,
\]
and similarly
$\|\Pi_S(v_{\lambda_1}) - \Pi_S(v_{\lambda_2})\|
 \le \|c\|\,|\frac{1}{\lambda_1} - \frac{1}{\lambda_2}|$.
Combining gives
\[
    \|\nabla\ell(\lambda_1) - \nabla\ell(\lambda_2)\|
    \le 2\bigl(\|c - 2\hat c\| + \|c\|\bigr)
        \Bigl|\frac{1}{\lambda_1} - \frac{1}{\lambda_2}\Bigr|.
\]
 
\paragraph{Loss bound \cref{eq:rspo-loss-lambda-bound}}
Write the RSPO+ loss via its Bregman divergence representation:
\[
    \ell(\lambda)
    = F_{\lambda,S}(c - 2\hat c)
      - F_{\lambda,S}(-c)
      - \langle \Pi_S(-c/\lambda),\, 2c - 2\hat c\rangle.
\]
We bound the variation of each component.
 
\emph{Envelope terms.}  Using the supremum form
$F_{\lambda,S}(u) = \sup_{s\in S}\{\langle u,s\rangle - \tfrac{\lambda}{2}\|s\|^2\}$,
for any fixed $u$ and $s\in S$:
\[
    \bigl|\langle u,s\rangle - \tfrac{\lambda_1}{2}\|s\|^2\bigr|
    - \bigl|\langle u,s\rangle - \tfrac{\lambda_2}{2}\|s\|^2\bigr|
    \le \tfrac{|\lambda_1 - \lambda_2|}{2}\|s\|^2
    \le \tfrac{R^2}{2}|\lambda_1 - \lambda_2|.
\]
Taking suprema over $s\in S$ on both sides yields
$|F_{\lambda_1,S}(u) - F_{\lambda_2,S}(u)| \le \tfrac{R^2}{2}|\lambda_1 - \lambda_2|$.
Applying this to $u = c - 2\hat c$ and $u = -c$ and summing:
\[
    \bigl|F_{\lambda_1,S}(c-2\hat c) - F_{\lambda_2,S}(c-2\hat c)\bigr|
    + \bigl|F_{\lambda_1,S}(-c) - F_{\lambda_2,S}(-c)\bigr|
    \le R^2|\lambda_1 - \lambda_2|.
\]
 
\emph{Linearization term.}  The variation in the inner-product term is
\begin{align*}
    \bigl|\langle \Pi_S(-c/\lambda_1) - \Pi_S(-c/\lambda_2),\, 2c-2\hat c\rangle\bigr|
    &\le 2\|c - \hat c\|\,\|\Pi_S(-c/\lambda_1) - \Pi_S(-c/\lambda_2)\| \\
    &\le 2\|c - \hat c\|\cdot\|c\|\,\bigl|\tfrac{1}{\lambda_1} - \tfrac{1}{\lambda_2}\bigr|,
\end{align*}
by non-expansiveness of $\Pi_S$.
Combining the envelope and linearization bounds by the triangle inequality:
\[
    |\ell_{\mathrm{RSPO+}}(\lambda_1) - \ell_{\mathrm{RSPO+}}(\lambda_2)|
    \le R^2|\lambda_1 - \lambda_2|
       + 2\|c\|\,\|c - \hat c\|\,
         \Bigl|\frac{1}{\lambda_1} - \frac{1}{\lambda_2}\Bigr|.
\]
\end{proof}

\subsection{Proof of \texorpdfstring{\cref{rem:mse-spo}:}:MSE-SPO+ homotopy}
\begin{proof}
Since $0\in\operatorname{int}(S)$, there exists $r>0$ such that
$B(0,r):=\{x\in\mathbb{R}^d:\|x\|<r\}\subseteq S$.
For any $p$ with $\|p\|<r$ we have $p\in S$, and since $S$ is convex and closed
the projection satisfies $\Pi_S(p)=p$.
Set $u=\tfrac{c-2\hat c}{\lambda}$ and $v=-\tfrac{c}{\lambda}$.
Under the boundedness assumptions,
\[
  \|u\| = \frac{\|c-2\hat c\|}{\lambda}
       \le \frac{C_{\max}+2B}{\lambda}
       \le \frac{C_{\max}+2B}{\bar\lambda}
       = r,
\]
and
\[
  \|v\| = \frac{\|c\|}{\lambda}
       \le \frac{C_{\max}}{\lambda}
       \le \frac{C_{\max}}{\bar\lambda}
       < r.
\]
Hence $u,v\in S$, so $\Pi_S(u)=u$ and $\Pi_S(v)=v$.
From \cref{prop:rspo}, the RSPO+ loss is
\[
  \ell_{\mathrm{RSPO+}}(c,\hat c;\lambda)
  = \frac{\lambda}{2}\bigl(\|u-\Pi_S(v)\|^2 - \|u-\Pi_S(u)\|^2\bigr).
\]
Substituting $\Pi_S(u)=u$ and $\Pi_S(v)=v$ yields
\[
  \ell_{\mathrm{RSPO+}}
  = \frac{\lambda}{2}\bigl(\|u-v\|^2 - 0\bigr)
  = \frac{\lambda}{2}\left\|\frac{c-2\hat c}{\lambda}+\frac{c}{\lambda}\right\|^2
  = \frac{\lambda}{2}\cdot\frac{4\|c-\hat c\|^2}{\lambda^2}
  = \frac{2}{\lambda}\|c-\hat c\|^2.
\]
From \cref{cor:rspo-gradient},
$\nabla_{\hat c}\,\ell_{\mathrm{RSPO+}} = -2\bigl(\Pi_S(u)-\Pi_S(v)\bigr)$.
Substituting the identity projections:
\[
  \nabla_{\hat c}\,\ell_{\mathrm{RSPO+}}
  = -2(u-v)
  = -2\left(\frac{c-2\hat c}{\lambda}+\frac{c}{\lambda}\right)
  = -\frac{4}{\lambda}(c-\hat c).
\]
This is exactly $\nabla_{\hat c}\bigl[\tfrac{2}{\lambda}\|c-\hat c\|^2\bigr]$.
\end{proof}
\subsection{Proof of \texorpdfstring{\cref{thm:main-rspo}}:: RSPO+ Heterogeneity Bound}
\label{app:proof-thm-rspo}

We prove Theorem~\ref{thm:main-rspo} through three supporting lemmas and an algebraic decomposition of the RSPO+ loss. Throughout, we use $\|\cdot\| = \|\cdot\|$ and write $v^\star_{\lambda, S}(c) := \Pi_S(-c/\lambda)$ for the optimal regularized decision.

\paragraph{Loss decomposition.}
Recall from Proposition~\ref{prop:rspo} that the RSPO+ loss is the generalized Bregman divergence of the Moreau envelope of the support function:
\begin{equation}
    \label{eq:app-rspo-bregman}
    \ell_{S}(\hat{c}, c) \;=\; F_{\lambda, S}(c - 2\hat{c}) \;-\; F_{\lambda, S}(-c) \;-\; \bigl\langle \nabla F_{\lambda, S}(-c),\; (c - 2\hat{c}) - (-c) \bigr\rangle.
\end{equation}
Since the Moreau envelope of the support function satisfies $\nabla F_{\lambda, S}(-c) = \Pi_S(-c/\lambda) = v^\star_{\lambda,S}(c)$ and evaluates to
$F_{\lambda,S}(-c) = \langle -c, v^\star_{\lambda,S}(c)\rangle - \tfrac{\lambda}{2}\|v^\star_{\lambda,S}(c)\|^2$,
we expand the linearization term as
\begin{align*}
    -\bigl\langle v^\star_{\lambda,S}(c),\; 2c - 2\hat{c}\bigr\rangle
    &= -2c^T v^\star_{\lambda,S}(c) + 2\hat{c}^T v^\star_{\lambda,S}(c) \\
    &= 2\Bigl(F_{\lambda,S}(-c) + \tfrac{\lambda}{2}\|v^\star_{\lambda,S}(c)\|^2\Bigr) + 2\hat{c}^T v^\star_{\lambda,S}(c).
\end{align*}
Substituting back into~\eqref{eq:app-rspo-bregman} yields
\begin{equation}
    \label{eq:app-rspo-split}
    \ell_{S}(\hat{c}, c) \;=\;
    \underbrace{F_{\lambda,S}(c - 2\hat{c}) + F_{\lambda,S}(-c)}_{\displaystyle M_{\lambda,S}(\hat{c},\, c)}
    \;+\;
    \underbrace{2\hat{c}^T v^\star_{\lambda,S}(c) + \lambda\|v^\star_{\lambda,S}(c)\|^2}_{\displaystyle B_{\lambda,S}(\hat{c},\, c)}.
\end{equation}
This decomposition mirrors the SPO+ structure and allows us to bound the envelope term~$M$ and the decision term~$B$ independently under perturbations to~$c$ and~$S$.

\paragraph{Hausdorff distance.} Unlike SPO+, the origin-centered quadratic penalty $\frac{\lambda}{2}\|w\|^2$ in the Moreau envelope breaks translation invariance. We therefore measure set perturbations via the standard Hausdorff distance $\delta := d_H(S_{\mathrm{ref}}, S_{\mathrm{cl}})$, and define the joint bounding radius $R_{\max} := \max\{R(S_{\mathrm{ref}}),\, R(S_{\mathrm{cl}})\}$.

\begin{lemma}[Sensitivity of the Moreau envelope to set perturbations]
    \label{lem:app-moreau-set}
    Let $S_1, S_2 \subseteq \mathbb{R}^d$ be nonempty compact convex sets with $\max\{R(S_1), R(S_2)\} \le R_{\max}$ and $\delta = d_H(S_1, S_2)$. Then for any $u \in \mathbb{R}^d$:
    \[
        \bigl|F_{\lambda, S_1}(u) - F_{\lambda, S_2}(u)\bigr| \;\le\; \delta\bigl(\|u\| + \lambda R_{\max}\bigr).
    \]
\end{lemma}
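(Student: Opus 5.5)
We will work directly with the supremum representation of the Moreau envelope established in Step 1 of the proof of \cref{prop:rspo}, namely
\[
F_{\lambda,S}(u) = \sup_{s\in S}\phi_u(s), \qquad \phi_u(s) := \langle u,s\rangle - \tfrac{\lambda}{2}\|s\|^2 .
\]
The plan is to show that $\phi_u$ is Lipschitz on the ball of radius $R_{\max}$, and then transport near-maximizers between $S_1$ and $S_2$ using the Hausdorff distance. By symmetry it suffices to prove the one-sided bound $F_{\lambda,S_1}(u) - F_{\lambda,S_2}(u) \le \delta(\|u\| + \lambda R_{\max})$.

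The first step is the Lipschitz estimate. For $s,s'$ with $\|s\|,\|s'\|\le R_{\max}$ we have
\[
|\phi_u(s)-\phi_u(s')| \le |\langle u, s-s'\rangle| + \tfrac{\lambda}{2}\bigl|\|s\|^2-\|s'\|^2\bigr| .
\]
The first term is at most $\|u\|\,\|s-s'\|$ by Cauchy--Schwarz. For the second term we use the factorization $\|s\|^2-\|s'\|^2 = \langle s+s', s-s'\rangle$, which gives
\[
\tfrac{\lambda}{2}\bigl|\|s\|^2-\|s'\|^2\bigr| \le \tfrac{\lambda}{2}\cdot 2R_{\max}\|s-s'\| = \lambda R_{\max}\|s-s'\| .
\]
Altogether $\phi_u$ is $(\|u\|+\lambda R_{\max})$-Lipschitz on that ball.

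The second step is the transport argument. By \cref{assum:bounded-set} and continuity of $\phi_u$, the supremum over $S_1$ is attained at some $s_1\in S_1$. Since $S_2$ is compact and $d_H(S_1,S_2)=\delta$, there exists $s_2\in S_2$ with $\|s_1-s_2\|\le\delta$. Then
\[
F_{\lambda,S_1}(u) - F_{\lambda,S_2}(u) \le \phi_u(s_1) - \phi_u(s_2) \le (\|u\|+\lambda R_{\max})\,\delta .
\]
Swapping the roles of $S_1$ and $S_2$ gives the reverse inequality, which completes the absolute-value bound.

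The main point requiring care is that the Lipschitz constant of the quadratic term must be taken over a region containing both $s_1$ and $s_2$. This is guaranteed because $s_1\in S_1$ and $s_2\in S_2$, so both have norm at most $R_{\max}$ by the definition of the joint bounding radius. If one does not want to rely on attainment of the supremum, the same argument goes through with an $\varepsilon$-maximizer followed by letting $\varepsilon\to 0$.
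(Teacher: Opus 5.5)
Your proposal is correct and follows essentially the same route as the paper: take the maximizer of $\phi_u$ over $S_1$, use the Hausdorff distance to pick a point of $S_2$ within $\delta$, bound the difference via $\|s\|^2-\|s'\|^2=\langle s+s',s-s'\rangle$ with both points in the $R_{\max}$-ball, and then swap the roles of $S_1$ and $S_2$. The only cosmetic difference is that you isolate the $(\|u\|+\lambda R_{\max})$-Lipschitz estimate as a separate step. The paper instead merges the linear and quadratic terms into the single inner product $\langle u-\tfrac{\lambda}{2}(w_1+\tilde w_2),\, w_1-\tilde w_2\rangle$ before applying Cauchy--Schwarz.
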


\begin{proof}
    Let $w_1 = \arg\max_{w \in S_1}\bigl(u^T w - \tfrac{\lambda}{2}\|w\|^2\bigr)$. By definition of the Hausdorff distance, there exists $\tilde{w}_2 \in S_2$ with $\|w_1 - \tilde{w}_2\| \le \delta$. Since $w_1$ is optimal for $S_1$ and $\tilde{w}_2$ is feasible for $S_2$,
    \begin{align*}
        F_{\lambda, S_1}(u) - F_{\lambda, S_2}(u)
        &\le \bigl(u^T w_1 - \tfrac{\lambda}{2}\|w_1\|^2\bigr) - \bigl(u^T \tilde{w}_2 - \tfrac{\lambda}{2}\|\tilde{w}_2\|^2\bigr) \\
        &= \bigl\langle u - \tfrac{\lambda}{2}(w_1 + \tilde{w}_2),\; w_1 - \tilde{w}_2 \bigr\rangle \\
        &\le \bigl(\|u\| + \tfrac{\lambda}{2}(\|w_1\| + \|\tilde{w}_2\|)\bigr)\|w_1 - \tilde{w}_2\| \\
        &\le \bigl(\|u\| + \lambda R_{\max}\bigr)\,\delta,
    \end{align*}
    where the second line uses the identity $a^T a - b^T b = \langle a+b, a-b\rangle$ applied to the quadratic terms, and the last line uses $\|w_1\|, \|\tilde{w}_2\| \le R_{\max}$. Swapping the roles of $S_1$ and $S_2$ and taking the maximum of both directions yields the absolute-value bound.
\end{proof}

\begin{lemma}[Diameter-Lipschitzness of the joint envelope term]
    \label{lem:app-moreau-diameter}
    Under the compactness assumption of \cref{assum:bounded-set}, for any $c_1, c_2, \hat{c} \in \mathbb{R}^d$ and fixed compact convex set $S$:
    \[
        \bigl|M_{\lambda,S}(\hat{c}, c_1) - M_{\lambda,S}(\hat{c}, c_2)\bigr| \;\le\; D(S)\,\|c_1 - c_2\|.
    \]
\end{lemma}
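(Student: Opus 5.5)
The plan is to use the gradient of the Moreau envelope. Recall $M_{\lambda,S}(\hat c,c)=F_{\lambda,S}(c-2\hat c)+F_{\lambda,S}(-c)$. By \eqref{eq:moreau-grad}, $F_{\lambda,S}$ is differentiable with $\nabla F_{\lambda,S}(u)=\Pi_S(u/\lambda)\in S$. So, viewing $\hat c$ as fixed, the map $\phi(c):=M_{\lambda,S}(\hat c,c)$ is differentiable in $c$, and the chain rule gives
\[
\nabla_c \phi(c)=\Pi_S\!\bigl((c-2\hat c)/\lambda\bigr)-\Pi_S(-c/\lambda).
\]
This is a difference of two points of $S$. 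Hence $\|\nabla_c\phi(c)\|\le D(S)$ for every $c$.

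Next, apply the mean value theorem along the segment $c_t=c_2+t(c_1-c_2)$, $t\in[0,1]$. Since $\phi$ is $C^1$ ($\Pi_S$ is continuous, indeed non-expansive), we have $\phi(c_1)-\phi(c_2)=\int_0^1\langle\nabla\phi(c_t),c_1-c_2\rangle\,dt$. Cauchy--Schwarz together with the uniform gradient bound then yields $|\phi(c_1)-\phi(c_2)|\le D(S)\|c_1-c_2\|$, which is the claim.

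The key point, and the only step needing care, is that the two envelope terms enter with the same sign in $M$. Their gradients therefore combine into a difference of projections, rather than a sum. A sum would only give a bound of $2R$. The difference is what produces the diameter $D(S)$ instead of the radius.

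If one prefers to avoid differentiability, the same bound follows from the supremum form $F_{\lambda,S}(u)=\sup_{s\in S}\{\langle u,s\rangle-\tfrac{\lambda}{2}\|s\|^2\}$. Take optimizers $s_1=\Pi_S((c_1-2\hat c)/\lambda)$ and $s_2=\Pi_S(-c_1/\lambda)$ at $c_1$. Use them as feasible points at $c_2$, which gives $\phi(c_1)-\phi(c_2)\le\langle c_1-c_2,s_1-s_2\rangle\le D(S)\|c_1-c_2\|$. Swapping the roles of $c_1$ and $c_2$ gives the reverse inequality.
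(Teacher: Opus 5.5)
Your proof is correct. Your primary route differs from the paper's, and your fallback paragraph is essentially the paper's own proof.

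In the main argument you use that $F_{\lambda,S}$ is $C^1$ with $\nabla F_{\lambda,S}(u)=\Pi_S(u/\lambda)$. You compute $\nabla_c M_{\lambda,S}(\hat c,c)=\Pi_S((c-2\hat c)/\lambda)-\Pi_S(-c/\lambda)$, bound its norm by $D(S)$, and integrate along the segment from $c_2$ to $c_1$.

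The paper never differentiates. It uses the supremum form to get the one-sided inequality $F_{\lambda,S}(x)-F_{\lambda,S}(y)\le\max_{w\in S}(x-y)^Tw$: the maximizer at $x$ is plugged into the problem at $y$, so the quadratic terms cancel. It applies this to both envelope terms and sums to obtain $\max_{w,v\in S}(c_1-c_2)^T(w-v)\le D(S)\|c_1-c_2\|$, then swaps $c_1$ and $c_2$. That is your last paragraph, except you keep the specific maximizers $s_1,s_2$ rather than decoupling into two independent maxima; the final bound is the same.

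Each route has an advantage:
\begin{itemize}
\item Your gradient route makes the diameter-versus-radius mechanism transparent. The chain rule through the argument $-c$ turns the sum of envelopes into a difference of projections. It also shows that $D(S)$ is a uniform bound on $\|\nabla_c M_{\lambda,S}\|$.
\item The gradient route relies on smoothness of the envelope, which you have only for $\lambda>0$.
\item The paper's comparison argument needs only that the maxima are attained (compactness). It therefore carries over without extra work to $\lambda=0$, i.e.\ to the raw support function $\xi_S$ in the SPO+ setting, where $\xi_S$ is not differentiable at directions with non-unique maximizers.
\end{itemize}
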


\begin{proof}
    By the maximization definition, $F_{\lambda,S}(x) - F_{\lambda,S}(y) \le \max_{w \in S}(x-y)^T w$ for any $x, y$ (the quadratic terms cancel in the upper bound by using the same maximizer for both terms). Applying this to both components of $M_{\lambda,S}$:
    \begin{align*}
        F_{\lambda,S}(c_1 - 2\hat{c}) - F_{\lambda,S}(c_2 - 2\hat{c})
        &\le \max_{w \in S} (c_1 - c_2)^T w, \\[4pt]
        F_{\lambda,S}(-c_1) - F_{\lambda,S}(-c_2)
        &\le \max_{v \in S} (c_2 - c_1)^T v
        \;=\; \max_{v \in S} (c_1 - c_2)^T(-v).
    \end{align*}
    Summing these inequalities:
    \begin{align*}
        M_{\lambda,S}(\hat{c}, c_1) - M_{\lambda,S}(\hat{c}, c_2)
        &\le \max_{w, v \in S} (c_1 - c_2)^T(w - v) \\
        &\le \|c_1 - c_2\| \cdot \sup_{w, v \in S}\|w - v\|
        \;=\; \|c_1 - c_2\|\, D(S).
    \end{align*}
    Swapping $c_1$ and $c_2$ and taking the maximum gives the absolute-value bound.
\end{proof}

\begin{lemma}[Set stability of Euclidean projections]
    \label{lem:app-proj-stability}
    Let $S_1, S_2 \subseteq \mathbb{R}^d$ be nonempty compact convex sets with $\max\{R(S_1), R(S_2)\} \le R_{\max}$ and $\delta = d_H(S_1, S_2)$. For any $p \in \mathbb{R}^d$:
    \[
        \|\Pi_{S_1}(p) - \Pi_{S_2}(p)\| \;\le\; \sqrt{2\delta\bigl(\|p\| + R_{\max}\bigr)}.
    \]
\end{lemma}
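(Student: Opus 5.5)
Set $a := \Pi_{S_1}(p)$ and $b := \Pi_{S_2}(p)$. The plan is to use the variational inequality characterizing each projection, tested against a point of the \emph{same} set that lies close to the other projection. Hausdorff closeness provides such points: there is $b' \in S_1$ with $\|b - b'\| \le \delta$, and $a' \in S_2$ with $\|a - a'\| \le \delta$. The projection inequalities then read
\[
\langle p - a,\; b' - a\rangle \le 0,
\qquad
\langle p - b,\; a' - b\rangle \le 0 .
\]

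Next we insert the true points $b$ and $a$ and pay for the error with $\delta$. Write $b' - a = (b - a) + (b' - b)$ and $a' - b = (a - b) + (a' - a)$. Adding the two inequalities gives
\[
\langle p - a,\; b - a\rangle + \langle p - b,\; a - b\rangle
\le -\langle p - a,\; b' - b\rangle - \langle p - b,\; a' - a\rangle .
\]
The left-hand side simplifies to $\langle b - a,\; (p - a) - (p - b)\rangle = \|a - b\|^2$. By Cauchy--Schwarz, the right-hand side is at most $\delta\bigl(\|p - a\| + \|p - b\|\bigr)$. Since $a, b$ lie in sets of radius at most $R_{\max}$, each of $\|p-a\|$ and $\|p-b\|$ is at most $\|p\| + R_{\max}$. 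Hence $\|a-b\|^2 \le 2\delta(\|p\| + R_{\max})$, and taking square roots gives the claim.

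The main point needing care is that the Hausdorff witnesses exist and are attained. Compactness (\cref{assum:bounded-set}) guarantees that the infimum $\inf_{w \in S_1}\|b - w\|$ is achieved and is at most $\delta$, and likewise for $a$ in $S_2$. Convexity (\cref{assum:convex}) is what makes the variational inequality characterize the projection. No strong convexity is required, which is consistent with the claim in \cref{eq:proj-stability}. A slightly sharper constant could be obtained using $\|p - a\| = d_{S_1}(p)$, but the stated bound is all we need.
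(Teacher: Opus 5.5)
Your proof is correct and is essentially the paper's argument. It uses the same Hausdorff proxy points in each set, the projection variational inequalities tested at those proxies, the identity $\langle p-a,\,b-a\rangle+\langle p-b,\,a-b\rangle=\|a-b\|^2$, and Cauchy--Schwarz with $\|p-\Pi_{S_i}(p)\|\le\|p\|+R_{\max}$; the only difference is bookkeeping (you add the two inequalities and move the proxy errors to the right, while the paper expands $\|z_1-z_2\|^2$ and inserts the proxies directly).
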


\begin{proof}
    Let $z_i = \Pi_{S_i}(p)$ for $i = 1, 2$. By the Hausdorff distance bound, there exist proxy points $\tilde{z}_1 \in S_1$ and $\tilde{z}_2 \in S_2$ satisfying $\|z_2 - \tilde{z}_1\| \le \delta$ and $\|z_1 - \tilde{z}_2\| \le \delta$.

    The characterization of Euclidean projections onto closed convex sets gives the variational inequalities
    \[
        \langle p - z_1,\; \tilde{z}_1 - z_1\rangle \le 0, \qquad \langle p - z_2,\; \tilde{z}_2 - z_2\rangle \le 0.
    \]
    We expand $\|z_1 - z_2\|^2$ by inserting the proxy points:
    \begin{align*}
        \|z_1 - z_2\|^2
        &= \langle z_1 - z_2,\; z_1 - z_2\rangle \\
        &= \langle p - z_2,\; z_1 - z_2\rangle + \langle p - z_1,\; z_2 - z_1\rangle \\
        &= \bigl[\langle p - z_2,\; z_1 - \tilde{z}_2\rangle + \langle p - z_2,\; \tilde{z}_2 - z_2\rangle\bigr] \\
        &\quad + \bigl[\langle p - z_1,\; z_2 - \tilde{z}_1\rangle + \langle p - z_1,\; \tilde{z}_1 - z_1\rangle\bigr] \\
        &\le \langle p - z_2,\; z_1 - \tilde{z}_2\rangle + 0 + \langle p - z_1,\; z_2 - \tilde{z}_1\rangle + 0 \\
        &\le \|p - z_2\|\,\|z_1 - \tilde{z}_2\| + \|p - z_1\|\,\|z_2 - \tilde{z}_1\| \\
        &\le \delta\bigl(\|p - z_1\| + \|p - z_2\|\bigr).
    \end{align*}
    Since $z_i \in S_i$ with $\|z_i\| \le R_{\max}$, the triangle inequality gives $\|p - z_i\| \le \|p\| + R_{\max}$ for each $i$. Therefore,
    \[
        \|z_1 - z_2\|^2 \;\le\; 2\delta\bigl(\|p\| + R_{\max}\bigr).
    \]
    Taking the square root completes the proof.
\end{proof}

%\paragraph{Remark.} Lemma~\ref{lem:app-proj-stability} is the reason RSPO+ presents greater stability over SPO+. In SPO+, the optimal decision $w^\star(c) \in \arg\max_{w \in S} c^T w$ selects a vertex of a polyhedral $S$. Under infinitesimal perturbations to the feasible set, this vertex can jump discontinuously across the entire polytope, introducing an irreducible diameter penalty $D_{\min}$ that persists even as heterogeneity vanishes. By contrast, the RSPO+ decision $v^\star_{\lambda,S}(c) = \Pi_S(-c/\lambda)$ is a Euclidean projection, which is $\mathcal{O}(\sqrt{\delta})$-stable for \emph{all} compact convex sets (including polyhedra) without any strong convexity assumption.

\begin{proof}[\textbf{Proof of Theorem~\ref{thm:main-rspo}}]
    Using decomposition~\eqref{eq:app-rspo-split}, the total discrepancy satisfies the triangle inequality:
    \[
        \bigl|\ell_{S_{\mathrm{ref}}}(\hat{c}, c_{\mathrm{ref}}) - \ell_{S_{\mathrm{cl}}}(\hat{c}, c_{\mathrm{cl}})\bigr|
        \;\le\;
        \bigl|M_{\lambda, S_{\mathrm{ref}}}(\hat{c}, c_{\mathrm{ref}}) - M_{\lambda, S_{\mathrm{cl}}}(\hat{c}, c_{\mathrm{cl}})\bigr|
        \;+\;
        \bigl|B_{\lambda, S_{\mathrm{ref}}}(\hat{c}, c_{\mathrm{ref}}) - B_{\lambda, S_{\mathrm{cl}}}(\hat{c}, c_{\mathrm{cl}})\bigr|.
    \]

    \paragraph{Step~1: Bounding $T(\hat{c})$.}
    We add and subtract the intermediate term $M_{\lambda, S_{\mathrm{ref}}}(\hat{c}, c_{\mathrm{cl}})$:
    \begin{align*}
        &\bigl|M_{\lambda, S_{\mathrm{ref}}}(\hat{c}, c_{\mathrm{ref}}) - M_{\lambda, S_{\mathrm{cl}}}(\hat{c}, c_{\mathrm{cl}})\bigr| \\
        &\quad\le
        \underbrace{\bigl|M_{\lambda, S_{\mathrm{ref}}}(\hat{c}, c_{\mathrm{ref}}) - M_{\lambda, S_{\mathrm{ref}}}(\hat{c}, c_{\mathrm{cl}})\bigr|}_{\text{objective shift at fixed } S_{\mathrm{ref}}}
        \;+\;
        \underbrace{\bigl|M_{\lambda, S_{\mathrm{ref}}}(\hat{c}, c_{\mathrm{cl}}) - M_{\lambda, S_{\mathrm{cl}}}(\hat{c}, c_{\mathrm{cl}})\bigr|}_{\text{set shift at fixed } c_{\mathrm{cl}}}.
    \end{align*}
    By Lemma~\ref{lem:app-moreau-diameter}, the objective-shift term is bounded by $D_{\mathrm{ref}}\, c_d$.

    For the set-shift term, recall that $M_{\lambda,S}(\hat{c}, c) = F_{\lambda,S}(c - 2\hat{c}) + F_{\lambda,S}(-c)$. Applying Lemma~\ref{lem:app-moreau-set} to each envelope evaluation independently at arguments $u_1 = c_{\mathrm{cl}} - 2\hat{c}$ and $u_2 = -c_{\mathrm{cl}}$:
    \begin{align*}
        \bigl|M_{\lambda, S_{\mathrm{ref}}}(\hat{c}, c_{\mathrm{cl}}) - M_{\lambda, S_{\mathrm{cl}}}(\hat{c}, c_{\mathrm{cl}})\bigr|
        &\le \delta\bigl(\|c_{\mathrm{cl}} - 2\hat{c}\| + \lambda R_{\max}\bigr) + \delta\bigl(\|c_{\mathrm{cl}}\| + \lambda R_{\max}\bigr) \\
        &= \delta\bigl(\|c_{\mathrm{cl}} - 2\hat{c}\| + \|c_{\mathrm{cl}}\| + 2\lambda R_{\max}\bigr).
    \end{align*}
    Combining these gives:
    \[
        \bigl|M_{\mathrm{ref}} - M_{\mathrm{cl}}\bigr|
        \;\le\;
        D_{\mathrm{ref}}\, c_d + \delta\bigl(\|c_{\mathrm{cl}} - 2\hat{c}\| + \|c_{\mathrm{cl}}\| + 2\lambda R_{\max}\bigr).
    \]
    Alternatively, one may pivot through $M_{\lambda, S_{\mathrm{cl}}}(\hat{c}, c_{\mathrm{ref}})$ instead, yielding the analogous bound with $D_{\mathrm{cl}}$, $\|c_{\mathrm{ref}} - 2\hat{c}\|$, and $\|c_{\mathrm{ref}}\|$. Taking the minimum of both pivots gives $T(\hat{c})$.

    \paragraph{Step~2: Bounding $B(\hat{c})$.}
    Write $v_i := v^\star_{\lambda, S_i}(c_i) = \Pi_{S_i}(-c_i/\lambda)$ for $i \in \{\mathrm{ref}, \mathrm{cl}\}$. The decision-term gap factors as:
    \begin{align*}
        \bigl|B_{\lambda, S_{\mathrm{ref}}}(\hat{c}, c_{\mathrm{ref}}) - B_{\lambda, S_{\mathrm{cl}}}(\hat{c}, c_{\mathrm{cl}})\bigr|
        &= \bigl|2\hat{c}^T v_{\mathrm{ref}} + \lambda\|v_{\mathrm{ref}}\|^2 - 2\hat{c}^T v_{\mathrm{cl}} - \lambda\|v_{\mathrm{cl}}\|^2\bigr| \\
        &= \bigl|2\hat{c}^T(v_{\mathrm{ref}} - v_{\mathrm{cl}}) + \lambda\langle v_{\mathrm{ref}} + v_{\mathrm{cl}},\; v_{\mathrm{ref}} - v_{\mathrm{cl}}\rangle\bigr| \\
        &= \bigl|\langle 2\hat{c} + \lambda(v_{\mathrm{ref}} + v_{\mathrm{cl}}),\; v_{\mathrm{ref}} - v_{\mathrm{cl}}\rangle\bigr| \\
        &\le \bigl(2\|\hat{c}\| + \lambda\|v_{\mathrm{ref}} + v_{\mathrm{cl}}\|\bigr)\,\|v_{\mathrm{ref}} - v_{\mathrm{cl}}\| \\
        &\le \bigl(2\|\hat{c}\| + 2\lambda R_{\max}\bigr)\,\|v_{\mathrm{ref}} - v_{\mathrm{cl}}\|,
    \end{align*}
    where the last line uses $\|v_{\mathrm{ref}} + v_{\mathrm{cl}}\| \le \|v_{\mathrm{ref}}\| + \|v_{\mathrm{cl}}\| \le 2R_{\max}$ since each $v_i \in S_i$.

    It remains to bound $\|v_{\mathrm{ref}} - v_{\mathrm{cl}}\|$. We decompose via the triangle inequality, pivoting through the intermediate projection $\Pi_{S_{\mathrm{ref}}}(-c_{\mathrm{cl}}/\lambda)$:
    \begin{align*}
        \|v_{\mathrm{ref}} - v_{\mathrm{cl}}\|
        &= \bigl\|\Pi_{S_{\mathrm{ref}}}(-c_{\mathrm{ref}}/\lambda) - \Pi_{S_{\mathrm{cl}}}(-c_{\mathrm{cl}}/\lambda)\bigr\| \\
        &\le \underbrace{\bigl\|\Pi_{S_{\mathrm{ref}}}(-c_{\mathrm{ref}}/\lambda) - \Pi_{S_{\mathrm{ref}}}(-c_{\mathrm{cl}}/\lambda)\bigr\|}_{\text{objective shift, fixed set}}
        \;+\;
        \underbrace{\bigl\|\Pi_{S_{\mathrm{ref}}}(-c_{\mathrm{cl}}/\lambda) - \Pi_{S_{\mathrm{cl}}}(-c_{\mathrm{cl}}/\lambda)\bigr\|}_{\text{set shift, fixed objective}}.
    \end{align*}
    The first term is bounded by $\|c_{\mathrm{ref}} - c_{\mathrm{cl}}\|/\lambda = c_d/\lambda$ since Euclidean projection onto a convex set is $1$-Lipschitz (firmly non-expansive). The second term is bounded by Lemma~\ref{lem:app-proj-stability} evaluated at $p = -c_{\mathrm{cl}}/\lambda$:
    \[
        \bigl\|\Pi_{S_{\mathrm{ref}}}(-c_{\mathrm{cl}}/\lambda) - \Pi_{S_{\mathrm{cl}}}(-c_{\mathrm{cl}}/\lambda)\bigr\|
        \;\le\;
        \sqrt{2\delta\Bigl(\frac{\|c_{\mathrm{cl}}\|}{\lambda} + R_{\max}\Bigr)}.
    \]
    Combining yields the decision bound:
    \[
        \|v_{\mathrm{ref}} - v_{\mathrm{cl}}\|
        \;\le\;
        \frac{c_d}{\lambda} + \sqrt{2\delta\Bigl(\frac{\|c_{\mathrm{cl}}\|}{\lambda} + R_{\max}\Bigr)}.
    \]
    Alternatively, pivoting through $\Pi_{S_{\mathrm{cl}}}(-c_{\mathrm{ref}}/\lambda)$ gives the bound with $\|c_{\mathrm{ref}}\|$ replacing $\|c_{\mathrm{cl}}\|$. Taking the minimum of both pivots and multiplying by the prefactor gives $B(\hat{c})$.

    \paragraph{Combining.} Summing the bounds from Steps~1 and~2 completes the proof:
    \[
        \bigl|\ell_{S_{\mathrm{ref}}}(\hat{c}, c_{\mathrm{ref}}) - \ell_{S_{\mathrm{cl}}}(\hat{c}, c_{\mathrm{cl}})\bigr|
        \;\le\; T(\hat{c}) + B(\hat{c}).
    \]
\end{proof}

\paragraph{Asymptotic comparison with SPO+.}

We verify the claims in Table~\ref{tab:bound-comparison} by examining the limiting behavior. As $c_d \to 0$ and $\delta \to 0$:
\begin{itemize}[leftmargin=*]
    \item \textbf{RSPO+:} Both $T(\hat{c})$ and $B(\hat{c})$ vanish since $T = \mathcal{O}(c_d + \delta)$ and $B = \mathcal{O}(c_d/\lambda + \sqrt{\delta})$, giving a combined rate of $\mathcal{O}(c_d + \sqrt{\delta})$. No geometric assumptions on $S$ are required.
    \item \textbf{SPO+:} The heterogeneity bound takes the form $H(\hat{c}) = \mathcal{O}(c_d + \delta_N + D_{\min})$, where $D_{\min} := \min\{D(S_{\mathrm{ref}}), D(S_{\mathrm{cl}})\}$ persists as an irreducible bias even when $c_d = \delta_N = 0$, unless the feasible sets are strongly convex (in which case $D_{\min}$ can be replaced by $\mathcal{O}(\sqrt{\delta_N})$).
\end{itemize}

\section{Experimental Details}
\label{appsec:implementation-details}

This section provides the full experimental details of the setup described in \cref{sec:experiments}.

Experiments were conducted on AMD EPYC Milan CPU nodes with 128 cores per node and 528 GB RAM. The total wall-clock runtime for the reported experiments was approximately 250 hours. For the Warcraft case study, each seed-method run took approximately 4 hours for MSE and SPO+ and approximately 6 hours for RSPO+ and the other DFL methods. For the synthetic knapsack experiments, one full heterogeneity-design run covering the three main methods (MSE, SPO+, RSPO+) and three heterogeneity levels across three axes and five seeds took approximately 90 minutes. The PJM experiments took approximately 2 minutes for five seeds.

\subsection{Shared Configuration}
\label{appsubsec:shared-config}

 For all experiments and baselines, we use a federated warm-start then local personalization. We use a shallow neural network with 64 neurons for the fractional knapsack and PJM experiments. For the Warcraft experiment, we use a truncated ResNet-18 \citep{he2016deep}, as per the literature \citep{pogancic2020Differentiation}. We implement our method and baselines using the \texttt{InferOpt.jl} framework \citep{dalle2022learning} in \texttt{Julia} \citep{bezanson2017julia} with the \texttt{Lux.jl} \citep{innes2018flux} deep learning package. We use the \texttt{Gurobi} \citep{gurobi} through the \texttt{JuMP} framework \citep{Lubin2023}.

While FedRSPO+ uses the per-client $\lambda$ and learning rates defined in \cref{eq:calibration}, we use a learning rate of $10^{-3}$ for the baselines. We use $\kappa^{\mathrm{LR}} =0.1, \kappa^\lambda =1$, and $\alpha=0.01$ for the fractional knapsack experiments, and $\alpha=10^{-3}$ for the PJM and Warcraft experiments, selected to set initial learning rates within the same magnitude as the baseline learning rate $10^{-3}$. We use a stopping threshold $\tau = 1\%$. We report absolute regret $R = z_{\mathrm{LP}}(c,\hat{w}) - z_{\mathrm{LP}}(c,w^\star)$ and relative regret $R/z^{\mathrm{Avg}}_{\mathrm{LP}}(c,w^\star)$ as metrics, where $z^{\mathrm{Avg}}_{\mathrm{LP}}$ is the per-client average ground-truth objective functions.
\paragraph{Optimizer.}
All experiments use Adam~\citep{kingma2014adam} with gradient clipping at norm $1.0$ and no weight decay. Tabular experiments (knapsack, PJM) use \texttt{Float64} precision; the Warcraft experiment uses \texttt{Float32}.
 
\paragraph{Freeze mechanism.}
The annealing schedules for $\lambda$ and $\eta$ in \cref{eq:calibration} use a scheduler: once the stopping criterion (\cref{eq:stopping-condition}) is satisfied, the schedule freezes and training continues. For SPO+ and MSE baselines, all schedules are held constant ($\kappa^\lambda = \kappa^{\mathrm{LR}} = 0$). We also tested early stopping a few rounds after freezing, and found that while it did help mitigate computational overhead, it did not significantly influence training performance, due to the already annealed learning rate slowing training in later rounds. We implement an early stopping condition 3 rounds after the schedule freezing criterion is met. 
 
\subsection{Synthetic Knapsack}
\label{appsubsec:synth-details}

\paragraph{Optimization problem.}
We employ a fractional knapsack with $d=50$ items.
Each client~$j$ solves
\[
  \max_{w\in S_j}\; c_j^T w,
  \qquad
  S_j = \bigl\{w\in[0,1]^{d} : \alpha_j^T w \le B_j\bigr\},
\]
where $c_j$ is the predicted cost (objective) vector, $\alpha_j$ the item weights, and $B_j$ is the knapsack capacity. Since the problem is a maximization problem, we use a sign transform in our loss function implementation, $\max_{w\in S_j}\; c_j^T w = \min_{w\in S_j}\; (-c_j)^T w.$

\paragraph{Ground-truth objective.}
A single shared basis $W_{\mathrm{true}}\!\in\!\mathbb{R}^{d\times p}$ is drawn once.
Each client receives a rotated copy
$W_j = W_{\mathrm{true}}\,R_j$, where
\[
  R_j = \exp\!\Bigl(\eta_{\mathrm{obj}}\,\frac{A_j}{\|A_j\|}\Bigr),
  \qquad
  A_j = G_j - G_j^T,\quad G_j\sim\mathcal{N}(0,I).
\]
Because $A_j$ is skew-symmetric, $\exp(\cdot)$ is an orthogonal matrix, so $R_j$ is a rotation in feature space.
When $\eta_{\mathrm{obj}}=0$ all clients share the same basis. Larger $\eta_{\mathrm{obj}}$ increases the  spread of the per-client bases.
Given a feature vector $x\in\mathbb{R}^{p}$, the true cost vector is
\begin{equation}
  \theta_{\mathrm{true}}(x)
    = -\bigl(1 + (1 + W_j\, x / \sqrt{p}\,)^{4}\bigr),
  \qquad\text{normalized by }\operatorname{mean}(|\theta_{\mathrm{true}}|).
\end{equation}

\paragraph{Constraint heterogeneity (\texorpdfstring{$\eta_{\mathrm{constr}}$}{eta\_constr}).}
A base capacity is set as $\bar B = 0.6\times d = 30$.
Each client draws a multiplicative factor
$f_j \sim 2\,\mathrm{Beta}(1/\eta_{\mathrm{constr}},\,1/\eta_{\mathrm{constr}})$
(mean~1; for $\eta_{\mathrm{constr}}=0$ the factor degenerates to 1)
and receives capacity $B_j = \bar B \cdot f_j$.
When the flag \texttt{eta\_constr\_affects\_weights} is set, each item weight also receives an independent per-item Beta factor drawn from the same family, otherwise item weights are shared across clients.

\paragraph{Covariate-shift heterogeneity (\texorpdfstring{$\eta_{\mathrm{data}}$}{eta\_data}).}
Each client draws a mean shift
$\mu_j = \eta_{\mathrm{data}}\,z_j$, $z_j\sim\mathcal{N}(0,I)$,
and per-feature positive scales $\sigma_j$ constructed with the same Beta-factor mechanism used for constraints.
Features are then sampled as
\[
  x \mid j \;\sim\; \mathcal{N}\!\bigl(\mu_j,\;\operatorname{diag}(\sigma_j^{2})\bigr).
\]

\paragraph{Experimental protocol.}
Each of the 20~clients receives 100 training, 10 validation, and 100 test samples.
Each heterogeneity axis $\eta\in\{0.0, 0.5, 1.0\}$ is varied independently while the others are held at zero.
Results are averaged over 5~seeds $\in\{42, 69, 1993, 67, 2026\}$ and reported in \cref{tab:results}.
 
\paragraph{Hyperparameters.}
Table~\ref{tab:hyper-synth} shows the full training configuration for both phases.

\paragraph{Additional sensitivity analysis.} We perform additional sensitivity analysis across more of the hyperparameters of FedRSPO+ and demonstrate the results in \cref{fig:multifig-appendix}. Overall FedRSPO+ is fairly insensitive to changes in annealing parameter $\kappa$, especially the learning rate parameter $\kappa_{\mathrm{LR}}$. Reducing $\tau$ improves the model slightly, but it reduces the fraction of clients that end up frozen in the 10 rounds of personalization. To achieve even better results, a smaller $\tau$ with more annealing rounds until the freezing criterion is met should be considered. We also varied the number of calibration samples per client used to compute the freeze statistic $\tilde{U}_t$. Performance was largely insensitive to this: as $|V_c|$ increased from 2 to 32, the relative-regret improvement of RSPO+ over SPO+ remained nearly constant, varying only from 3.61\% to 3.72\%. This suggests that the freezing rule is robust to the exact calibration subset size, so we use $|V_c|=8$ in the main experiments.

\paragraph{Additional ablation.} We perform an additional ablation with the fractional knapsack experiment to complement the results of \cref{tab:pjm-fedrspo-ablation} across different heterogeneity settings. The results are shown in \cref{tab:knapsack_component_ablation}.
\begin{table}[ht]
\centering
\caption{Synthetic knapsack hyperparameters. RSPO+-only entries are marked; SPO+ and MSE use the same configuration with constant schedules ($\kappa = 0$).}
\label{tab:hyper-synth}
\small
\begin{tabular}{lcc}
\toprule
\textbf{Hyperparameter} & \textbf{Warm-start} & \textbf{Personalization} \\
\midrule
Communication rounds & 10 & 10 \\
Local epochs per round & 3 & 1 \\
Batch size & 64 & 64 \\
Client fraction & 0.4 & 1.0 \\
$\kappa^\lambda$ (RSPO+ only) & 0.0 & 1.0 \\
$\eta_0$ (baseline) & $10^{-3}$ & $10^{-3}$ \\
$\kappa^{\mathrm{LR}}$ (RSPO+ only) & 0.0 & 0.1 \\
Per-client $\lambda/\eta$ (RSPO+ only) & \checkmark & \checkmark \\
$\alpha$ (\cref{eq:calibration}) & 0.01 & 0.01 \\
Freeze threshold $\tau$ & --- & 0.01 \\
Validation samples/client & --- & 8 \\
\bottomrule
\end{tabular}
\end{table}

\subsection{Warcraft Shortest Path}
\label{appsubsec:warcraft-details}
\paragraph{Projection model} While we use Dijkstra's for the downstream LP oracle, we use a Gurobi QLP for the RSPO+ projections. 
For a Warcraft grid with vertex set $\mathcal V$ and canonical source--sink pair
$(s,t)$, let $\mathcal A \subseteq \mathcal V \times \mathcal V$ denote the
directed 8-neighbor adjacency graph. For each vertex $v$, define incoming and
outgoing arc sets
\[
\delta^-(v) = \{a=(u,v)\in\mathcal A\}, \qquad
\delta^+(v) = \{a=(v,u)\in\mathcal A\}.
\]
Given transformed costs $c \in \mathbb R^{|\mathcal V|}$ and
regularization parameter $\lambda>0$, the RSPO+ projection oracle solves
\begin{subequations}
\label{eq:warcraft-rspo-projection}
\begin{align}
\Pi_S(c)
\in \arg\min_{x,e}\quad
& \sum_{v\in\mathcal V} c_v w_v
  + \frac{\lambda}{2}\sum_{v\in\mathcal V} w_v^2
\\
\text{s.t.}\quad
& w_s = 1,
\\
& w_v = \sum_{a\in\delta^-(v)} e_a,
&& \forall v\in\mathcal V\setminus\{s\},
\\
& w_v = \sum_{a\in\delta^+(v)} e_a,
&& \forall v\in\mathcal V\setminus\{t\},
\\
& \sum_{a\in\delta^-(s)} e_a = 0,
\\
& \sum_{a\in\delta^+(t)} e_a = 0,
\\
& 0 \le w_v \le 1, \quad 0 \le e_a \le 1,
&& \forall v\in\mathcal V, \quad \forall a\in\mathcal A.
\end{align}
\end{subequations}
The returned relaxed Warcraft decision is $w^\star$. Equivalently, $
\Pi_{S}(c) =
\Pi_{S}\!\left(-c / \lambda\right),
$
where $S$ is the relaxed Warcraft path-flow feasible set
defined by the constraints above.
For a true cost vector $c$ and prediction $\hat c$, RSPO+ calls this oracle
$w_u = \Pi_S(2\hat c - c),$ and $
w_v = \Pi_S(c),$
so that $w_u=\Pi_S((c-2\hat c)/\lambda)$ and
$w_v=\Pi_S(-c/\lambda)$.
\paragraph{Dataset and federated partition.}
The Warcraft Shortest Path dataset~\citep{pogancic2020Differentiation} consists of 10{,}000 training, 1{,}000 validation, and 1{,}000 test 96$\times$96 RGB terrain maps on a $12\times 12$ grid with 8-connectivity. In the benchmark-aligned rerun, we form the federated benchmark from the original training split: we reserve 80\% of the training set (8{,}000 samples) for federated training/validation and hold out the remaining 2{,}000 as a benchmark test set. The 8{,}000 federated samples are distributed round-robin (after shuffling) across 44 clients, yielding 182 samples per client before the $80/20$ intra-client split, i.e., roughly 146 training and 36 validation samples per client. Each client is evaluated on all 2{,}000 benchmark test maps. We report mean optimality gap: $(\mathrm{predicted}\;\mathrm{cost} - \mathrm{optimal} \; \mathrm{cost}) / \mathrm{optimal} \; \mathrm{cost}$), and its standard deviation across seeds.
 
\paragraph{Client heterogeneity.}
Rather than 4 corner-defined clients, we use a border-antipodes construction on the $12\times 12$ grid. Since the grid border contains $4\cdot 12 - 4 = 44$ cells, this yields 44 clients. Each client's source is one border cell and its sink is the antipodal border cell under a $180^\circ$ rotation of the grid, inducing all four diagonal endpoint orientations. We additionally enable objective heterogeneity: each client receives its own multiplicative reweighting of the 5 terrain values $c = c_0 \exp(0.6 \cdot x)$, where $c_0$ is the initial traversal cost of the terrain type and $x \sim \mathcal{N}(0,1)$. Therefore, clients differ in both endpoints and path costs.
 
\paragraph{Architecture.}
The model uses a truncated ResNet-18 backbone, followed by adaptive max-pooling to a $12\times 12$ grid and channel averaging to produce a $12 \times 12$ real-valued weight matrix per sample, following the methodology of \citep{mandi2024decision}. Predicted cost grids are decoded into hard paths using the benchmark shortest-path solver (Dijkstra). Float32 precision is used throughout.
 
\paragraph{Additional baselines.}
Beyond RSPO+, SPO+, and MSE, the Warcraft experiment includes perturbed Fenchel--Young loss (PFYL), decision-focused perturbed MSE (DPO). PFYL and DPO both use 10 perturbation samples with perturbation scale $\epsilon = 1.0$.
 
\paragraph{Hyperparameters.}
Table~\ref{tab:hyper-warcraft} reports the main configuration. The Warcraft rerun uses five seeds rather than a single seed; reported variability therefore reflects both seed-to-seed and client-level variation.
 
\begin{table}[ht]
\centering
\caption{Warcraft shortest path hyperparameters.}
\label{tab:hyper-warcraft}
\small
\begin{tabular}{lcc}
\toprule
\textbf{Hyperparameter} & \textbf{Warm-start} & \textbf{Personalization} \\
\midrule
Communication rounds & 10 & 10 \\
Local epochs per round & 3 & 2 \\
Batch size & 64 & 64 \\
Client fraction & 0.5 & 1.0 \\
$\lambda_0$ (RSPO+ only) & 2.0 & 2.0 \\
$\kappa^\lambda$ (RSPO+ only) & 0.0 & 1.0 \\
$\eta_0$ & $10^{-3}$ & $10^{-3}$ \\
$\kappa^{\mathrm{LR}}$ (RSPO+ only) & 0.0 & 0.1 \\
Per-client $\lambda/\eta$ (RSPO+ only) & \checkmark & \checkmark \\
$\alpha$ (\cref{eq:calibration}, RSPO+ only) & 0.001 & 0.001 \\
Perturbation samples (PFYL/DPO) & 10 & 10 \\
Perturbation scale $\epsilon$ (PFYL/DPO) & 1.0 & 1.0 \\
Freeze threshold $\tau$ (RSPO+ only) & 0.03 & 0.01 \\
\bottomrule
\end{tabular}
\end{table}
\begin{table*}[h]
\centering
\small
\setlength{\tabcolsep}{6pt}
\caption{Fractional-knapsack component ablation.
Values are mean test relative regret (\%) $\pm$ sample standard
deviation across five seeds, after averaging clients within each seed;
lower is better.}
\label{tab:knapsack_component_ablation}
\begin{tabular}{lccc}
\toprule
Method & $(0,0,0)$ $\downarrow$
       & $(1,0,0)$ $\downarrow$
       & $(0,1,0)$ $\downarrow$ \\
\midrule
Local-only RSPO+
    & $9.74 \pm 2.73$
    & $9.80 \pm 2.78$
    & $6.99 \pm 1.89$ \\
\midrule
FedSPO+
    & $9.69 \pm 2.08$
    & $10.93 \pm 2.19$
    & $12.89 \pm 1.99$ \\
RSPO+ \textit{(surrogate only)}
    & $2.26 \pm 0.40$
    & $3.05 \pm 0.25$
    & $4.71 \pm 0.49$ \\
RSPO+ \textit{w/ }$\lambda_j/\eta_j$
    & $2.26 \pm 0.40$
    & $3.05 \pm 0.25$
    & $4.63 \pm 0.46$ \\
RSPO+ \textit{w/ }$\lambda_j/\eta_j$\textit{ + P}
    & $2.16 \pm 0.36$
    & $2.41 \pm 0.38$
    & $4.08 \pm 0.41$ \\
RSPO+ (Full)
    & $\mathbf{1.86} \pm 0.26$
    & $\mathbf{2.02} \pm 0.26$
    & $\mathbf{2.58} \pm 0.40$ \\
\bottomrule
\end{tabular}
\end{table*}

\subsection{PJM Energy Pricing}
\label{appsubsec:pjm-details}
 We use hourly day-ahead locational marginal pricing (LMP) data from the PJM Interconnection, a regional transmission organization spanning 13 U.S. states \citep{pjm_lmp_2025}. We use the following nodes as clients: DOM, PEPCO, PPL, DUQ, BGE, AECO, ATSI, RECO, PSEG, DPL, DAY, METED, JCPL, APS, DEOK, AEP, COMED, PECO, and PENELEC. Prices are retrieved for the year 2025 and assigned to clients, with variation arising from localized congestion effects and are in dollars per MWh \citep{litvinov2004marginal}. Energy markets provide a natural setting for decision-focused federated learning, as operational data are often siloed across entities and subject to regulatory protections such as NERC CIP-011 \citep{nerc0114cip}.
 
\paragraph{Data processing.}
The 8 weather covariates are: mean/min/max temperature (${}^\circ$C), mean humidity, mean wind speed, mean/max solar irradiance, and day of week. The 24 target dimensions are $c_h = \mathrm{LMP}_h + \mathrm{congestion}_h$ for hours $h = 0, \ldots, 23$. The demand satisfaction constraint for client $k$ is $\sum_h w_h = \mathrm{demand}_k$ with $w_h \in [0,1]$, solved as a fractional knapsack with equality constraint.

Each client corresponds to a fixed daily demand customer with demand normalized to a range of 4 to 12 MWh. This setup captures heterogeneity across participants while maintaining realistic operational constraints.

\paragraph{Optimization problem.}
The downstream task is to minimize daily electricity cost subject to daily demand. This can be formulated as a capacity-constrained fractional knapsack problem, as described in \cref{subsec:frac-knap}. Specifically, the feasible set is defined as $S = \{w \in [0,1]^{24} : \mathbf{1}^Tw = D_j\}$, where $D_j$ is the demand at client $j$, and the problem is defined as $\min_{w \in S} c^Tw$. Demand is randomly assigned from $\{4,6,8,10,12\}$ to the PJM clients.

\paragraph{Data splits.}
The data is split chronologically with no shuffling: 50\% train, 10\% validation, 40\% test, allocated by date index, with 19 total clients.

\paragraph{Ablation Study.}
For our ablation study, we first evaluate a federated-only training run with standard MSE loss and evaluate the mean and standard deviation across 5 seeds. We then replace the MSE loss with the DFL SPO+ loss and further incorporate RSPO+. We use the average of the would-be per-client calibrated parameters, or $\frac{1}{|C|}\sum_{j=1}^{|C|} \lambda_j$ and $\frac{1}{|C|}\sum_{j=1}^{|C|} \eta_j$ from \cref{eq:calibration}, where $|C|$ is the number of clients. We set $\kappa^\lambda = \kappa^{\mathrm{LR}} = 0$ for the first 2 RSPO+ ablations. We then gradually introduce more aspects of the FedRSPO+ algorithm and evaluate similarly.
To showcase the modularity of the algorithm, we incorporate two popular FL methods: FedProx \citep{li2020federated} and Ditto \citep{li2021ditto}. During the federated stage, FedProx adds a regularization term $\mu/2 \|\theta_t - \theta\|^2$ to each client's local objective, where $\theta_t$ is the global model at the start of the round and $\theta$ is the client model. That means that FedProx only affects the federated training stage, not the personalization stage. We use $\mu = 1$ for our experiment, and keep all other parameters the same as in standard FedRSPO+.
For Ditto, the federated training phase is identical to standard FedRSPO+. Similarly to FedProx, each client solves a regularized version of their local objective, adding the term $\mu/2 \|\theta_t - \theta\|^2$. As opposed to FedProx, Ditto is introduced in the personalization phase only. We use $\mu = 1$ for the regularization parameter, and keep all other settings the same as before.
 
\begin{table}[ht]
\centering
\caption{PJM hyperparameters.}
\label{tab:hyper-pjm}
\small
\begin{tabular}{lcc}
\toprule
\textbf{Hyperparameter} & \textbf{Warm-start} & \textbf{Personalization} \\
\midrule
Communication rounds & 10 & 10 \\
Local epochs per round & 3 & 1 \\
Batch size & 64 & 64 \\
Client fraction & 0.4 & 1.0 \\
$\kappa^\lambda$ (RSPO+ only) & 0.0 & 1.0 \\
$\eta_0$ (baseline) & $10^{-3}$ & $10^{-3}$ \\
$\kappa^{\mathrm{LR}}$ (RSPO+ only) & 0.0 & 0.1 \\
Per-client $\lambda/\eta$ (RSPO+ only) & \checkmark & \checkmark \\
$\alpha$ (\cref{eq:calibration}) & 0.01 & $10^{-3}$ \\
Freeze threshold $\tau$ & --- & 0.03 \\
Validation samples/client & --- & 8 \\
\bottomrule
\end{tabular}
\end{table}

\paragraph{Leave-one-client-out PJM.} In the original experiment, each client has roughly six months of training data, and local-only RSPO+ already achieves 61.53 mean regret, compared with 60.65 for full FedRSPO+. The large local datasets leave little estimation error for federation to reduce. To investigate a case where federation would be very helpful for PJM, we design a leave-one-client-out experiment, where we train 18/19 clients normally in a federation, and leave one client out, who has access to only [3, 7, 14, 30, 60] days of data. The client then compares their local model with the federated model, fine-tuned on their data. We repeat this procedure for all 19 client nodes. In both cases we use RSPO+ to isolate the federated effect. The results are shown in \cref{tab:pjm-cold-start-regret,tab:pjm-cold-start-client-benefit}.

\begin{table*}[t]
\centering
\caption{Leave-one-client-out experiment for PJM. Results are reported in absolute decision regret; lower is better.}
\label{tab:pjm-cold-start-regret}
\small
\setlength{\tabcolsep}{3pt}
\begin{tabular}{@{}rcccccc@{}}
\toprule
\shortstack{Target training\\days} &
\shortstack{Local\\RSPO$^{+}$} &
\shortstack{Federated\\only} &
\shortstack{FedRSPO$^{+}$\\personalized} &
\shortstack{Gap vs. local\\(Fed+P)} &
\multicolumn{2}{c}{Relative reduction} \\
\cmidrule(lr){6-7}
& & & & & Fed-only & Fed+P \\
\midrule
3  & $168.48 \pm 10.29$ & $73.58 \pm 1.86$ & $73.06 \pm 2.19$ & $95.43 \pm 9.30$  & $56.3\%$ & $56.6\%$ \\
7  & $167.49 \pm 9.49$  & $73.58 \pm 1.86$ & $73.64 \pm 2.14$ & $93.85 \pm 8.71$  & $56.1\%$ & $56.0\%$ \\
14 & $164.44 \pm 10.91$ & $73.58 \pm 1.86$ & $73.31 \pm 2.15$ & $91.13 \pm 10.46$ & $55.3\%$ & $55.4\%$ \\
30 & $164.37 \pm 7.91$  & $73.58 \pm 1.86$ & $73.12 \pm 2.11$ & $91.26 \pm 7.62$  & $55.2\%$ & $55.5\%$ \\
60 & $153.86 \pm 8.17$  & $73.58 \pm 1.86$ & $72.20 \pm 1.84$ & $81.67 \pm 6.51$  & $52.2\%$ & $53.1\%$ \\
\bottomrule
\end{tabular}
\end{table*}

\begin{table}[t]
\centering
\caption{Client-level benefit of federated-only training over the original
local RSPO$^{+}$ baseline with the PJM experiment.}
\label{tab:pjm-cold-start-client-benefit}
\small
\begin{tabular}{@{}rcc@{}}
\toprule
\shortstack{Target training\\days} &
\shortstack{Federated-only\\beats local} &
\shortstack{Median client reduction\\{}[IQR]} \\
\midrule
3  & $19/19$ & $51.1\%\;[37.2\%,\,66.3\%]$ \\
7  & $19/19$ & $47.9\%\;[33.9\%,\,65.4\%]$ \\
14 & $19/19$ & $48.6\%\;[27.4\%,\,65.6\%]$ \\
30 & $19/19$ & $45.1\%\;[28.1\%,\,64.8\%]$ \\
60 & $18/19$ & $41.0\%\;[27.4\%,\,57.2\%]$ \\
\bottomrule
\end{tabular}
\end{table}

The federated-only stays constant since it does not train on the held-out target's dataset, so is unaffected by a larger target dataset. In the data-poor regimes the federated model provides the largest benefit, with the personalization providing marginal or no benefit (even 0.06 worse than federated-only at 7 days, which is negligible), due to the small target dataset. Personalization performance improves noticeably at 60 target days. In terms of per-client differences, 19/19 clients prefer the federated-only model over the local model for target days = 3,7,14, and 30, and 18/19 for target days = 60. We provide a second table in \cref{tab:pjm-cold-start-client-benefit} with inter-client median reduction and the interquartile range (IQR) for federated only versus local.

At 60 days, PENELEC is the only client with negative transfer from local to federated. These values are lower than those in \cref{tab:pjm-cold-start-regret} because the \cref{tab:pjm-cold-start-regret} computes a single ratio of aggregate means, which weights high-regret clients more heavily, whereas in \cref{tab:pjm-cold-start-client-benefit} each client contributes one percentage equally. We report only federated-only gains, since the benefits of personalization do not meaningfully change the results in this experiment. These IQRs are across the 19 client-level percentage reductions. For each client, regret is first averaged over five seeds and then converted to a percentage relative to that client’s local mean. Each experiment rotates all 19 PJM clients as the held-out target and uses five training seeds, giving $19 \times 5 = 95$ observations per target-data budget. For each seed, we first average the metric over the 19 held-out targets. 

\paragraph{Fixed demand in PJM.} To isolate the effect of feasible set heterogeneity in the PJM experiment and to show that optimization heterogeneity plays a role and FedRSPO+ performs better than federated SPO+ under heterogeneity, we reran the PJM experiment but set each client's demand to be equal. We repeat this for all 5 demand values [4,6,8,10,12] for 5 seeds. At each demand level, the gap between FedRSPO+ and FedSPO+ decreased on average. For seed $s$, let $G_{\mathrm{het},s}$ and $G_{d,s}$ denote the
regret gaps under heterogeneous demands and common demand $d$,
respectively. Each gap is defined as federated-only SPO$^{+}$ regret
minus personalized FedRSPO$^{+}$ regret. We compute,
\[
\Delta_{d,s}
=
100\,\frac{G_{d,s}-G_{\mathrm{het},s}}{G_{\mathrm{het},s}}.
\]
In \cref{tab:pjm-equal-demand-gap} we report the percentage change, where a negative gap corresponds to a narrower gap. In other words, a negative gap means FedRSPO+ and federated SPO+'s performance was closer with homogeneous feasible sets. The mean percentage change
is negative at every demand level. This is consistent with feasible-set
heterogeneity contributing to the observed performance difference.

\begin{table}[t]
\centering
\caption{Effect of equal client demands on the regret gap between
federated-only SPO$^{+}$ and personalized FedRSPO$^{+}$ on the
PJM-derived constructed-cost benchmark.}
\label{tab:pjm-equal-demand-gap}
\small
\begin{tabular}{@{}rc@{}}
\toprule
Common demand & Percentage change in gap \\
\midrule
4  & $-18.50\% \pm 9.73\%$  \\
6  & $ -8.37\% \pm 10.94\%$ \\
8  & $ -4.97\% \pm 10.03\%$ \\
10 & $-24.33\% \pm 8.71\%$  \\
12 & $-20.13\% \pm 8.16\%$  \\
\bottomrule
\end{tabular}
\end{table}
\subsection{Centralized and Local-only Baselines}
\label{appsubsec:additional-baselines}
 
To contextualize the federated results in \cref{tab:results} and \cref{tab:pjm-fedrspo-ablation}, we additionally evaluate two non-federated training regimes.
\textbf{Centralized} pools all client training data into a single dataset and trains one global model, evaluated on each client's downstream objective at test time. It is a centralized proxy for the value of data sharing when no privacy constraint applies. \textbf{Local-only} trains a separate model per client on that client's data alone, with no cross-client communication. It is a lower-bound proxy for federation that quantifies how much each client gains from collaboration. Both regimes use the same architectures, optimizer, and number of
gradient updates per client as the federated runs to keep compute comparable, and we report mean $\pm$ std over the same 5 seeds.
 
\begin{table}[h]
\centering
\small
\caption{Synthetic knapsack: relative regret (\%) of centralized and local-only baselines on representative heterogeneity configurations, alongside the federated results from \cref{tab:results} for reference. Mean $\pm$ std over 5 seeds. Lower is better ($\downarrow$).}
\label{tab:additional-knapsack}
\begin{tabular}{lcccc}
\toprule
$(\eta_o,\eta_c,\eta_d)$ & Centralized & Local-only & Fed & Fed + P \\
\midrule
\multicolumn{5}{l}{\textit{SPO+}} \\
$(0,0,0)$   & $0.32\pm 0.06$ & $2.42\pm 0.30$ & $9.7\pm 2.1$ & $5.6\pm 1.4$ \\
$(0,0,0.5)$ & $3.49\pm 0.62$ & $2.93\pm 0.54$ & $14.1\pm 2.0$ & $6.9\pm 1.3$ \\
$(0,0,1)$   & $6.97\pm 0.89$ & $2.47\pm 0.39$ & $17.7\pm 2.1$ & $6.9\pm 0.8$ \\
\midrule
\multicolumn{5}{l}{\textit{RSPO+ (Ours)}} \\
$(0,0,0)$   & $0.75\pm 0.08$ & $9.74\pm 2.73$ & $2.3\pm 0.4$ & $1.9\pm 0.3$ \\
$(0,0,0.5)$ & $4.12\pm 0.40$ & $6.93\pm 1.15$ & $6.5\pm 0.8$ & $3.8\pm 0.2$ \\
$(0,0,1)$   & $7.42\pm 0.72$ & $4.15\pm 0.69$ & $9.3\pm 1.2$ & $3.4\pm 0.2$ \\
\bottomrule
\end{tabular}
\end{table}

\paragraph{Synthetic knapsack.}
Table~\ref{tab:additional-knapsack} reports relative regret for different heterogeneity configurations. We can see two patterns. First, centralized training is the strongest baseline when clients share a data distribution (rows with $\eta_d=0$): centralized SPO+ achieves $0.32\%$ relative regret at $(0,0,0)$, far below any federated or local-only number. But centralized performance degrades sharply once clients differ in their feature distributions: at $(0,0,1)$, centralized SPO+ rises to $6.97\%$, and centralized RSPO+ to $7.42\%$, because pooling distributionally distinct clients creates a misspecified training problem. Second, local-only training stays relatively stable under heterogeneity: local-only SPO+ averages $2.96\pm 0.73\%$ across all 27 heterogeneity configurations and beats centralized SPO+ in every configuration with $\eta_d>0$. Local-only RSPO+, by contrast, is much weaker than its federated counterpart ($9.74\pm 2.73\%$ at $(0,0,0)$ versus $1.9\%$ for FedRSPO+ with personalization in \cref{tab:results}), supporting the interpretation that RSPO+'s advantage lies in federation and mitigating heterogeneity. 

\begin{table}[h]
\centering
\small
\caption{PJM energy pricing: mean absolute regret across training regimes. Mean $\pm$ std over 5 seeds. Federated rows are reproduced from \cref{tab:pjm-fedrspo-ablation} for reference. Lower is better ($\downarrow$).}
\label{tab:additional-pjm}
\begin{tabular}{lccc}
\toprule
Regime & MSE & SPO+ & RSPO+ (Ours) \\
\midrule
Centralized & $70.26\pm 0.61$  & $69.37\pm 0.44$  & $69.32\pm 1.18$ \\
Local-only & $163.76\pm 10.12$& $69.57\pm 1.72$  & $61.53\pm 0.66$ \\
Federated (naive)  & $175.45\pm 7.26$ & $81.68\pm 1.77$  & $72.89\pm 4.23$ \\
FedRSPO+ (Full)    & ---              & ---              & $\mathbf{60.65\pm 0.80}$ \\
\bottomrule
\end{tabular}
\end{table}

\paragraph{PJM energy pricing.}
\cref{tab:additional-pjm} reports the same comparison on the PJM case study, using mean absolute regret to match \cref{tab:pjm-fedrspo-ablation}. Three observations stand out.
First, centralized training close the gap between the three methods: MSE, SPO+, and RSPO+ all sit between $69$ and $71$, indicating that the PJM clients share enough structure that pooled fitting is a strong strategy regardless of the loss. Second, local-only MSE remains poor ($163.76\pm 10.12$, close to the federated MSE result of $175.45\pm 7.26$), confirming that the
improvements in the federated pipeline require decision-aware training to achieve competitive results. Third, local-only RSPO+ achieves $61.53\pm 0.66$, nearly matching the
full FedRSPO+ result of $60.65\pm 0.80$ from
Table~\ref{tab:pjm-fedrspo-ablation}. This reflects the relatively homogeneous downstream structure of PJM
(all clients face the same top-$k$ selection problem on overlapping price signals), where personalization captures most of the available
gain. On synthetic instances with structurally heterogeneous feasible sets, the analogous local-only RSPO+ baseline is much weaker, indicating
that federation contributes most when clients differ in their optimization structure rather than only in scale.

\paragraph{Discussion.} Together, these baselines reinforce the interpretation of the main results. Centralized training is a useful upper-bound comparator but is not uniformly dominant: it excels when clients are statistically aligned and degrades sharply under data heterogeneity. Local-only training is a strong personalization baseline in two specific regimes (SPO+ in synthetic data-heterogeneous settings and RSPO+ on PJM) but offers no shared model and is inconsistent across heterogeneity types, due to the addition of the downstream objective. The main FedRSPO+ result is a robust middle ground that retains performance across the heterogeneity axes while supporting data privacy. The PJM observation that local-only RSPO+ is competitive with FedRSPO+ also identifies a regime where the federated machinery is nearly redundant. We view formalizing this regime, where shared training adds little beyond the surrogate itself, as a useful direction for follow-up work.

\subsection{QP--LP decision alignment}
\label{appsubsec:qp-lp}
To complement the oracle-bias freeze statistic $\tilde U_t$, we directly measure the decision-space discrepancy between the regularized predicted decision and the LP predicted decision,
\[
\Delta_{\mathrm{QP-LP}}
=
\mathbb{E}\left[
\left\|
\Pi_S(-\hat c/\lambda_t)
-
\arg\min_{w\in S} \hat c^T w
\right\|
\right].
\]
On synthetic knapsack, personalization consistently reduces this gap across all configurations: the mean gap decreases from $3.715$ after the federated warm start to $1.166$ after personalization, a $69\%$ reduction, and this decrease occurs in all $135/135$ configurations. The freeze statistic aligns with this result. Clients that satisfy the freeze criterion have lower average decision gaps ($1.112$) than clients that do not freeze ($1.338$), and lower final oracle-bias values $\tilde{U}_t$ ($0.00858$ versus $0.01735$). The hardest cases are primarily driven by data heterogeneity, where increasing $\eta_d$ from $0$ to $1$ raises the personalized decision gap from $1.036$ to $1.335$ and reduces the frozen-client fraction from $0.917$ to $0.578$.

We observe the same qualitative effect in the PJM case study, where personalization reduces the predicted QP--LP gap from $1.955$ to $1.149$, a $41\%$ reduction. This residual gap is not negligible, about $42.6\%$ of the average predicted LP decision norm, but it confirms that personalization moves the regularized predicted decision substantially closer to the LP decision used at evaluation. Finally, the final freeze statistic is strongly correlated with the predicted QP--LP decision gap on knapsack at the configuration level ($r=0.807$), suggesting that $\tilde U_t$ is a useful proxy for QP--LP alignment. The direct decision-gap diagnostic captures predicted-decision alignment more explicitly, but adds computational overhead if tracked in situ ($|V_c|$ additional projections per client per round).

\begin{table}[t]
\centering
\caption{QP--LP discrepancies after personalization.
Values are mean normalized gaps (\%; lower is better),
averaged over clients and five seeds, and additionally over
27 heterogeneity configurations for knapsack.
The PJM diagnostic uses 10 personalization rounds.}
\label{tab:qp_lp_gaps}
\begin{tabular}{lcc}
\toprule
Dataset & Decision gap (\%) & Objective gap (\%) \\
\midrule
Knapsack    & 21.44 & 3.26 \\
PJM & 42.57 & 10.43 \\
\bottomrule
\end{tabular}
\end{table}

We also evaluate the decision and objective gaps for the fractional knapsack and PJM experiments. For client $j$, let $\widehat w^{\mathrm{QP}}_{ji}$ and
$\widehat w^{\mathrm{LP}}_{ji}$ denote the QP and LP decisions
under the same predicted cost vector $\widehat c_{ji}$,
with the QP using the client's final smoothing coefficient $\lambda$.
Let
$z^\star_{ji}=\min_{w\in S_j} c_{ji}^{\top}w$
be the LP optimum under the true cost vector.
With $\epsilon=10^{-8}$, the reported values in \cref{tab:qp_lp_gaps} use the following normalizations,
\begin{align}
D_j
&=
100\,
\frac{
  \frac{1}{n_j}\sum_{i=1}^{n_j}
  \left\|\widehat w^{\mathrm{QP}}_{ji}
       -\widehat w^{\mathrm{LP}}_{ji}\right\|_2
}{
  \frac{1}{n_j}\sum_{i=1}^{n_j}
  \left\|\widehat w^{\mathrm{LP}}_{ji}\right\|_2
  +\epsilon
},
\\
O_j
&=
\frac{100}{n_j}\sum_{i=1}^{n_j}
\frac{
  \left|c_{ji}^{\top}
  \left(\widehat w^{\mathrm{QP}}_{ji}
       -\widehat w^{\mathrm{LP}}_{ji}\right)\right|
}{
  |z^\star_{ji}|+\epsilon
}.
\end{align}
Both objectives are evaluated
using the true linear costs, without the quadratic penalty.

\begin{table}[t]
\centering
\caption{QP--LP decision alignment on synthetic knapsack. Personalization substantially reduces the predicted decision discrepancy between the regularized RSPO+ decision and the LP decision used at evaluation.}
\label{tab:qplp-gap}
\begin{tabular}{lccc}
\toprule
Metric & Mean & Median & Range \\
\midrule
Global QP--LP gap & 3.715 & 3.654 & 3.080--4.456 \\
Personalized QP--LP gap & 1.166 & 1.118 & 0.922--1.521 \\
Personalized / global ratio & 0.314 & 0.316 & 0.258--0.354 \\
Final freeze statistic & 0.01067 & 0.00959 & 0.00822--0.01761 \\
Frozen-client fraction & 0.761 & 0.800 & 0.300--1.000 \\
\bottomrule
\end{tabular}
\end{table}

\subsection{Linear predictor results}

\begin{table*}[t]
\centering
\caption{Bias-free linear predictor results on synthetic knapsack.
Entries report mean client relative regret (\%) as mean $\pm$
sample standard deviation across five seeds.
The heterogeneity tuple represents objective, constraint, and
data-distribution heterogeneity, respectively.
``Fed'' denotes the federated endpoint; ``+P'' includes subsequent
personalization. Bold indicates the lowest mean regret within
each training stage.}
\label{tab:linear_knapsack}
\small
\setlength{\tabcolsep}{3pt}
\resizebox{\textwidth}{!}{%
\begin{tabular}{l*{10}{c}}
\toprule
& \multicolumn{2}{c}{MSE}
& \multicolumn{2}{c}{PFYL}
& \multicolumn{2}{c}{DPO}
& \multicolumn{2}{c}{SPO+}
& \multicolumn{2}{c}{RSPO+} \\
\cmidrule(lr){2-3}
\cmidrule(lr){4-5}
\cmidrule(lr){6-7}
\cmidrule(lr){8-9}
\cmidrule(lr){10-11}
Heterogeneity
& Fed & +P
& Fed & +P
& Fed & +P
& Fed & +P
& Fed & +P \\
\midrule
$(0,0,0)$
& $37.76\pm2.44$ & $34.89\pm2.16$
& $41.08\pm3.14$ & $38.93\pm3.22$
& $44.99\pm4.03$ & $44.43\pm4.32$
& $39.57\pm3.16$ & $36.79\pm3.14$
& $\mathbf{12.64\pm0.91}$ & $\mathbf{11.04\pm0.70}$ \\

$(0.5,0,0)$
& $37.88\pm2.45$ & $34.97\pm2.20$
& $41.17\pm3.10$ & $39.01\pm3.19$
& $45.02\pm4.00$ & $44.45\pm4.27$
& $39.68\pm3.10$ & $36.90\pm3.16$
& $\mathbf{13.30\pm1.04}$ & $\mathbf{11.54\pm0.82}$ \\

$(1,0,0)$
& $38.42\pm2.43$ & $35.42\pm2.04$
& $41.57\pm2.95$ & $39.38\pm3.06$
& $45.08\pm3.78$ & $44.49\pm4.05$
& $40.20\pm3.00$ & $37.35\pm3.02$
& $\mathbf{15.09\pm1.14}$ & $\mathbf{13.00\pm0.97}$ \\

$(0,0.5,0)$
& $42.26\pm2.69$ & $39.38\pm2.68$
& $47.22\pm2.96$ & $45.43\pm2.89$
& $49.87\pm3.27$ & $49.38\pm3.31$
& $45.00\pm3.06$ & $42.46\pm2.95$
& $\mathbf{14.83\pm1.19}$ & $\mathbf{12.98\pm0.93}$ \\

$(0,1,0)$
& $44.03\pm2.24$ & $41.08\pm2.22$
& $49.45\pm2.82$ & $47.93\pm2.75$
& $51.97\pm2.87$ & $51.57\pm2.97$
& $47.11\pm2.76$ & $44.68\pm2.87$
& $\mathbf{15.71\pm0.58}$ & $\mathbf{13.73\pm0.52}$ \\

$(0,0,0.5)$
& $40.93\pm2.02$ & $38.34\pm1.67$
& $43.42\pm2.59$ & $41.60\pm2.66$
& $46.07\pm3.04$ & $45.52\pm3.32$
& $42.43\pm2.49$ & $40.01\pm2.54$
& $\mathbf{18.85\pm1.67}$ & $\mathbf{16.62\pm1.48}$ \\

$(0,0,1)$
& $44.53\pm1.99$ & $41.90\pm1.51$
& $45.96\pm2.62$ & $44.20\pm2.56$
& $47.39\pm2.81$ & $46.90\pm2.88$
& $45.35\pm2.32$ & $43.05\pm2.26$
& $\mathbf{28.29\pm2.31}$ & $\mathbf{25.19\pm2.35}$ \\
\bottomrule
\end{tabular}%
}
\end{table*}

To strengthen the connection to \cref{thm:fedrspo-convergence}, we reran all seven synthetic-knapsack settings using a bias-free linear predictor, consistent with the predictor class assumed in the convergence analysis. RSPO+ achieves the lowest regret in all 14 federated and personalized comparisons. Averaged across settings, RSPO+ obtains 16.96\% federated regret, compared with 40.83\% for the strongest baseline, and 14.87\% personalized regret, compared with 38.00\% for the strongest baseline. This corresponds to approximately $2.4\times$ and $2.6\times$ lower regret, respectively. Detailed results are shown in \cref{tab:linear_knapsack}.

\subsection{Hyperparameter Selection}
\label{appsubsec:hyperparameter-selection}

Hyperparameters were not selected by automated search. The RSPO+-specific parameters ($\lambda_j$, $\eta_j$) were set using the principled per-client defaults derived in \cref{eq:calibration}. The remaining parameters (rounds, epochs, batch size, client fraction) were fixed to standard values from the FL literature. A sensitivity analysis over personalization parameters for the knapsack experiment is available in the supplementary code and the results are shown in \cref{fig:multifig-appendix}.

\begin{figure}
    \centering
    \begin{minipage}{0.333\linewidth}
        \centering
        \resizebox{\linewidth}{!}{\newcommand{\figfont}{\large}
\begin{tikzpicture}[font=\Large]
\begin{axis}[
scale only axis,
  legend style={
    at={(0.5,1.05)}, anchor=south,
    draw=none, fill=none,
    font=\Large,
    legend columns=-1,
    column sep=8pt
  },
  ylabel={Relative Regret},
  xlabel={Freezing criterion $\tau$},
  width=9cm, height=6cm,
  xmin=0.45, xmax=3.55,
  ymin=0, ymax=0.082,
  xtick={1, 2, 3},
  xticklabels={$1\%$, $5\%$, $10\%$},
    yticklabel = {%
  \pgfmathparse{100*\tick}%
  \pgfmathprintnumber[fixed,precision=0]{\pgfmathresult}\%
},
  ytick={0, 0.02, 0.04, 0.06, 0.08},
  tick align=outside,
  scaled x ticks=false,
  scaled y ticks=false,
  ymajorgrids=false,
  xmajorgrids=false,
  axis line style={draw=black},
  tick style={draw=black},
  clip=false,
  axis lines=left,
]

\addplot+[
  ybar, bar width=9pt, bar shift=-4.5pt,
  fill=paperggplotrspo, draw=black, line width=0.4pt,
  mark=none
] coordinates {
  (1.0, 0.03413376074714526)
  (2.0, 0.0413656)
  (3.0, 0.04629643806916572)
};

\addplot+[
  ybar, bar width=9pt, bar shift=4.5pt,
  fill=paperggplotspo, draw=black, line width=0.4pt,
  mark=none
] coordinates {
  (1.0, 0.07023837314775863)
  (2.0, 0.07023837314775863)
  (3.0, 0.07023837314775863)
};

% Tau = 1
\draw[->, line width=0.9pt, paperggplotdelta]
  ([xshift=-4.5pt]axis cs:1, 0.034134) -- ([xshift=-4.5pt]axis cs:1, 0.070238);
\draw[dash pattern=on 2pt off 1.5pt, line width=0.4pt, paperggplotdelta]
  ([xshift=4.5pt]axis cs:1, 0.070238) -- ([xshift=-9pt]axis cs:1, 0.070238);
\node[font=\figfont, inner sep=1pt, text=paperggplotdelta, anchor=west]
  at ([xshift=-32pt]axis cs:1, 0.052186) {$3.6\%$};

% Tau = 2
\draw[->, line width=0.9pt, paperggplotdelta]
  ([xshift=-4.5pt]axis cs:2, 0.037880) -- ([xshift=-4.5pt]axis cs:2, 0.070238);
\draw[dash pattern=on 2pt off 1.5pt, line width=0.4pt, paperggplotdelta]
  ([xshift=4.5pt]axis cs:2, 0.070238) -- ([xshift=-9pt]axis cs:2, 0.070238);
\node[font=\figfont, inner sep=1pt, text=paperggplotdelta, anchor=west]
  at ([xshift=-32pt]axis cs:2, 0.054059) {$3.2\%$};

% Tau = 3
\draw[->, line width=0.9pt, paperggplotdelta]
  ([xshift=-4.5pt]axis cs:3, 0.046296) -- ([xshift=-4.5pt]axis cs:3, 0.070238);
\draw[dash pattern=on 2pt off 1.5pt, line width=0.4pt, paperggplotdelta]
  ([xshift=4.5pt]axis cs:3, 0.070238) -- ([xshift=-9pt]axis cs:3, 0.070238);
\node[font=\figfont, inner sep=1pt, text=paperggplotdelta, anchor=west]
  at ([xshift=-32pt]axis cs:3, 0.058267) {$2.4\%$};

\end{axis}
\end{tikzpicture}}
    \end{minipage}%
    \begin{minipage}{0.333\linewidth}
        \centering
        \resizebox{\linewidth}{!}{\newcommand{\figfont}{\large}
\begin{tikzpicture}[font=\Large]
\begin{axis}[
scale only axis,
  legend style={
    at={(0.5,0.9)}, anchor=south,
    draw=none, fill=none,
    font=\Large,
    legend columns=-1,
    column sep=8pt
  },
  ylabel={Relative Regret},
  xlabel={$\kappa_\lambda$},
  width=9cm, height=6cm,
  xmin=0.45, xmax=5.55,
  ymin=0, ymax=0.082,
  xtick={1, 2, 3, 4, 5},
  xticklabels={0, 0.5, 1, 1.5, 2},
    yticklabel = {%
  \pgfmathparse{100*\tick}%
  \pgfmathprintnumber[fixed,precision=0]{\pgfmathresult}\%
},
  ytick={0, 0.02, 0.04, 0.06, 0.08},
  tick align=outside,
  scaled x ticks=false,
  scaled y ticks=false,
  ymajorgrids=false,
  xmajorgrids=false,
  axis line style={draw=black},
  tick style={draw=black},
  clip=false,
  axis lines=left,
]

\addplot+[
  ybar, bar width=9pt, bar shift=-4.5pt,
  fill=paperggplotrspo, draw=black, line width=0.4pt,
  mark=none,
  legend image code/.code={
    \draw[draw=black, line width=0.4pt, fill=paperggplotrspo]
      (0cm,-0.1cm) rectangle (0.3cm,0.2cm);}
] coordinates {
  (1.0, 0.03799537477828649)
  (2.0, 0.02276967316105169)
  (3.0, 0.020226868234788822)
  (4.0, 0.01899967589710587)
  (5.0, 0.018177237723071768)
};
\addlegendentry{RSPO+}

\addplot+[
  ybar, bar width=9pt, bar shift=4.5pt,
  fill=paperggplotspo, draw=black, line width=0.4pt,
  mark=none,
  legend image code/.code={
    \draw[draw=black, line width=0.4pt, fill=paperggplotspo]
      (0cm,-0.1cm) rectangle (0.3cm,0.2cm);}
] coordinates {
  (1.0, 0.07023837314775863)
  (2.0, 0.07023837314775863)
  (3.0, 0.07023837314775863)
  (4.0, 0.07023837314775863)
  (5.0, 0.07023837314775863)
};
\addlegendentry{SPO+}

% kappa_lambda = 1
\draw[->, line width=0.9pt, paperggplotdelta]
  ([xshift=-4.5pt]axis cs:1, 0.037995) -- ([xshift=-4.5pt]axis cs:1, 0.070238);
\draw[dash pattern=on 2pt off 1.5pt, line width=0.4pt, paperggplotdelta]
  ([xshift=4.5pt]axis cs:1, 0.070238) -- ([xshift=-9pt]axis cs:1, 0.070238);
\node[font=\figfont, inner sep=1pt, text=paperggplotdelta, anchor=west]
  at ([xshift=-28pt]axis cs:1, 0.054117) {$3.2\%$};

% kappa_lambda = 2
\draw[->, line width=0.9pt, paperggplotdelta]
  ([xshift=-4.5pt]axis cs:2, 0.022770) -- ([xshift=-4.5pt]axis cs:2, 0.070238);
\draw[dash pattern=on 2pt off 1.5pt, line width=0.4pt, paperggplotdelta]
  ([xshift=4.5pt]axis cs:2, 0.070238) -- ([xshift=-9pt]axis cs:2, 0.070238);
\node[font=\figfont, inner sep=1pt, text=paperggplotdelta, anchor=west]
  at ([xshift=-32pt]axis cs:2, 0.046504) {$4.7\%$};

% kappa_lambda = 3
\draw[->, line width=0.9pt, paperggplotdelta]
  ([xshift=-4.5pt]axis cs:3, 0.020227) -- ([xshift=-4.5pt]axis cs:3, 0.070238);
\draw[dash pattern=on 2pt off 1.5pt, line width=0.4pt, paperggplotdelta]
  ([xshift=4.5pt]axis cs:3, 0.070238) -- ([xshift=-9pt]axis cs:3, 0.070238);
\node[font=\figfont, inner sep=1pt, text=paperggplotdelta, anchor=west]
  at ([xshift=-32pt]axis cs:3, 0.045233) {$5.0\%$};

% kappa_lambda = 4
\draw[->, line width=0.9pt, paperggplotdelta]
  ([xshift=-4.5pt]axis cs:4, 0.019000) -- ([xshift=-4.5pt]axis cs:4, 0.070238);
\draw[dash pattern=on 2pt off 1.5pt, line width=0.4pt, paperggplotdelta]
  ([xshift=4.5pt]axis cs:4, 0.070238) -- ([xshift=-9pt]axis cs:4, 0.070238);
\node[font=\figfont, inner sep=1pt, text=paperggplotdelta, anchor=west]
  at ([xshift=-32pt]axis cs:4, 0.044619) {$5.1\%$};

% kappa_lambda = 5
\draw[->, line width=0.9pt, paperggplotdelta]
  ([xshift=-4.5pt]axis cs:5, 0.018177) -- ([xshift=-4.5pt]axis cs:5, 0.070238);
\draw[dash pattern=on 2pt off 1.5pt, line width=0.4pt, paperggplotdelta]
  ([xshift=4.5pt]axis cs:5, 0.070238) -- ([xshift=-9pt]axis cs:5, 0.070238);
\node[font=\figfont, inner sep=1pt, text=paperggplotdelta, anchor=west]
  at ([xshift=-32pt]axis cs:5, 0.044208) {$5.2\%$};

\end{axis}
\end{tikzpicture}}
    \end{minipage}%
    \begin{minipage}{0.333\linewidth}
        \centering
        \resizebox{\linewidth}{!}{\newcommand{\figfont}{\large}
\begin{tikzpicture}[font=\Large]
\begin{axis}[
scale only axis,
  legend style={
    at={(0.5,1.05)}, anchor=south,
    draw=none, fill=none,
    font=\Large,
    legend columns=-1,
    column sep=8pt
  },
  ylabel={Relative Regret},
  xlabel={$\kappa_{\mathrm{LR}}$},
  width=9cm, height=6cm,
  xmin=0.45, xmax=3.55,
  ymin=0, ymax=0.082,
  xtick={1, 2, 3},
  xticklabels={0.2, 0.5, 0.8},
    yticklabel = {%
  \pgfmathparse{100*\tick}%
  \pgfmathprintnumber[fixed,precision=0]{\pgfmathresult}\%
},
  ytick={0, 0.02, 0.04, 0.06, 0.08},
  tick align=outside,
  scaled x ticks=false,
  scaled y ticks=false,
  ymajorgrids=false,
  xmajorgrids=false,
  axis line style={draw=black},
  tick style={draw=black},
  clip=false,
  axis lines=left,
]

\addplot+[
  ybar, bar width=9pt, bar shift=-4.5pt,
  fill=paperggplotrspo, draw=black, line width=0.4pt,
  mark=none
] coordinates {
  (1.0, 0.023005081488838338)
  (2.0, 0.028120062014441602)
  (3.0, 0.03196318201721555)
};

\addplot+[
  ybar, bar width=9pt, bar shift=4.5pt,
  fill=paperggplotspo, draw=black, line width=0.4pt,
  mark=none
] coordinates {
  (1.0, 0.07023837314775863)
  (2.0, 0.07023837314775863)
  (3.0, 0.07023837314775863)
};

% kappa_lr = 1
\draw[->, line width=0.9pt, paperggplotdelta]
  ([xshift=-4.5pt]axis cs:1, 0.023005) -- ([xshift=-4.5pt]axis cs:1, 0.070238);
\draw[dash pattern=on 2pt off 1.5pt, line width=0.4pt, paperggplotdelta]
  ([xshift=4.5pt]axis cs:1, 0.070238) -- ([xshift=-9pt]axis cs:1, 0.070238);
\node[font=\figfont, inner sep=1pt, text=paperggplotdelta, anchor=west]
  at ([xshift=-32pt]axis cs:1, 0.046622) {$4.7\%$};

% kappa_lr = 2
\draw[->, line width=0.9pt, paperggplotdelta]
  ([xshift=-4.5pt]axis cs:2, 0.028120) -- ([xshift=-4.5pt]axis cs:2, 0.070238);
\draw[dash pattern=on 2pt off 1.5pt, line width=0.4pt, paperggplotdelta]
  ([xshift=4.5pt]axis cs:2, 0.070238) -- ([xshift=-9pt]axis cs:2, 0.070238);
\node[font=\figfont, inner sep=1pt, text=paperggplotdelta, anchor=west]
  at ([xshift=-32pt]axis cs:2, 0.049179) {$4.2\%$};

% kappa_lr = 3
\draw[->, line width=0.9pt, paperggplotdelta]
  ([xshift=-4.5pt]axis cs:3, 0.031963) -- ([xshift=-4.5pt]axis cs:3, 0.070238);
\draw[dash pattern=on 2pt off 1.5pt, line width=0.4pt, paperggplotdelta]
  ([xshift=4.5pt]axis cs:3, 0.070238) -- ([xshift=-9pt]axis cs:3, 0.070238);
\node[font=\figfont, inner sep=1pt, text=paperggplotdelta, anchor=west]
  at ([xshift=-32pt]axis cs:3, 0.051101) {$3.8\%$};

\end{axis}
\end{tikzpicture}}
    \end{minipage}
    \caption{(a) Relative regret (\%) of SPO+ and RSPO+ across different (a) values of the freezing threshold $\tau$, (b) values of $\kappa^\lambda$, and (c) values of $\kappa^{\mathrm{LR}}$. Lower is better ($\downarrow$).}
    \label{fig:multifig-appendix}
\end{figure}
\newpage
\subsection{Algorithms}
\label{appsubsec:results}
\begin{algorithm}[H]
\caption{FedRSPO+: federated warm start with co-annealing scheduling and personalization.}
\label{alg:fedrspoplus}
\DontPrintSemicolon
\SetKwFunction{LocalTrain}{LocalTrain}
\SetKwFunction{StopMetric}{StopMetric}
\KwIn{client datasets $\{(\mathcal D_i^{\mathrm{tr}},\mathcal V_i,n_i)\}_{i=1}^N$, rounds $R_{\mathrm{fed}},R_{\mathrm{per}}$, local epochs $E$, participation rate $\rho$, freeze threshold $\tau$, post-freeze budget $K$}
initialize global model $\phi^0$\;
\For{$i=1$ \KwTo $N$}{
    initialize per-client schedules $(\lambda_{i,0},\,\eta_{i,0})$\;
}
global freeze flag $g\gets 0$\;
\medskip
\tcc{Phase 1: Federated training}
\For{$r=0$ \KwTo $R_{\mathrm{fed}}-1$}{
    \lIf{$g=0$}{advance schedule counters $\lambda_i^{(r)},\eta_i^{(r)}$ for all $i$}
    sample $I_r\subseteq\{1,\dots,N\}$ with $|I_r|=\lceil\rho N\rceil$\;
    \ForAll{$i\in I_r$ \textbf{in parallel}}{
        $\tilde{\phi}_i^{r+1}\gets{}$\LocalTrain{$\phi^r,\mathcal D_i^{\mathrm{tr}},\lambda_i^{(r)},\eta_i^{(r)},E$}\;
        $\Delta_i^r\gets\tilde{\phi}_i^{r+1}-\phi^r$\;
    }
    $\phi^{r+1}\gets\phi^r+\sum_{i\in I_r}\frac{n_i}{\sum_{j\in I_r}n_j}\,\Delta_i^r$\;
    \lIf{\StopMetric{$\phi^{r+1},\{\mathcal V_i\},\{\lambda_i^{(r)}\}$} $\le\tau$}{$g\gets 1$ \tcp*{freeze annealing}}
}
\medskip
\tcc{Phase 2: Personalization (warm-started from $\phi^{R_{\mathrm{fed}}}$)}
\ForAll{$i=1,\dots,N$ \textbf{in parallel}}{
    $\phi_i^{\mathrm{per}}\gets\phi^{R_{\mathrm{fed}}}$\;
    initialize personal schedules $(\lambda_{i,\mathrm{per}}^{(0)},\,\eta_{i,\mathrm{per}}^{(0)})$\;
    client freeze flag $g_i\gets 0$,\; freeze round $t_i\gets+\infty$\;
    \For{$r=0$ \KwTo $R_{\mathrm{per}}-1$}{
        \lIf{$r>t_i+K$}{\textbf{break}}
        \lIf{$g_i=0$}{advance personal schedule counters $\lambda_{i,\mathrm{per}}^{(r)},\eta_{i,\mathrm{per}}^{(r)}$}
        $\phi_i^{\mathrm{per}}\gets{}$\LocalTrain{$\phi_i^{\mathrm{per}},\mathcal D_i^{\mathrm{tr}},\lambda_{i,\mathrm{per}}^{(r)},\eta_{i,\mathrm{per}}^{(r)},E$}\;
        \lIf{\StopMetric{$\phi_i^{\mathrm{per}},\mathcal V_i,\lambda_{i,\mathrm{per}}^{(r)}$} $\le\tau$ \textbf{and} $g_i=0$}{$g_i\gets 1$,\; $t_i\gets r$}
    }
}
\Return $\phi^{R_{\mathrm{fed}}}$ and $\{\phi_{i,\star}^{\mathrm{per}}\}_{i=1}^N$\;
\end{algorithm}
\begin{comment}
\subsection{Bounding Point-wise RSPO\texorpdfstring{+}{+} Losses}
\label{sec:rspo-bounds}

For the learning rate, we use $\eta_t = \eta_0 \frac{\lambda_t}{\lambda_0},$ with $\eta_0 = 10^{-2}.$

\paragraph{Stopping criterion} Then
\[
c_i^T \hat{w}_{i,t} - z_i^\star
=
\underbrace{c_i^T (\hat{w}_{i,t} - w_{i,t}^{\star,\mathrm{reg}})}_{\text{estimation error}}
+
\underbrace{(c_i^T w_{i,t}^{\star,\mathrm{reg}} - z_i^\star)}_{\text{regularization bias}}.
\]
The first term depends on model quality and may remain bounded away from zero due to noise or approximation error, whereas the second term is only affected by annealing $\lambda_t$. Since $w_{i,t}^{\star,\mathrm{reg}}$ depends only on $(c_i,\lambda_t)$, this criterion isolates the portion of the downstream gap controlled by the regularization schedule.
\end{comment}
\clearpage
\section*{NeurIPS Paper Checklist}

\begin{enumerate}

\item {\bf Claims}
    \item[] Question: Do the main claims made in the abstract and introduction accurately reflect the paper's contributions and scope?
    \item[] Answer: \answerYes{} % Replace by \answerYes{}, \answerNo{}, or \answerNA{}.
    \item[] Justification: The abstract and introduction clearly state our three contributions: the RSPO+ surrogate with regret/error bounds, FedRSPO+ with heterogeneity bounds that vanish at homogeneity, and empirical evaluation across three tasks.
    \item[] Guidelines: 
    \begin{itemize}
        \item The answer \answerNA{} means that the abstract and introduction do not include the claims made in the paper.
        \item The abstract and/or introduction should clearly state the claims made, including the contributions made in the paper and important assumptions and limitations. A \answerNo{} or \answerNA{} answer to this question will not be perceived well by the reviewers. 
        \item The claims made should match theoretical and experimental results, and reflect how much the results can be expected to generalize to other settings. 
        \item It is fine to include aspirational goals as motivation as long as it is clear that these goals are not attained by the paper. 
    \end{itemize}

\item {\bf Limitations}
    \item[] Question: Does the paper discuss the limitations of the work performed by the authors?
    \item[] Answer: \answerYes{} % Replace by \answerYes{}, \answerNo{}, or \answerNA{}.
    \item[] Justification: We discuss limitations of our approach in \cref{sec:conclusions}.
    \item[] Guidelines:
    \begin{itemize}
        \item The answer \answerNA{} means that the paper has no limitation while the answer \answerNo{} means that the paper has limitations, but those are not discussed in the paper. 
        \item The authors are encouraged to create a separate ``Limitations'' section in their paper.
        \item The paper should point out any strong assumptions and how robust the results are to violations of these assumptions (e.g., independence assumptions, noiseless settings, model well-specification, asymptotic approximations only holding locally). The authors should reflect on how these assumptions might be violated in practice and what the implications would be.
        \item The authors should reflect on the scope of the claims made, e.g., if the approach was only tested on a few datasets or with a few runs. In general, empirical results often depend on implicit assumptions, which should be articulated.
        \item The authors should reflect on the factors that influence the performance of the approach. For example, a facial recognition algorithm may perform poorly when image resolution is low or images are taken in low lighting. Or a speech-to-text system might not be used reliably to provide closed captions for online lectures because it fails to handle technical jargon.
        \item The authors should discuss the computational efficiency of the proposed algorithms and how they scale with dataset size.
        \item If applicable, the authors should discuss possible limitations of their approach to address problems of privacy and fairness.
        \item While the authors might fear that complete honesty about limitations might be used by reviewers as grounds for rejection, a worse outcome might be that reviewers discover limitations that aren't acknowledged in the paper. The authors should use their best judgment and recognize that individual actions in favor of transparency play an important role in developing norms that preserve the integrity of the community. Reviewers will be specifically instructed to not penalize honesty concerning limitations.
    \end{itemize}

\item {\bf Theory assumptions and proofs}
    \item[] Question: For each theoretical result, does the paper provide the full set of assumptions and a complete (and correct) proof?
    \item[] Answer: \answerYes{} % Replace by \answerYes{}, \answerNo{}, or \answerNA{}.
    \item[] Justification: All proofs are provided in the Appendix, \cref{appsec:proofs}.
    \item[] Guidelines:
    \begin{itemize}
        \item The answer \answerNA{} means that the paper does not include theoretical results. 
        \item All the theorems, formulas, and proofs in the paper should be numbered and cross-referenced.
        \item All assumptions should be clearly stated or referenced in the statement of any theorems.
        \item The proofs can either appear in the main paper or the supplemental material, but if they appear in the supplemental material, the authors are encouraged to provide a short proof sketch to provide intuition. 
        \item Inversely, any informal proof provided in the core of the paper should be complemented by formal proofs provided in appendix or supplemental material.
        \item Theorems and Lemmas that the proof relies upon should be properly referenced. 
    \end{itemize}

    \item {\bf Experimental result reproducibility}
    \item[] Question: Does the paper fully disclose all the information needed to reproduce the main experimental results of the paper to the extent that it affects the main claims and/or conclusions of the paper (regardless of whether the code and data are provided or not)?
    \item[] Answer: \answerYes{} % Replace by \answerYes{}, \answerNo{}, or \answerNA{}.
    \item[] Justification: We have implementation details for each experiment, including hyperparameter choices and data splits used, to reproduce all our results. All data sources are provided and well-documented. We also provide the anonymized code for reproducibility.
    \item[] Guidelines:
    \begin{itemize}
        \item The answer \answerNA{} means that the paper does not include experiments.
        \item If the paper includes experiments, a \answerNo{} answer to this question will not be perceived well by the reviewers: Making the paper reproducible is important, regardless of whether the code and data are provided or not.
        \item If the contribution is a dataset and\slash or model, the authors should describe the steps taken to make their results reproducible or verifiable. 
        \item Depending on the contribution, reproducibility can be accomplished in various ways. For example, if the contribution is a novel architecture, describing the architecture fully might suffice, or if the contribution is a specific model and empirical evaluation, it may be necessary to either make it possible for others to replicate the model with the same dataset, or provide access to the model. In general. releasing code and data is often one good way to accomplish this, but reproducibility can also be provided via detailed instructions for how to replicate the results, access to a hosted model (e.g., in the case of a large language model), releasing of a model checkpoint, or other means that are appropriate to the research performed.
        \item While NeurIPS does not require releasing code, the conference does require all submissions to provide some reasonable avenue for reproducibility, which may depend on the nature of the contribution. For example
        \begin{enumerate}
            \item If the contribution is primarily a new algorithm, the paper should make it clear how to reproduce that algorithm.
            \item If the contribution is primarily a new model architecture, the paper should describe the architecture clearly and fully.
            \item If the contribution is a new model (e.g., a large language model), then there should either be a way to access this model for reproducing the results or a way to reproduce the model (e.g., with an open-source dataset or instructions for how to construct the dataset).
            \item We recognize that reproducibility may be tricky in some cases, in which case authors are welcome to describe the particular way they provide for reproducibility. In the case of closed-source models, it may be that access to the model is limited in some way (e.g., to registered users), but it should be possible for other researchers to have some path to reproducing or verifying the results.
        \end{enumerate}
    \end{itemize}

\item {\bf Open access to data and code}
    \item[] Question: Does the paper provide open access to the data and code, with sufficient instructions to faithfully reproduce the main experimental results, as described in supplemental material?
    \item[] Answer: \answerYes{} % Replace by \answerYes{}, \answerNo{}, or \answerNA{}.
    \item[] Justification: Our paper provides the anonymized code necessary to reproduce the experiments, and all data sources are open and well-documented.
    \item[] Guidelines:
    \begin{itemize}
        \item The answer \answerNA{} means that paper does not include experiments requiring code.
        \item Please see the NeurIPS code and data submission guidelines (\url{https://neurips.cc/public/guides/CodeSubmissionPolicy}) for more details.
        \item While we encourage the release of code and data, we understand that this might not be possible, so \answerNo{} is an acceptable answer. Papers cannot be rejected simply for not including code, unless this is central to the contribution (e.g., for a new open-source benchmark).
        \item The instructions should contain the exact command and environment needed to run to reproduce the results. See the NeurIPS code and data submission guidelines (\url{https://neurips.cc/public/guides/CodeSubmissionPolicy}) for more details.
        \item The authors should provide instructions on data access and preparation, including how to access the raw data, preprocessed data, intermediate data, and generated data, etc.
        \item The authors should provide scripts to reproduce all experimental results for the new proposed method and baselines. If only a subset of experiments are reproducible, they should state which ones are omitted from the script and why.
        \item At submission time, to preserve anonymity, the authors should release anonymized versions (if applicable).
        \item Providing as much information as possible in supplemental material (appended to the paper) is recommended, but including URLs to data and code is permitted.
    \end{itemize}

\item {\bf Experimental setting/details}
    \item[] Question: Does the paper specify all the training and test details (e.g., data splits, hyperparameters, how they were chosen, type of optimizer) necessary to understand the results?
    \item[] Answer: \answerYes{} % Replace by \answerYes{}, \answerNo{}, or \answerNA{}.
    \item[] Justification: We explicitly provide all the experimental details and setups for each experiment in the Appendix, \cref{appsec:implementation-details}.
    \item[] Guidelines:
    \begin{itemize}
        \item The answer \answerNA{} means that the paper does not include experiments.
        \item The experimental setting should be presented in the core of the paper to a level of detail that is necessary to appreciate the results and make sense of them.
        \item The full details can be provided either with the code, in appendix, or as supplemental material.
    \end{itemize}

\item {\bf Experiment statistical significance}
    \item[] Question: Does the paper report error bars suitably and correctly defined or other appropriate information about the statistical significance of the experiments?
    \item[] Answer: \answerYes{} % Replace by \answerYes{}, \answerNo{}, or \answerNA{}.
    \item[] Justification: We report mean $\pm 1\sigma$ (standard deviation) over 5 random seeds for all experiments and describe error bars and methodology in the Appendix, \cref{appsec:implementation-details}.
    \item[] Guidelines:
    \begin{itemize}
        \item The answer \answerNA{} means that the paper does not include experiments.
        \item The authors should answer \answerYes{} if the results are accompanied by error bars, confidence intervals, or statistical significance tests, at least for the experiments that support the main claims of the paper.
        \item The factors of variability that the error bars are capturing should be clearly stated (for example, train/test split, initialization, random drawing of some parameter, or overall run with given experimental conditions).
        \item The method for calculating the error bars should be explained (closed form formula, call to a library function, bootstrap, etc.)
        \item The assumptions made should be given (e.g., Normally distributed errors).
        \item It should be clear whether the error bar is the standard deviation or the standard error of the mean.
        \item It is OK to report 1-sigma error bars, but one should state it. The authors should preferably report a 2-sigma error bar than state that they have a 96\% CI, if the hypothesis of Normality of errors is not verified.
        \item For asymmetric distributions, the authors should be careful not to show in tables or figures symmetric error bars that would yield results that are out of range (e.g., negative error rates).
        \item If error bars are reported in tables or plots, the authors should explain in the text how they were calculated and reference the corresponding figures or tables in the text.
    \end{itemize}

\item {\bf Experiments compute resources}
    \item[] Question: For each experiment, does the paper provide sufficient information on the computer resources (type of compute workers, memory, time of execution) needed to reproduce the experiments?
    \item[] Answer: \answerYes{} % Replace by \answerYes{}, \answerNo{}, or \answerNA{}.
    \item[] Justification: We provide the computer resources in the implementation details in the Appendix.
    \item[] Guidelines:
    \begin{itemize}
        \item The answer \answerNA{} means that the paper does not include experiments.
        \item The paper should indicate the type of compute workers CPU or GPU, internal cluster, or cloud provider, including relevant memory and storage.
        \item The paper should provide the amount of compute required for each of the individual experimental runs as well as estimate the total compute. 
        \item The paper should disclose whether the full research project required more compute than the experiments reported in the paper (e.g., preliminary or failed experiments that didn't make it into the paper). 
    \end{itemize}
    
\item {\bf Code of ethics}
    \item[] Question: Does the research conducted in the paper conform, in every respect, with the NeurIPS Code of Ethics \url{https://neurips.cc/public/EthicsGuidelines}?
    \item[] Answer: \answerYes{} % Replace by \answerYes{}, \answerNo{}, or \answerNA{}.
    \item[] Justification: We read carefully and align with the Ethics Guidelines shown.
    \item[] Guidelines:
    \begin{itemize}
        \item The answer \answerNA{} means that the authors have not reviewed the NeurIPS Code of Ethics.
        \item If the authors answer \answerNo, they should explain the special circumstances that require a deviation from the Code of Ethics.
        \item The authors should make sure to preserve anonymity (e.g., if there is a special consideration due to laws or regulations in their jurisdiction).
    \end{itemize}

\item {\bf Broader impacts}
    \item[] Question: Does the paper discuss both potential positive societal impacts and negative societal impacts of the work performed?
    \item[] Answer: \answerYes{} % Replace by \answerYes{}, \answerNo{}, or \answerNA{}.
    \item[] Justification: We state the potential broader impacts in \cref{sec:conclusions}.
    \item[] Guidelines:
    \begin{itemize}
        \item The answer \answerNA{} means that there is no societal impact of the work performed.
        \item If the authors answer \answerNA{} or \answerNo, they should explain why their work has no societal impact or why the paper does not address societal impact.
        \item Examples of negative societal impacts include potential malicious or unintended uses (e.g., disinformation, generating fake profiles, surveillance), fairness considerations (e.g., deployment of technologies that could make decisions that unfairly impact specific groups), privacy considerations, and security considerations.
        \item The conference expects that many papers will be foundational research and not tied to particular applications, let alone deployments. However, if there is a direct path to any negative applications, the authors should point it out. For example, it is legitimate to point out that an improvement in the quality of generative models could be used to generate Deepfakes for disinformation. On the other hand, it is not needed to point out that a generic algorithm for optimizing neural networks could enable people to train models that generate Deepfakes faster.
        \item The authors should consider possible harms that could arise when the technology is being used as intended and functioning correctly, harms that could arise when the technology is being used as intended but gives incorrect results, and harms following from (intentional or unintentional) misuse of the technology.
        \item If there are negative societal impacts, the authors could also discuss possible mitigation strategies (e.g., gated release of models, providing defenses in addition to attacks, mechanisms for monitoring misuse, mechanisms to monitor how a system learns from feedback over time, improving the efficiency and accessibility of ML).
    \end{itemize}
    
\item {\bf Safeguards}
    \item[] Question: Does the paper describe safeguards that have been put in place for responsible release of data or models that have a high risk for misuse (e.g., pre-trained language models, image generators, or scraped datasets)?
    \item[] Answer: \answerNA{} % Replace by \answerYes{}, \answerNo{}, or \answerNA{}.
    \item[] Justification: Our datasets are widely used ones, such as PJM energy markets and the Warcraft DFL benchmark. Their safety has been proven by existing work.
    \item[] Guidelines:
    \begin{itemize}
        \item The answer \answerNA{} means that the paper poses no such risks.
        \item Released models that have a high risk for misuse or dual-use should be released with necessary safeguards to allow for controlled use of the model, for example by requiring that users adhere to usage guidelines or restrictions to access the model or implementing safety filters. 
        \item Datasets that have been scraped from the Internet could pose safety risks. The authors should describe how they avoided releasing unsafe images.
        \item We recognize that providing effective safeguards is challenging, and many papers do not require this, but we encourage authors to take this into account and make a best faith effort.
    \end{itemize}

\item {\bf Licenses for existing assets}
    \item[] Question: Are the creators or original owners of assets (e.g., code, data, models), used in the paper, properly credited and are the license and terms of use explicitly mentioned and properly respected?
    \item[] Answer: \answerYes{} % Replace by \answerYes{}, \answerNo{}, or \answerNA{}.
    \item[] Justification: We cite all works and tools used to properly credit their authors.
    \item[] Guidelines:
    \begin{itemize}
        \item The answer \answerNA{} means that the paper does not use existing assets.
        \item The authors should cite the original paper that produced the code package or dataset.
        \item The authors should state which version of the asset is used and, if possible, include a URL.
        \item The name of the license (e.g., CC-BY 4.0) should be included for each asset.
        \item For scraped data from a particular source (e.g., website), the copyright and terms of service of that source should be provided.
        \item If assets are released, the license, copyright information, and terms of use in the package should be provided. For popular datasets, \url{paperswithcode.com/datasets} has curated licenses for some datasets. Their licensing guide can help determine the license of a dataset.
        \item For existing datasets that are re-packaged, both the original license and the license of the derived asset (if it has changed) should be provided.
        \item If this information is not available online, the authors are encouraged to reach out to the asset's creators.
    \end{itemize}

\item {\bf New assets}
    \item[] Question: Are new assets introduced in the paper well documented and is the documentation provided alongside the assets?
    \item[] Answer: \answerYes{} % Replace by \answerYes{}, \answerNo{}, or \answerNA{}.
    \item[] Justification: We have documented our models and methods used, and release them along with the anonymized code.
    \item[] Guidelines:
    \begin{itemize}
        \item The answer \answerNA{} means that the paper does not release new assets.
        \item Researchers should communicate the details of the dataset\slash code\slash model as part of their submissions via structured templates. This includes details about training, license, limitations, etc. 
        \item The paper should discuss whether and how consent was obtained from people whose asset is used.
        \item At submission time, remember to anonymize your assets (if applicable). You can either create an anonymized URL or include an anonymized zip file.
    \end{itemize}

\item {\bf Crowdsourcing and research with human subjects}
    \item[] Question: For crowdsourcing experiments and research with human subjects, does the paper include the full text of instructions given to participants and screenshots, if applicable, as well as details about compensation (if any)? 
    \item[] Answer: \answerNA{} % Replace by \answerYes{}, \answerNo{}, or \answerNA{}.
    \item[] Justification: This paper does not involve crowdsourcing or human subjects.
    \item[] Guidelines:
    \begin{itemize}
        \item The answer \answerNA{} means that the paper does not involve crowdsourcing nor research with human subjects.
        \item Including this information in the supplemental material is fine, but if the main contribution of the paper involves human subjects, then as much detail as possible should be included in the main paper. 
        \item According to the NeurIPS Code of Ethics, workers involved in data collection, curation, or other labor should be paid at least the minimum wage in the country of the data collector. 
    \end{itemize}

\item {\bf Institutional review board (IRB) approvals or equivalent for research with human subjects}
    \item[] Question: Does the paper describe potential risks incurred by study participants, whether such risks were disclosed to the subjects, and whether Institutional Review Board (IRB) approvals (or an equivalent approval/review based on the requirements of your country or institution) were obtained?
    \item[] Answer: \answerNA{} % Replace by \answerYes{}, \answerNo{}, or \answerNA{}.
    \item[] Justification: This paper does not need to do IRB approvals or equivalent since it does not involve human subject research.
    \item[] Guidelines:
    \begin{itemize}
        \item The answer \answerNA{} means that the paper does not involve crowdsourcing nor research with human subjects.
        \item Depending on the country in which research is conducted, IRB approval (or equivalent) may be required for any human subjects research. If you obtained IRB approval, you should clearly state this in the paper. 
        \item We recognize that the procedures for this may vary significantly between institutions and locations, and we expect authors to adhere to the NeurIPS Code of Ethics and the guidelines for their institution. 
        \item For initial submissions, do not include any information that would break anonymity (if applicable), such as the institution conducting the review.
    \end{itemize}

\item {\bf Declaration of LLM usage}
    \item[] Question: Does the paper describe the usage of LLMs if it is an important, original, or non-standard component of the core methods in this research? Note that if the LLM is used only for writing, editing, or formatting purposes and does \emph{not} impact the core methodology, scientific rigor, or originality of the research, declaration is not required.
    %this research? 
    \item[] Answer: \answerNA{} % Replace by \answerYes{}, \answerNo{}, or \answerNA{}.
    \item[] Justification: We do not use LLMs for any non-standard uses.
    \item[] Guidelines:
    \begin{itemize}
        \item The answer \answerNA{} means that the core method development in this research does not involve LLMs as any important, original, or non-standard components.
        \item Please refer to our LLM policy in the NeurIPS handbook for what should or should not be described.
    \end{itemize}

\end{enumerate}
\end{document}